\newtheorem{example}{Example}
\newtheorem{theorem}{Theorem}
\newtheorem{remark}{Remark}
\newtheorem{lemma}{Lemma}
\newtheorem{prop}[theorem]{Proposition}
\def\argmax{\mathop{\rm argmax}}
\def\argmin{\mathop{\rm argmin}}
\def\X{\textbf{X}}
\def\y{\textbf{y}}
\def\Z{\textbf{Z}}
\def\z{\textbf{z}}
\DeclareMathOperator*{\sgn}{sgn}
\begin{document}

\twocolumn[

\aistatstitle{On the Generalization Properties of Adversarial Training}
\aistatsauthor{ Yue Xing \And Qifan Song \And  Guang Cheng }
\aistatsaddress{ Purdue University \And Purdue University \And Purdue University }]
\begin{abstract}
		Modern machine learning and deep learning models are shown to be vulnerable when testing data are slightly perturbed. Existing theoretical studies of adversarial training algorithms mostly focus on either adversarial training losses or local convergence properties. In contrast, this paper studies the generalization performance of a generic adversarial training algorithm. Specifically, we consider linear regression models and two-layer neural networks (with lazy training) using squared loss under low-dimensional and high-dimensional regimes. In the former regime, after overcoming the non-smoothness of adversarial training, the adversarial risk of the trained models can converge to the minimal adversarial risk. In the latter regime, we discover that data interpolation prevents the adversarially robust estimator from being consistent. Therefore, inspired by successes of the least absolute shrinkage and selection operator (LASSO), we incorporate the $\mathcal{L}_1$ penalty in the high dimensional adversarial learning and show that it leads to consistent adversarially robust estimation. A series of numerical studies are conducted to demonstrate how the smoothness and $\mathcal{L}_1$ penalization help improve the adversarial robustness of DNN models.
	\end{abstract}
	
	\section{INTRODUCTION}
	Recent advances in deep learning and machine learning have led to breakthrough performance and are widely applied in practice. However, empirical experiments show that deep learning models can be fragile and vulnerable against adversarial input which is intentionally perturbed (\citealp{biggio2013evasion,szegedy2013intriguing}). For instance, in image recognition, a deep neural network will predict a wrong label when the testing image is slightly altered, while the change is not recognizable by the human eye (\citealp{papernot2016limitations}). To ensure the reliability of machine learning and deep learning when facing real-world inputs, the demand for robustness is increasing. 
    The related research efforts in adversarial learning include designing adversarial attacks in various applications (\citealp{papernot2016limitations,papernot2016crafting,moosavi2016deepfool}), detecting attacked samples (\citealp{tao2018attacks,ma2019nic}), and modifications on the training process to obtain adversarially robust models, i.e., adversarial training \citep{shaham2015understanding,madry2017towards,jalal2017robust,Balunovic2020Adversarial}.
    
    To introduce adversarial training, let $l$ denote the loss function and $f_{\theta}(x): \mathbb{R}^d\rightarrow \mathbb{R}$ be the model with parameter $\theta$. The (population) adversarial loss is defined as
    $
    R_f(\theta,\epsilon):=\mathbb{E}\left[l\left( f_{\theta}[x+A_{\epsilon}(f_{\theta},x,y)],y \right)\right],
    $
    where $A_\epsilon$ is an attack of strength $\epsilon>0$ and intends to deteriorate the loss in the following way
    \begin{equation}\label{eqn:attack}
    A_{\epsilon}(f_{\theta},x,y):=\argmax\limits_{z\in \mathcal{R}(0,\epsilon)} \{l(f_{\theta}(x+z),y)\}.
    \end{equation}
    In the above, $z$ is subject to the constraint $\mathcal{R}(0,\epsilon)$, i.e. an $\mathcal{L}_2$  ball centered at $0$ with radius $\epsilon$. 
    
    Given i.i.d. training samples $\{(x_i,y_i)\}_{i=1}^{n}$, the adversarial training aims to minimize an empirical version of $R_f(\theta,\epsilon)$ w.r.t. $\theta$:
    \begin{equation}\label{eqn:emp_adv}
    \widehat{R}_f({\theta},\epsilon)=\frac{1}{n}\sum_{i=1}^nl\left( f_{\theta}[x_i+A_{\epsilon}(f_{\theta},x_i,y_i)],y_i \right),
    \end{equation}
    and
    $\widehat\theta= \argmin_{\theta}\widehat{R}_f({\theta},\epsilon).$
    The minimization in (\ref{eqn:emp_adv}) is often implemented through an iterative two-step (min-max) update. In the $t$-th iteration, we first calculate the adversarial sample $\widetilde{x}_i^{(t)}=x_i+A_{\epsilon}(f_{\theta^{(t)}},x_i,y_i)$ based on the current $\theta^{(t)}$, and then update $\theta^{(t+1)}$ based on the gradient of the adversarial training loss while fixing $\widetilde{x}_i^{(t)}$'s; see Algorithm \ref{alg}. This generic algorithm and its variants have been studied in \citealp{shaham2015understanding,madry2017towards,jalal2017robust,Balunovic2020Adversarial,sinha2018certifying,wang2019convergence} among others. Note that for complex loss function $l$ or model $f_{\theta}$, there may not be an analytic form for $A_{\epsilon}$ (e.g. deep neural networks). In this case, an additional iterative optimization is needed to approximate $A_{\epsilon}$ at each step; see \citealp{wang2019convergence}.
    
    \begin{algorithm}[ht!]
        \caption{A General Form of Adversarial Training}
        \label{alg}
        \begin{algorithmic}
            \STATE {\bfseries Input:} data $(x_1,y_1)$,..., $(x_n,y_n)$, attack strength $\epsilon$, number of steps $T$, initialization $\theta^{(0)}$, step size $\eta$.
            \FOR{$t=1$ {\bfseries to} $T$}
            \FOR{$i=1$ {\bfseries to} $n$}
            \STATE Calculate the attack for the $i$th sample and get $\widetilde{x}_i^{(t-1)}=x_i+A_{\epsilon}(f_{\theta^{(t-1)}},x_i,y_i).$
            \ENDFOR
            \STATE Fixing $\widetilde{x}_i^{(t-1)}$'s, update $\theta^{(t)}$ from $\theta^{(t-1)}$ through 
            \begin{eqnarray}
            &&\theta^{(t)}=\theta^{(t-1)}-\eta \nabla_\theta \widehat{R}_f(\theta^{(t-1)},\epsilon)\\&=&\theta^{(t-1)}-\eta\nabla_\theta\left[\frac{1}{n}\sum_{i=1}^nl\left( f_{\theta^{(t-1)}}(\widetilde{x}_i^{(t-1)}),y_i \right)\right].\nonumber
            \end{eqnarray}
            \ENDFOR
            \STATE{\bfseries Output:} $\theta^{(T)}$.
        \end{algorithmic}
    \end{algorithm}

In the literature, there are three major strands of theoretical studies related to this work. The first strand focuses on the statistical properties or generalization performance of adversarially robust estimators without taking account of the role of optimization algorithms \citep{javanmard2020precise,yin2018rademacher,raghunathan2019adversarial,schmidt2018adversarially,najafi2019robustness,min2020curious,zhai2019adversarially,hendrycks2019using,chen2020more}. For instance, \cite{javanmard2020precise} studied the statistical properties of $R_f({\widehat\theta},\epsilon)$, without specifying how to obtain the exact/approximate global minimizer $f_{\widehat\theta}$. The second strand studies the adversarial training loss, i.e., the limiting behaviors of $\widehat{R}_f(\theta^{(t)},\epsilon)$ as a training algorithm iteration $t$ grows. For instance, \citet{gao2019convergence,zhang2020over} showed that for over-parameterized neural networks, the empirical adversarial loss could be arbitrarily close to the minimum value in a local region near initialization. The third strand studies the (local) convergence of adversarial training under certain convexity assumptions \citep[e.g.,][]{sinha2018certifying,wang2019convergence}. Besides these theoretical results, there are a few empirical works as well (e.g., \citealp{wong2020fast,wang2019improving,rice2020overfitting,lee2020rethinking,wu2020revisiting,xie2020smooth}).

    In this paper, we investigate the global convergence and the generalization ability (i.e., $R_f(\theta^{(T)},\epsilon)$) of the adversarial training, for two models $f_\theta$, linear regression and two-layer neural networks, with the squared loss $l(f_{\theta}(x),y)=(f_{\theta}(x)-y)^2$ under $\mathcal{L}_2$ and $\mathcal{L}_{\infty}$ attacks. 
Our theoretical contributions are summarized as follows.

First, the adversarial loss $R_f({\theta},\epsilon)$ suffers from non-differentiability even if both $l$ and $f_\theta$ are smooth, thus the optimization, i.e., Algorithm~\ref{alg}, works poorly. This motivates us to introduce a surrogate attack to overcome the non-smoothness problem.  Under low dimensional setup and $\mathcal{L}_2$ attack, we show that under proper conditions, the iterative estimate $\theta_{\xi}^{(T)}$ trained from the surrogate adversarial loss asymptotically achieves the minimum adversarial risk (Section \ref{sec:low}).
 
Secondly, we observe the ``data interpolation'' behavior of $\mathcal{L}_2$ adversarial training under high dimensional setup ($d/n\rightarrow\infty$). More specifically, the training loss converges to zero, but the population loss $R_f(\theta_{\xi}^{(T)},\epsilon)$ converges to a large constant which is the adversarial loss of null model. To remedy the poor generalization, we penalize the adversarial training loss using LASSO. The resulting adversarially robust estimator and adversarial risk are both consistent for under some sparsity assumption (Section \ref{sec:high}).

 Thirdly, we examine the differences between the $\mathcal{L}_{\infty}$ and $\mathcal{L}_2$ adversarial training. One similarity with $\mathcal{L}_2$ case is that, when attack strength is small, data interpolation prevents $\mathcal{L}_{\infty}$ adversarial training from achieving a consistent estimator as well, and the issue can be improved by the use of LASSO penalty. In terms of the differences, in general, it is harder to conduct adversarial training under $\mathcal{L}_{\infty}$ attack, in the sense that a lower learning rate and more iterations are required to ensure the convergence of $\theta_{\xi}^{(T)}$  (Section \ref{sec:linf}).

    It is worth mentioning that a recent work by \cite{allen2020feature} also conducted a similar analysis for the generalization ability of adversarial training for a two-layer ReLU network model. However, the focus of their work is to explain the feature purification effect of adversarial training in neural network and overlook the difficulties of adversarial training when $\epsilon$ does not asymptotic goes to 0.
  
    For technical simplicity, throughout the paper, we assume the data are generated from linear regression: 
    \begin{eqnarray}\label{eqn:model}
    y=\theta_0^{\top}x+\varepsilon,
    \end{eqnarray}
    where $x\in\mathbb{R}^d$ is Gaussian vector with mean 0 and variance $\Sigma$ ($\theta_0$ is not perpendicular to $\Sigma$), and $\varepsilon$ is a Gaussian noise (independent of $x$) with variance $\sigma^2<\infty$. As $d$ diverges, we assume both maximum and minimum eigenvalues of $\Sigma$ are finite and bounded away from 0. In addition, $\|\theta_0\|$ and $\sigma^2$ are allowed to increase in $d$, but the signal-to-noise ratio $\|\theta_0\|_{\Sigma}/\sigma$ is large, say bounded away from zero. 
	
	\section{LOW DIMENSIONAL ASYMPTOTICS}\label{sec:low}
	
This section considers linear regression models and two-layer neural network models (training the first layer weights) under the low dimensional scenario. In particular, we examine the landscape of the adversarial loss and further investigate the testing performance of the estimator $\theta_{\xi}^{(T)}$. In what follows, we rewrite $R_{f}$ as $R_L$ for linear models and as $R_N$ for two-layer networks. 
	
\subsection{Linear regression model}
	
Consider the linear regression model: $f_{\theta}(x)=\theta^{\top}x$. By the definition of $A_{\epsilon}(f_{\theta},x,y)$ under the $\mathcal{L}_2$ ball constraint and the fact that $x$ and $\varepsilon$ are both Gaussian, $R_L$ has an analytical form as
	\begin{equation}\label{eqn:population}
	\begin{split}
	R_L(\theta,\epsilon)=&\|\theta-\theta_0\|_{\Sigma}^2+\sigma^2+\epsilon^2\|\theta\|^2\\&+2\epsilon c_0\|\theta\|\sqrt{ \|\theta-\theta_0\|_{\Sigma}^2+\sigma^2},
	\end{split}
	\end{equation}
	where $\|a\|_{\Sigma}^2:=a^{\top}\Sigma a$, $\|a\|:=\|a\|_2$ for any vector $a$, and $c_0:=\sqrt{2/\pi}$. Given any $\epsilon\ge 0$, define 
	\begin{equation*}
	\theta^*(\epsilon):=\argmin_{\theta }R_L(\theta,\epsilon)\;\text{and }\;R^*(\epsilon):=\min\limits_{\theta } R_L(\theta,\epsilon).
	\end{equation*}
	When no confusion arises, we will rewrite $\theta^*(\epsilon)$ and $R^*(\epsilon)$ as $\theta^*$ and $R^*$ for simplicity. 
	
We first analyze $R_L({\theta},\epsilon)$ based on the form (\ref{eqn:population}).
	\begin{prop}\label{prop:geo}
		Assume the attack strength $\epsilon>0$ and the data generation follows (\ref{eqn:model}). The adversarial loss $R_L$ is differentiable w.r.t. $\theta$ if and only if (1) $\sigma^2>0$ and $\theta\neq 0$, or (2) $\sigma=0$, $\theta\neq 0$, and $\theta\neq\theta_0$. The function $R_L$ is convex w.r.t. $\theta$, and there exists some constant $c$ such that when $\epsilon\geq c$, $\theta^*=0$.
	\end{prop}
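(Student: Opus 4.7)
My plan has three components, matching the three claims of the proposition.

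\textbf{Differentiability.} I treat $R_L$ as a sum of smooth terms ($\|\theta-\theta_0\|_\Sigma^2+\sigma^2+\epsilon^2\|\theta\|^2$) plus the cross term $2\epsilon c_0\|\theta\|\,g(\theta)$, where $g(\theta):=\sqrt{\|\theta-\theta_0\|_\Sigma^2+\sigma^2}$. The map $\theta\mapsto\|\theta\|$ is smooth on $\mathbb{R}^d\setminus\{0\}$ and has a well-known conical singularity at $0$; the map $g$ is smooth wherever its radicand is positive, and has a norm-type singularity at the unique point where the radicand vanishes (which requires $\sigma=0$ and $\theta=\theta_0$). I would then argue by the standard product rule that $\|\theta\|\,g(\theta)$ is $C^1$ on the common domain of smoothness, and conversely that it fails to be differentiable at either singularity because the other factor is smooth and strictly positive there, so the singular behavior cannot cancel. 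Verifying the "only if" direction at $\theta=0$ (when $g(0)>0$) and at $\theta=\theta_0$ (when $\sigma=0$) is by computing the one-sided directional derivative and exhibiting a direction $v$ for which the limits from $t\to 0^\pm$ differ; this is the only slightly delicate bookkeeping step.

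\textbf{Convexity.} The key is the algebraic identity
\begin{equation*}
R_L(\theta,\epsilon)=\bigl(g(\theta)+\epsilon c_0\|\theta\|\bigr)^2+\epsilon^2(1-c_0^2)\|\theta\|^2,
\end{equation*}
which one checks by expanding the square. Both $\|\theta\|$ and $g(\theta)=\|(\Sigma^{1/2}(\theta-\theta_0),\sigma)\|_2$ are convex (they are compositions of the Euclidean norm with affine maps) and nonnegative, so their nonnegative linear combination $g(\theta)+\epsilon c_0\|\theta\|$ is convex and nonnegative; squaring a convex nonnegative function preserves convexity. Since $c_0=\sqrt{2/\pi}<1$, the remainder $\epsilon^2(1-c_0^2)\|\theta\|^2$ is convex. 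Convexity of $R_L$ follows. Finding the right rewrite is the main obstacle here, because naively the cross term $\|\theta\|\,g(\theta)$ is a product of two convex functions and need not be convex on its own.

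\textbf{Large-$\epsilon$ threshold.} By convexity, $\theta^*=0$ iff $0$ lies in the subdifferential of $R_L$ at $0$, equivalently iff every one-sided directional derivative at $0$ is nonnegative. Using the expansion of $R_L(tv,\epsilon)$ around $t=0^+$, a direct calculation gives
\begin{equation*}
\left.\tfrac{d}{dt}\right|_{t=0^+}\!R_L(tv,\epsilon)=-2v^\top\Sigma\theta_0+2\epsilon c_0\|v\|\sqrt{\|\theta_0\|_\Sigma^2+\sigma^2}.
\end{equation*}
Requiring this to be $\geq 0$ for all $v$ and taking the supremum over unit $v$ (Cauchy--Schwarz, attained at $v\propto\Sigma\theta_0$) yields the sharp threshold
\begin{equation*}
c=\frac{\|\Sigma\theta_0\|}{c_0\sqrt{\|\theta_0\|_\Sigma^2+\sigma^2}},
\end{equation*}
so $\theta^*=0$ whenever $\epsilon\geq c$. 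The signal-to-noise assumption on $\|\theta_0\|_\Sigma/\sigma$ ensures $c$ is finite, completing the proof.
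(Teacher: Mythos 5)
Your proposal is correct in all three parts. The paper itself gives no proof of Proposition~\ref{prop:geo} (the appendix only covers Lemma~\ref{lem:gradient} onward), so there is nothing to match it against line by line; judged on its own, your argument is sound. The differentiability analysis correctly isolates the two conical singularities (of $\|\theta\|$ at $0$ and of $g$ at $\theta_0$ when $\sigma=0$) and notes that neither can be cancelled by the other smooth, positive factor. For convexity, your completion-of-the-square identity is valid (it expands back to (\ref{eqn:population}) since $c_0^2=2/\pi<1$), though a route more in the spirit of how (\ref{eqn:population}) is derived is to observe $R_L(\theta,\epsilon)=\mathbb{E}\bigl[(|x^{\top}\theta-y|+\epsilon\|\theta\|)^2\bigr]$, which is an expectation of squares of nonnegative convex functions of $\theta$ and hence convex pointwise before taking the expectation; the two arguments are essentially the same calculation in different clothing. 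Your threshold $c=\|\Sigma\theta_0\|/\bigl(c_0\sqrt{\|\theta_0\|_{\Sigma}^2+\sigma^2}\bigr)$, obtained from the subdifferential condition at the origin, is consistent with the constant $\sqrt{\pi}\|\theta_0\|_2/(\sqrt{2}v)$ that appears in Theorem~\ref{thm:lasso} for the case $\Sigma=I$, which corroborates the computation. One trivial nit: finiteness of $c$ needs only $v^2=\|\theta_0\|_{\Sigma}^2+\sigma^2$ bounded away from zero, not the signal-to-noise assumption per se.
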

	
From Proposition \ref{prop:geo}, there always exists some $\theta$ where $R_L(\theta,\epsilon)$ is not differentiable (e.g., $\theta=0$), and it is not avoidable when $\epsilon$ is large. Even if we smooth the standard loss as in \cite{xie2020smooth} to improve the quality of the gradients (or use smooth classifiers as in \citealp{salman2019provably}), the non-differentiable issues remains for the adversarial loss. This makes it difficult to track the trajectory of gradient descent algorithms in the sense that the adversarial training loss will fluctuate during training rather than strictly decreases over iterations.

To solve this problem and improve the gradient quality, for both models under consideration, we introduce $A_{\epsilon,\xi}$ that is a surrogate for $\mathcal{L}_2$ attack $A_{\epsilon}$:
    $$A_{\epsilon,\xi}(f_{\theta},x,y)=\frac{\|\partial l(f_{\theta}(x),y)/\partial x\|}{\sqrt{\|\partial l(f_{\theta}(x),y)/\partial x\|^2+\xi^2}}A_{\epsilon}(f_{\theta},x,y).
    $$
     \cite{lee2020rethinking} showed that the instability of adversarial training when $\partial l(f_{\theta}(x),y)/\partial x$ is closed to zero. Therefore, the design of $A_{\epsilon,\xi}$ aims to impose shrinkage effect on $A_{\epsilon}$ when $\|\partial l(f_{\theta}(x),y)/\partial x\|$ is small, thus the training process will be more stable.

    When $\xi=0$, $A_{\epsilon,\xi}$ is reduced to $A_{\epsilon}$. 
    With the surrogate $A_{\epsilon,\xi}$, we define the empirical and population surrogate loss as:
    \[
    \begin{split}
        &\widehat{R}_{L,\xi}({\theta},\epsilon):=\frac{1}{n}\sum_{i=1}^n l(f_{\theta}(x_i+A_{\epsilon,\xi}(f_{\theta},x_i,y_i)),y_i),\\
        &R_{L,\xi}({\theta},\epsilon):=\mathbb{E}l( f_{\theta}(x_i+A_{\epsilon,\xi}(\theta,x_i,y_i)),y_i ).
    \end{split}
    \]
    One can show, $R_{L,\xi}({\theta},\epsilon)$ is smooth and convex everywhere.
    Accordingly, Algorithm \ref{alg} is modified by replacing $A_{\epsilon}$ with $A_{\epsilon,\xi}$. Note that a smaller value of $\xi$, which leads to a closer approximation to $R_{L,0}$, makes $R_{L,\xi}$ less smooth at the origin and thus requires a lower learning rate $\eta$ and more iteration steps $T$. 
    Therefore, we require a fine-tuning of $\xi$ in the sense that $\xi/\|\theta_0\|$ slowly decreases to 0. The surrogate loss for two-layer neural networks, i.e., $R_{N,\xi}$ and $\widehat{R}_{N,\xi}$, is defined in the same way.

	The use of surrogate attack improve the gradient quality of adversarial training and enforces the smoothness and convexity of surrogate loss $R_{L,\xi}(\theta,\epsilon)$. It facilitates the convergence and generalization analysis of adversarial training. We study the consistency of $\theta_{\xi}^{(T)}$ toward $\theta^*$ and $R_{L,\xi}(\theta_{\xi}^{(T)},\epsilon)$ toward $R^*$,  under suitable choices of $(\xi,\eta,t)$ after accounting for the dimensional effect $v^2:={ \|\theta_0\|_{\Sigma}^2+\sigma^2 }$ (for simplicity assume $v^2$ bounded away from zero); see Theorem~\ref{thm:opt}.
	\begin{theorem}\label{thm:opt}
		Assume the data generation follows (\ref{eqn:model}) and let $\epsilon$ be a fixed constant. If there exists some constant $B_0$ such that $\|\theta_{\xi}^{(0)}\|\leq B_0v$, and the dimension growth rate satisfies $\log n\sqrt{d^2/n}\rightarrow 0$, then with probability tending to 1, the surrogate loss $R_{L,\xi}(\theta_{\xi}^{(t)},\epsilon)$ decreases in each iteration, and 
		$$
		\frac{R_{L,\xi}(\theta_{\xi}^{(T)},\epsilon)-R^*}{  v^2 }\rightarrow 0,\mbox{ and } \frac{\|\theta_{\xi}^{(T)}-\theta^*\|}{  v }\rightarrow 0,
		$$
		given $\eta=\xi/(v^2L)$ for some large constant $L$, $T=(v^2\log\log n)/\xi$ and $\xi=v^2d/\sqrt{n}\log n$.
	\end{theorem}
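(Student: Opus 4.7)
The plan is to decompose the total error into three pieces: (i) the optimization error of gradient descent on the empirical surrogate loss $\widehat R_{L,\xi}$; (ii) the statistical deviation $\widehat R_{L,\xi}-R_{L,\xi}$; and (iii) the surrogate bias $R_{L,\xi}-R_L$ induced by $\xi>0$. The specific rates $(\xi,\eta,T)$ in the statement are designed to balance these three pieces against each other, and both claims of the theorem will follow by combining the resulting bounds with the convexity of $R_L$ (Proposition~\ref{prop:geo}).

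First I would establish the geometry of the surrogate: $R_{L,\xi}(\theta,\epsilon)$ is globally convex and twice differentiable in $\theta$, with gradient-Lipschitz constant of order $v^2/\xi$ (the $1/\xi$ blow-up reflects that the surrogate attack direction varies rapidly as $\|\partial l/\partial x\|\to 0$) and with local strong convexity of order $v^2$ in a $v$-neighborhood of $\theta^*_\xi:=\argmin_\theta R_{L,\xi}(\theta,\epsilon)$. These two facts I would extract by direct differentiation, starting from the closed form (\ref{eqn:population}) and the shrinkage factor defining $A_{\epsilon,\xi}$. Given them, the choice $\eta=\xi/(v^2L)$ with $L$ large makes gradient descent on $\widehat R_{L,\xi}$ a local contraction with rate $1-c/L$, so that $T=(v^2\log\log n)/\xi$ iterations bring the iterate within $o(v)$ of the empirical minimizer. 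Convexity of $R_L$ together with continuity of $\theta^*_\xi$ in $\xi$ then yields $\|\theta^*_\xi-\theta^*\|/v\to 0$ and $(R_{L,\xi}(\theta^*_\xi,\epsilon)-R^*)/v^2\to 0$ as $\xi/v\to 0$.

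Second, I would prove a uniform concentration bound of the form
$$\sup_{\|\theta\|\le Bv}\bigl\|\nabla\widehat R_{L,\xi}(\theta,\epsilon)-\nabla R_{L,\xi}(\theta,\epsilon)\bigr\|\,\lesssim\,v^2(d/\sqrt n)\log n,$$
valid with probability tending to one, obtained from Gaussian tail bounds on the coordinatewise score contributions together with an $\varepsilon$-net argument in $\mathbb{R}^d$; the factor $d$ comes from coordinates and the $\log n$ from the union bound. The hypothesis $\log n\sqrt{d^2/n}\to 0$ then makes this deviation $o(\xi)$ for the chosen $\xi=v^2 d\log n/\sqrt n$, so the empirical and population gradient-descent trajectories differ by at most $O(\eta T\cdot v^2 d\log n/\sqrt n)=o(v)$, and the monotone per-iteration decrease of $R_{L,\xi}(\theta_{\xi}^{(t)})$ is inherited from the corresponding decrease of $\widehat R_{L,\xi}(\theta_{\xi}^{(t)})$ after accounting for this perturbation.

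The hard part will be the three-way balance forced by the choice of $\xi$: it must be small enough that the surrogate bias vanishes, large enough that the smoothness $v^2/\xi$ does not force an impossibly small learning rate, and large enough to dominate the statistical noise $v^2 d\log n/\sqrt n$. Verifying the \emph{monotone} (per-iteration) decrease of $R_{L,\xi}(\theta_{\xi}^{(t)})$, rather than merely asymptotic convergence, also requires care: one must rule out the possibility that the empirical gradient noise pushes the iterates into regions where the population surrogate has increased, which I would handle by exploiting local strong convexity around $\theta^*_\xi$ paired with the compatibility of $\eta$ with the smoothness constant $v^2/\xi$.
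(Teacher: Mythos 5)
Your three-way decomposition (optimization error, uniform gradient concentration over a ball via an $\varepsilon$-net, surrogate bias vanishing as $\xi/v\to 0$) matches the architecture of the paper's proof, and your concentration step is essentially the paper's Lemma~\ref{lem:gradient}. But there is one genuine gap at the center of your optimization argument: you treat the iteration as gradient descent on $\widehat R_{L,\xi}$, and it is not. Algorithm~\ref{alg} fixes the attacked points $\widetilde x_i$ and then differentiates only through $f_\theta$; for the exact attack ($\xi=0$) this coincides with the full gradient by Danskin's theorem, but the surrogate attack $A_{\epsilon,\xi}$ is \emph{not} the inner maximizer, so the update direction is the partial gradient with the attack frozen, which differs from $\nabla_\theta\widehat R_{L,\xi}$ and in fact need not be the gradient field of any function. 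Consequently you cannot invoke standard descent lemmas, contraction estimates, or convexity inequalities for $\widehat R_{L,\xi}$ or $R_{L,\xi}$ along the trajectory: the quantities you want to show decrease are not the potentials of the dynamics. The paper's proof is organized precisely around this obstruction: it introduces an auxiliary function $\widetilde R_{L,\xi}(\theta,\epsilon)=\mathbb{E}\bigl[(x^\top\theta-y)^2+2\epsilon\sqrt{\|\theta\|^2+\xi^2/|x^\top\theta-y|^2}\,|x^\top\theta-y|+\epsilon^2\|\theta\|^2\bigr]$, verifies that its true gradient is within a relative $O(\xi)$ of the frozen-attack update direction, and then runs the descent and convexity telescoping on $\widetilde R_{L,\xi}$, transferring back to $R_{L,\xi}$ at the end since $\widetilde R_{L,\xi}-R_{L,\xi}\to 0$ as $\xi\to 0$. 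Your proposal needs this device (or an equivalent one) before any of the per-iteration decrease or convergence claims go through.

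Two smaller points. First, where you invoke local strong convexity to get a linear contraction, the paper instead uses only convexity of $\widetilde R_{L,\xi}$ plus the standard telescoping bound $\sum_{t\le T}\bigl(\widetilde R_{L,\xi}(\theta^{(t)})-\widetilde R_{L,\xi}(\theta^*)\bigr)\lesssim \|\theta^{(0)}-\theta^*\|^2/(\eta)+T\xi+T\delta$; strong convexity of the population loss does hold (via $\lambda_{\min}(\Sigma)$ bounded below), but its modulus is of constant order, not order $v^2$ as you claim, and it is not needed. Second, your claim that the gradient deviation $v^2(d/\sqrt n)\log n$ is $o(\xi)$ for $\xi=v^2 d\log n/\sqrt n$ is self-contradictory (the two are of the same order); what the argument actually requires is that the deviation $\delta$ satisfy $\delta v/\xi\to 0$, which holds because the correct concentration rate is $\delta\asymp v\sqrt{d^2\log n/n}=\xi/(v\sqrt{\log n})$. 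You also do not address how to pass from bounded $v$ to growing $v$; the paper does this by a rescaling of $(y,\theta^{(0)},\xi)$ that commutes with the iteration.
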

	The proof of Theorem \ref{thm:opt} is postponed to Appendix C. The results of Theorem \ref{thm:opt} actually hold for a wide range of $(\xi,\eta,T)$ (as elaborated in the Appendix C). In general, the choice of $(\eta,T)$ can be invariant to $v$, while a larger $v$ implies a wider range of possible $\xi$. In terms of tuning $\theta_{\xi}^{(0)}$ and $\xi$, one can estimate $v^2$ via the sample variance of $y_1,...,y_n$. Note that the above discussions apply to Theorems \ref{thm:low_nonlinear},  \ref{thm:low_relu}, \ref{thm:high_linear}, \ref{thm:high_network}, and \ref{thm:high_relu} as well.
	
	Theorem~\ref{thm:opt} also confirms the phenomenon that ``adversarial training hurts standard estimation'' \citep{raghunathan2019adversarial}. Denote $\widehat{\theta}_{\rm OLS}$ as the common least square estimator for un-corrupted data (i.e. without attack) and trivially $\widehat{\theta}_{\rm OLS}\rightarrow\theta_0$. By Theorem \ref{thm:opt},  $\theta_{\xi}^{(T)}\rightarrow\theta^*\neq \theta_0$, thus

	\[
	\begin{split}
	    \frac{R_{L}(\theta_{\xi}^{(T)},0)-R_{L}(\widehat{\theta}_{\rm OLS},0)}{v^2}&\rightarrow \frac{R_{L}(\theta^*,0)-R_{L}({\theta}_0,0)}{v^2}\\&= c(\epsilon)>0,
	\end{split}\] where the function $c(\epsilon)$ increases in $\epsilon$, and converges to $\|\theta_0\|_{\Sigma}^2/v^2$ as $\epsilon$ diverges. 
	
	\begin{remark}
	The Gaussian assumption of $x$ is used in Theorem \ref{thm:opt} for the convenience of showing the analytical form of $R_L$. All the results in this paper, except for Section \ref{sec:sparse}, are still valid as long as the density of $(x,y)$ is finite and $\exp(-t\|x\|^2)<\infty$ for some $t>0$.
	\end{remark}
	
	\textbf{Numerical experiments}\\
	To demonstrate the necessity of smoothing attacks under large $\epsilon$, a simple simulation is conducted. Let $d=10$, $n=1000$, $x\in\mathcal{R}^d$ and $\Sigma=I_d$. The response $y=\beta_0^{\top}x+\varepsilon$ with $\beta_0=(1,...,1)^{\top}$ and $\sigma^2=1$. The level of attack is taken as $\epsilon=3$. We use the same training data set in the adversarial training for $\xi=0$ and $\xi=0.05$. Figure \ref{fig:low_xi} displays the effectiveness of $\xi$: the loss with $\xi=0$ always fluctuates, and after we smooth the training process through taking $\xi=0.05$, (surrogate) training and (non-surrogate) testing loss smoothly decrease in each iteration. 
	\begin{figure}
	    \centering
	    \vspace{-0.1in}
	    \includegraphics[scale=0.5]{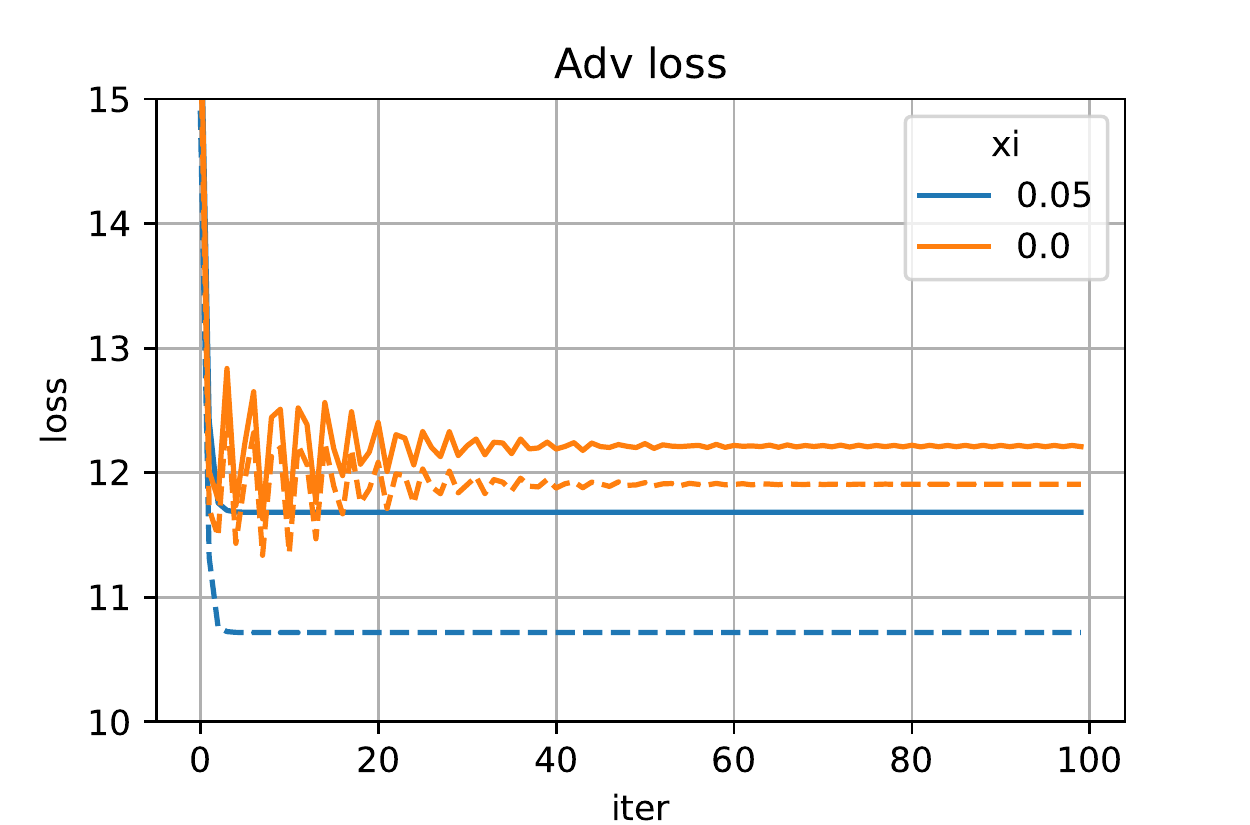}
	    \vspace{-0.12in}
	    \caption{ Adversarial training and testing loss in linear regression. Solid line: adversarial testing loss. Dashed line: adversarial training loss. The training process with $\xi=0$ is more fluctuate. }
	    \label{fig:low_xi}
	\end{figure}
	
	\subsection{Two-layer neural networks}
	We consider the two-layer neural network with an activation function $\phi$, say
	\begin{equation}\label{eqn:nonlinear}
	f_{\theta}(x)=\frac{1}{\sqrt{h}}\sum_{j=1}^h \phi(x^{\top}\theta_j)a_j,
	\end{equation}
	where $a_j$'s are known values and $\theta=(\theta_1,...,\theta_h)\in\mathbb{R}^{d\times h}$ is the parameter to be trained. This ``lazy training'' setup has been widely used in the literature (e.g. \citealp{du2018gradient2,du2018gradient1,arora2019fine,ba2020generalization,allen2020feature}), it eases the theoretical analysis. Additional, we adopt a vanishing initialization scheme \citep{ba2020generalization}: $\theta_{\xi,j}^{(0)}\sim N(0,I_d/dh^{1+\delta})\;\;\mbox{for some $\delta>0$}.$
	
	Theorem \ref{thm:low_nonlinear} shows that $R_{N,\xi}(\theta_{\xi}^{(T)},\epsilon)$ converges to the same minimal loss $R^*$ as in linear regression under proper choice of $T$ and $\xi$, as $n$ and $h$ diverge. 
	
	\begin{theorem}\label{thm:low_nonlinear}
		Under the generative model (\ref{eqn:model}), assume the activation function $\phi$ in model (\ref{eqn:nonlinear}) is twice continuously differentiable, $\phi'(0)\neq 0$, and $\phi(0)=0$. If $a$ and $h$ satisfy {$\|a\|_{\infty}=O(1)$, $\max |a_j|/(\min |a_j|)=\Theta(1)$, $(d\log n)\|a\|_{\infty}v/\sqrt{h}\rightarrow 0$, and $ (d\log n)\|a\|_{\infty}v\sqrt{h}/\|a\|^2\rightarrow 0 $}. When $\xi/v^2\rightarrow 0$ where $\xi/v^2=-\log\log n/\log(\sqrt{d^2\log n/n}\vee (d\log n)\|a\|_{\infty}/\sqrt{h})$, and $\eta=\xi h/(v^2L\|a\|^2)$ for some large constant $L$, with probability tending to 1, for $T=(v^2\log\log n)/\xi$, if $\sqrt{d\log n}(1+v^2\eta\|a\|^3/(h^{3/2}\xi)+L\eta v)^T/h^{\delta/2}\rightarrow 0$, then
		\begin{equation}\label{eqn:low_nonlinear}
		 	\frac{R_{N,\xi}(\theta_{\xi}^{(T)},\epsilon)-R^*}{v^2}\rightarrow 0,
		\end{equation}
		where $R^*$ is the exactly the same as Theorem \ref{thm:opt}.
	\end{theorem}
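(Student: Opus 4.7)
The plan is to reduce the two-layer network dynamics to the linear regression dynamics of Theorem~\ref{thm:opt} by exploiting the vanishing initialization to linearize $\phi$ around zero. Because $\theta_{\xi,j}^{(0)}\sim N(0, I_d/(dh^{1+\delta}))$, a union bound gives $\max_j|x_i^\top \theta_{\xi,j}^{(0)}|=O_p(\sqrt{\log(nh)}\,h^{-(1+\delta)/2})$ uniformly over the sample, and since $\phi(0)=0$ and $\phi\in C^2$ with $\phi'(0)\neq 0$, one may write
\[
 f_{\theta^{(t)}}(x) \;=\; \phi'(0)\,x^\top \tilde\theta^{(t)} \;+\; r^{(t)}(x), \qquad \tilde\theta^{(t)} := \frac{1}{\sqrt{h}}\sum_{j=1}^h a_j\, \theta_{\xi,j}^{(t)},
\]
where $r^{(t)}(x)$ is a quadratic-in-$\theta_{\xi,j}$ remainder of order $\max_j (x^\top \theta_{\xi,j}^{(t)})^2\cdot\|a\|/\sqrt{h}$. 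I would run an induction on $t$ showing that throughout training $\max_j \|\theta_{\xi,j}^{(t)}-\theta_{\xi,j}^{(0)}\|$ stays small enough to justify this linearization.

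Next, I would couple the network gradient to the linear-model gradient. By the chain rule,
\[
 \nabla_{\theta_j}\widehat R_{N,\xi}(\theta^{(t)},\epsilon) \;\approx\; \frac{\phi'(0)\,a_j}{\sqrt{h}}\, g_L^{(t)}, \qquad g_L^{(t)}:=\nabla_\theta \widehat R_{L,\xi}\bigl(\phi'(0)\tilde\theta^{(t)},\epsilon\bigr),
\]
so that aggregating over $j$ weighted by $a_j$ yields
\[
 \phi'(0)\,\tilde\theta^{(t+1)} \;\approx\; \phi'(0)\,\tilde\theta^{(t)} \;-\; \eta\,\frac{(\phi'(0))^2\,\|a\|^2}{h}\, g_L^{(t)}.
\]
The prescribed choice $\eta=\xi h/(v^2 L\|a\|^2)$ turns the effective step size into $\xi(\phi'(0))^2/(v^2 L)$, which is exactly the regime covered by Theorem~\ref{thm:opt} (with $L$ enlarged to absorb $(\phi'(0))^2$). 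Invoking that theorem for the trajectory $\phi'(0)\tilde\theta^{(t)}$ then delivers $\|\phi'(0)\tilde\theta^{(T)}-\theta^*\|=o(v)$ and $R_{L,\xi}(\phi'(0)\tilde\theta^{(T)},\epsilon)-R^*=o(v^2)$, provided the induction hypothesis can be sustained. Translating this back to the network via the linearization and the smoothness of $R_{N,\xi}$ in its inputs gives the target bound~(\ref{eqn:low_nonlinear}).

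Closing the induction is the crux. Each step contributes a multiplicative error factor of approximately $1+v^2\eta\|a\|^3/(h^{3/2}\xi)+L\eta v$: the first term comes from the quadratic Taylor remainder feeding back into the gradient, while the second is the standard contraction/expansion factor for a strongly convex surrogate update. After $T$ steps the compound linearization error scales like $\sqrt{d\log n}\,(1+v^2\eta\|a\|^3/(h^{3/2}\xi)+L\eta v)^T/h^{\delta/2}$, with the $h^{-\delta/2}$ arising from the small initialization and $\sqrt{d\log n}$ from the uniform control of $|x_i^\top\theta_{\xi,j}^{(t)}|$. The final hypothesis in the theorem is precisely what is needed for this to vanish; combined with the scaling conditions $(d\log n)\|a\|_\infty v/\sqrt{h}\to 0$ and $(d\log n)\|a\|_\infty v\sqrt{h}/\|a\|^2\to 0$, this closes the induction.

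The hard part will be making the network-to-linear coupling quantitatively tight in the presence of the surrogate smoothing. The gradient $\nabla_{\theta_j}\widehat R_{N,\xi}$ mixes the adversarial perturbation (which is itself a function of the current $\theta^{(t)}$ through $A_{\epsilon,\xi}$) with all $h$ neurons, and one must uniformly bound the deviations $|\phi(x^\top\theta_{\xi,j}^{(t)})-\phi'(0)x^\top\theta_{\xi,j}^{(t)}|$ and $|\phi'(x^\top\theta_{\xi,j}^{(t)})-\phi'(0)|$ simultaneously over $j\le h$, $i\le n$, and $t\le T$. A subsidiary issue is verifying the initial-condition assumption $\|\phi'(0)\tilde\theta^{(0)}\|\le B_0 v$ from Theorem~\ref{thm:opt}; this follows from Gaussian concentration of $h^{-1/2}\sum_j a_j\theta_{\xi,j}^{(0)}$ together with $\|a\|_\infty=O(1)$ and $\|a\|=\Theta(\sqrt{h})$ implied by the hypothesis $\max_j|a_j|/\min_j|a_j|=\Theta(1)$.
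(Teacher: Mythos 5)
Your overall strategy is the one the paper uses: linearize the network around the (vanishing) initialization, couple the aggregated trajectory $\tilde\theta^{(t)}=h^{-1/2}\sum_j a_j\theta_{\xi,j}^{(t)}$ to the linear-model dynamics of Theorem~\ref{thm:opt} with effective step size $\eta\|a\|^2/h$, and close an induction whose per-step multiplicative error is exactly the factor $1+v^2\eta\|a\|^3/(h^{3/2}\xi)+L\eta v$ appearing in the hypothesis. The paper organizes this as a three-way comparison (linear network with zero initialization, nonlinear network with zero initialization, nonlinear network with vanishing initialization), isolating the $h^{-\delta/2}(1+\cdots)^T$ term as the cost of the nonzero initialization, whereas you do the coupling in one shot; that is a bookkeeping difference, not a different route. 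You also correctly identify where each scaling condition on $(a,h,d,n)$ enters and how to verify the initial condition $\|\phi'(0)\tilde\theta^{(0)}\|\le B_0 v$.

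There is one concrete gap in the plan as written. You treat the network-to-linear gradient coupling as a Taylor-remainder estimate, uniformly bounding $|\phi(x^\top\theta_j)-\phi'(0)x^\top\theta_j|$ and $|\phi'(x^\top\theta_j)-\phi'(0)|$. But the adversarial gradient also contains the attack $A_{\epsilon,\xi}(f_\theta,x,y)$, whose direction carries the factor $\sgn(f_\theta(x)-y)$. The difference between $\sgn(f_{\theta^{(t)}}(x_i)-y_i)$ and $\sgn(\phi'(0)x_i^\top\tilde\theta^{(t)}-y_i)$ is not small when the two residuals straddle zero: for such samples the two attacks differ by $O(\epsilon)$ in norm, not by a Taylor remainder, so the per-sample gradient discrepancy is $O(\epsilon v)$ rather than $o(1)$. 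The paper handles this by splitting the sample according to whether the signs agree, showing the residual gap is $O\bigl(d\log n\,\sqrt{h}\,\|\theta_0\|^2\|a\|_{\infty}/\|a\|^2\bigr)$ and hence (using that the residual has a density) that only an $O\bigl(d\log n\,\sqrt{h}\,\|\theta_0\|^2\|a\|_{\infty}/(\|a\|^2 v)\bigr)$ fraction of samples can flip sign, which after averaging still yields the $O\bigl((d\log n)\|a\|_{\infty}\|\theta_0\|^2/\sqrt{h}\bigr)$ bound on the gradient gap. Without some version of this argument your claimed coupling $\nabla_{\theta_j}\widehat R_{N,\xi}\approx (\phi'(0)a_j/\sqrt{h})\,g_L^{(t)}$ does not follow. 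A second, smaller omission: the paper first carries out the whole argument for the fast-gradient (linearized) attack and only at the end argues that the exact maximizing attack coincides with it to the relevant order; if you intend $A_{\epsilon,\xi}$ to be the exact attack throughout, you need that extra step as well.
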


	The detailed proof is postponed to Appendix D. The choice of $\xi$ here depends on the weights $a$ together with the number of hidden nodes $h$.
	Note that although both Theorems \ref{thm:opt} and \ref{thm:low_nonlinear} establish the convergence of $R_{N,\xi}(\theta,\epsilon)$, Theorems \ref{thm:low_nonlinear} requires that $\xi/v$ converges to zero in a slower speed, leading to a slower convergence rate for $R_{N,\xi}(\theta,\epsilon)$. 
	
	\begin{remark}
The proof of Theorem~\ref{thm:low_nonlinear} is similar to \cite{ba2020generalization}: as the number of hidden nodes $h$ grows, the trajectories of optimization using linear network (with zero initialization) and nonlinear network (with vanishing initialization) are slightly different, while the convergence result of the former one can be simply extended from linear models. Different from \cite{ba2020generalization}, we specify the learning rate as well as the number of iterations as functions of $(d,a,n,h)$, while \cite{ba2020generalization} utilized gradient flow, which is not applicable in our setup. In addition, compared with \cite{ba2020generalization}, the relationship of $(\xi,a,\eta)$ is revealed in our result when $\|\theta_0\|\rightarrow\infty$.    
\end{remark}

        Theorem \ref{thm:low_nonlinear} requires a continuous differentiable $\phi$, and similar results can be established forReLU activation function as well:
		\begin{theorem}\label{thm:low_relu}
		Under the generative model (\ref{eqn:model}), assume the activation function $\phi$ is ReLU function with zero initialization and no bias. Take $\xi/v^2=-\log\log n/\log(\sqrt{(d^2\log n)/n })$. Set $\eta=\xi h/(v^2L\|a\|^2)$ for some constant $L$. Denote $a^+$ as a vector such that $a^+_j=a_j1\{a_j>0 \}$, and similarly define $a^-$. If $a$ satisfy $ \|a^+\|/\|a^-\|=1$ and $\|a\|_{\infty}=O(1)$, with probability tending to 1, for $T=(v^2\log\log n)/\xi$, (\ref{eqn:low_nonlinear}) holds.
	\end{theorem}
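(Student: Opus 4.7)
The plan is to reduce the two-layer ReLU network, under zero initialization and the symmetry $\|a^+\|=\|a^-\|$, to the linear-regression setup of Theorem~\ref{thm:opt}, exploiting the positive homogeneity of ReLU. The key algebraic observation is that, with $\theta_j^{(0)}=0$, the gradient of $\widehat{R}_{N,\xi}$ with respect to $\theta_j$ carries a factor $a_j\phi'(a_j\tilde x^\top u)=a_j\mathbf{1}\{\mathrm{sgn}(a_j)\,\tilde x^\top u>0\}$, so an induction on $t$ shows that throughout training one may write $\theta_j^{(t)}=a_j v^{(t)}$ when $a_j>0$ and $\theta_j^{(t)}=a_j w^{(t)}$ when $a_j<0$, for common vectors $v^{(t)},w^{(t)}\in\mathbb{R}^d$. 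By positive homogeneity of $\phi$, the network output then collapses to
$$f_{\theta^{(t)}}(x)=\frac{1}{\sqrt{h}}\bigl[\|a^+\|^2\phi(x^\top v^{(t)})-\|a^-\|^2\phi(-x^\top w^{(t)})\bigr].$$

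Combining $\|a^+\|=\|a^-\|$ with the ReLU identity $\phi(z)-\phi(-z)=z$ linearizes this expression whenever $v^{(t)}=w^{(t)}$: $f_{\theta^{(t)}}(x)=\bar\theta^{(t)\top}x$ with $\bar\theta^{(t)}:=(\|a\|^2/(2\sqrt{h}))v^{(t)}$. At initialization $v^{(0)}=w^{(0)}=0$, and by the symmetry of the gradient computed at $f\equiv 0$ one also has $v^{(1)}=w^{(1)}$. I will then argue that the average $(v^{(t)}+w^{(t)})/2$ continues to evolve, up to lower-order terms, exactly like the gradient-descent iterate of $\bar\theta^{(t)}$ under the linear-model surrogate adversarial loss $R_{L,\xi}$, while the difference $v^{(t)}-w^{(t)}$ stays of strictly smaller order throughout all iterations. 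This yields an approximate linearization $f_{\theta^{(t)}}(x)\approx\bar\theta^{(t)\top}x$. Once the approximation is in place, the dynamics of $\bar\theta^{(t)}$ amount to a rescaled version of the linear-model gradient descent analyzed in Theorem~\ref{thm:opt}, the rescaling factor $\|a\|^2/h$ explaining the prescribed step size $\eta=\xi h/(v^2 L\|a\|^2)$. Applying the argument of Theorem~\ref{thm:opt} to $\bar\theta^{(t)}$ with the prescribed $\xi$ and $T=(v^2\log\log n)/\xi$ then yields $\bar\theta^{(T)}\to\theta^*$, hence $R_{N,\xi}(\theta_\xi^{(T)},\epsilon)\to R^*$.

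The hardest step is bounding the drift $\|v^{(t)}-w^{(t)}\|$. Because $\phi'(a_j\tilde x^\top\theta_j)$ equals $\mathbf{1}\{\tilde x^\top v^{(t)}>0\}$ for positive-weight units but $\mathbf{1}\{\tilde x^\top w^{(t)}<0\}$ for negative-weight units, the updates of $v^{(t)}$ and $w^{(t)}$ draw on asymmetric subsets of the data whenever $v^{(t)}\neq w^{(t)}$ and could in principle diverge. Controlling this requires a concentration argument exploiting the Gaussianity of $x$ that balances the residuals $f_{\theta^{(t)}}(\tilde x_i)-y_i$ across the two half-spaces $\{\tilde x^\top v^{(t)}>0\}$ and $\{\tilde x^\top v^{(t)}<0\}$, so that $v^{(t)}-w^{(t)}$ accumulates only noise-level deviations over the $T$ iterations. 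This is precisely where the symmetry $\|a^+\|=\|a^-\|$ is indispensable: without it, the residual balance would fail to cancel the ReLU nonlinearity and the network could not be reduced to a linear model, breaking the reduction to Theorem~\ref{thm:opt}.
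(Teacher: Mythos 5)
Your structural reduction is essentially the one the paper uses: with zero initialization and the convention $\phi'(0)=1$, every positive-weight unit evolves as $\theta_j^{(t)}=a_jv^{(t)}$ and every negative-weight unit as $\theta_j^{(t)}=a_jw^{(t)}$; positive homogeneity gives $f_{\theta^{(t)}}(x)=\tfrac{1}{\sqrt h}\bigl[\|a^+\|^2\phi(x^\top v^{(t)})-\|a^-\|^2\phi(-x^\top w^{(t)})\bigr]$; and $\|a^+\|=\|a^-\|$ linearizes this to $\tfrac{\|a\|^2}{2\sqrt h}x^\top v^{(t)}$ \emph{provided} $v^{(t)}=w^{(t)}$, after which the linear-network machinery (the paper phrases this as $\Delta_1=\Delta_2=\Delta_3=0$ in the proof of Theorem~\ref{thm:low_nonlinear}, with a doubled effective learning rate) reduces everything to Theorem~\ref{thm:opt}. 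You also correctly diagnose the crux: writing $r_i=f_{\theta^{(t)}}(\tilde x_i)-y_i$, the update of $v^{(t)}$ aggregates $1\{\tilde x_i^\top v^{(t)}>0\}\tilde x_i r_i$ while that of $w^{(t)}$ aggregates $1\{\tilde x_i^\top w^{(t)}<0\}\tilde x_i r_i$, so even starting from $v^{(t)}=w^{(t)}=u$ the two blocks receive increments differing by $\tfrac{2\eta}{n\sqrt h}\sum_i\sgn(\tilde x_i^\top u)\,\tilde x_ir_i$, and synchronization does not propagate automatically.

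The gap is that you leave precisely this step --- bounding $\|v^{(t)}-w^{(t)}\|$ --- as an unexecuted ``concentration argument.'' The population version of the offending term does vanish by the sign-symmetry of the Gaussian design, but the empirical version contributes a fluctuation of order $\eta\sqrt{d/n}$ times the gradient scale at every step; one must show that these fluctuations, fed back through an $f$ that is only piecewise linear (and no longer odd) once $v\neq w$, and through the attack $A_{\epsilon,\xi}$, do not accumulate over $T=(v^2\log\log n)/\xi\to\infty$ iterations to something comparable with $\|v^{(T)}\|$. Nothing in your sketch controls this accumulation, and it is exactly the point on which the theorem rests. The paper sidesteps the issue by assuming an antisymmetrized design $\X=[\X_0,-\X_0]^{\top}$ at the start of its proof, under which the pairing $(x_i,y_i)\leftrightarrow(-x_i,-y_i)$ makes $\sum_i1\{\tilde x_i^\top u>0\}\tilde x_ir_i=\sum_i1\{\tilde x_i^\top u<0\}\tilde x_ir_i$ hold exactly whenever $f$ is odd, so $v^{(t)}\equiv w^{(t)}$ follows by induction and the reduction is deterministic rather than probabilistic. (That design assumption does not appear in the theorem statement, so your instinct that some probabilistic balancing is needed for genuinely i.i.d.\ data is legitimate --- but then the drift bound must actually be supplied, and that is the hard part your proposal omits.)
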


	\textbf{Numerical experiments}\\
	A series of simulation studies of low dimensional linear regression and two-layer neural network model with lazy training are conducted. Due to page limit, these results are detailed in Appendix A and successfully validate our Theorems \ref{thm:opt}-\ref{thm:low_relu} on the convergence of adversarial training with surrogate loss.
	
	Here we present another experiment that shows the improvement of predicting the performance of adversarial trained estimator via surrogate loss for complicated models beyond our theorems.
	We fit a ResNet-34 (WideResNet34-1) model for  CIFAR-10 dataset. Since the ReLU function is not smooth, we utilize the training technique introduced by \cite{xie2020smooth}: we use ReLU in the forward path and use Softplus($\beta=10$) in the backward path to improve the gradient quality. The number of epochs is taken as 100. The initial learning rate is 0.1, and at the 75th and 90th epoch, it is multiplied by 0.1. The value of $\xi$ is taken as 0.001 at the initial stage and multiplies 0.1 whenever the learning rate is changed. We repeat this experiment 10 times to obtain the mean and variance and conduct this experiment under various levels of attacks. We use  $\mathcal{L}_2$ attack with strength $\epsilon=3.0$ in this experiment. The results are summarized in Figure \ref{fig:cifar10}. It shows that using surrogate loss leads to slightly higher adversarial testing accuracy and much higher standard testing accuracy than the one with $\xi=0$. 
	
	In addition, we conduct two experiments (in case that the adversarial training with $\xi=0$ does not converges algorithmically in the above experiment): (1) with 200 epochs and (2) with initialization that is obtained by standard training as in \cite{allen2020feature}. The results are postponed to the appendix. In short, a better performance is obtained under surrogate loss, and the initialization from standard training does not improve the performance.

Although our theory only reveals a single non-differentiable point, it is still important to handle this carefully in neural networks.  The non-differentiable problem is partially due to that the adversary has no preference in the direction of attack.
If we estimate the attack twice (from different initializations) and the difference between the two estimates will be large when the non-differentiable problem is severe. We conduct a small experiment to investigate the attack difference. The details are postponed to the appendix. The results justify the importance of accommodating this non-differentiability issue, especially for large $\epsilon$.

	 		\begin{figure}
	    \centering
	    \vspace{-0.1in}
	    \includegraphics[scale=0.5]{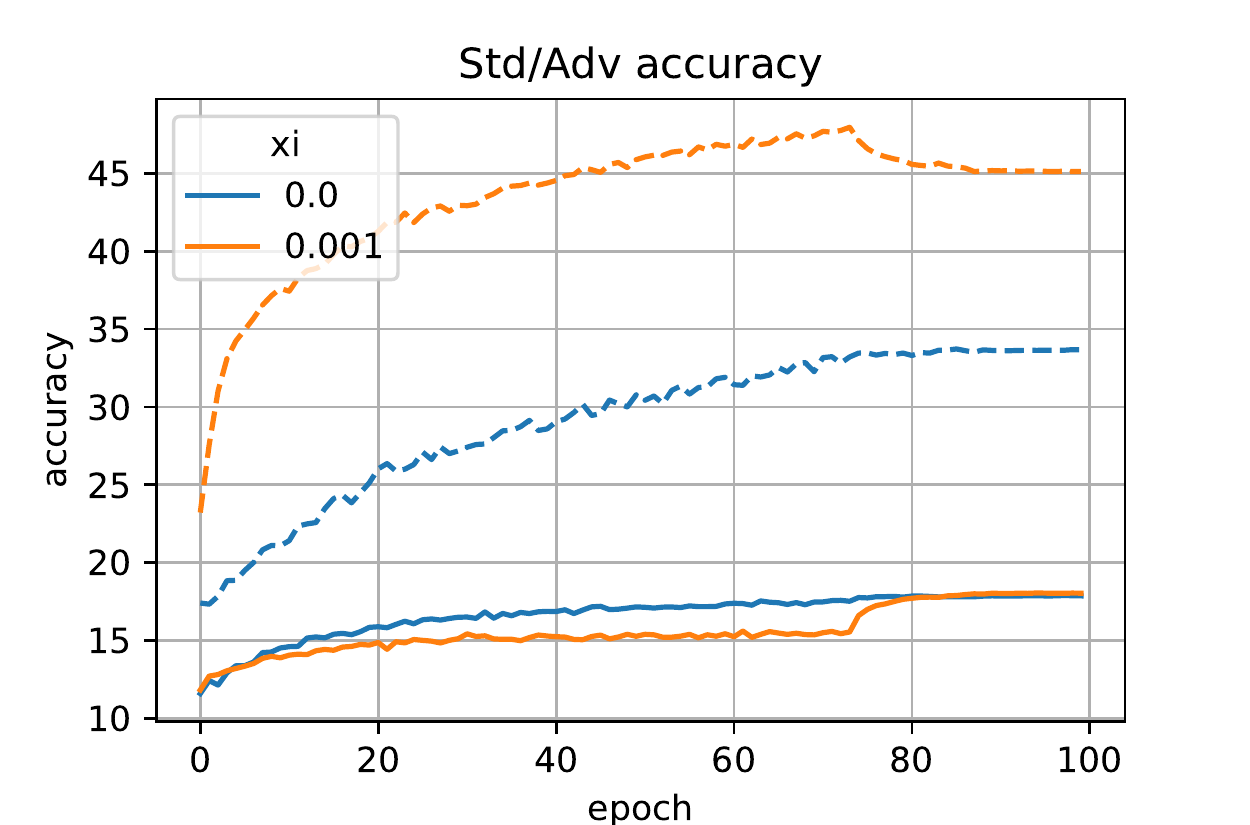}
	    \vspace{-0.15in}
	    \caption{ Standard and adversarial testing accuracies for CIFAR-10 with ResNet34. Solid line: adversarial testing accuracy. Dashed line: standard testing accuracy. With $\xi=0.001$, both adversarial testing accuracy and standard testing accuracy (with standard deviation as 0.458, 0.321) are higher than that for $\xi=0$ (0.743, 0.219).}
	    \label{fig:cifar10}
	\end{figure}

	\section{HIGH DIMENSIONAL ASYMPTOTICS}\label{sec:high}
	In this section, we focus on the high dimensional regime where $d/n\to\infty$. It is first revealed that the adversarial training also suffers from the classical interpolation effect, i.e., near-zero (surrogate) adversarial training loss but high generalization error. As a potential remedy, we penalize the adversarial training loss using LASSO and show that the estimate is consistent when $\theta^*$ is sparse.

\subsection{Effect of interpolation}
It is well known that interpolation may occur under high dimensionality. For instance of linear regression, if a gradient descent with zero initialization is applied to minimize the squared loss when $d/n\rightarrow \infty$, then the solution converges to 
	$$
	\theta(\y):=\X^{\top}(\X\X^{\top})^{-1}\y,
	$$
	where $\X=(x_1,x_2,...,x_n)^{\top}$ and $\y=(y_1,y_2,...,y_n)^{\top}$, given a sufficiently small learning rate. This perfectly interpolated estimator $\theta(\y)$ is proven to be inconsistent to $\theta_0$ and lead to a large generalization error \citep[e.g.,][]{hastie2019surprises,belkin2019two}. Note that this over-fitting scenario is different from the one in \cite{rice2020overfitting}, which is caused by over-parameterization in deep neural networks rather than high dimensionality of the input.
	
	Our first result shows that $\theta(\y)$ also induces the same effect of interpolation in adversarial learning for linear models. 
	\begin{lemma}\label{lem:ridge}
		Assume data generation follows (\ref{eqn:model}). When $\theta(\y)\neq0$ and $d/n\rightarrow\infty$, we have $\|\theta(\y)\|^2/v^2=O_p(n/d),$ and with probability tending to 1, for any $\xi\geq0$, it holds that
		$$\frac{\widehat{R}_{L,\xi}(\theta(\y),\epsilon)}{v^2}\rightarrow 0\;\text{and }\;\frac{{R}_{L,\xi}(\theta(\y),\epsilon)}{v^2}\rightarrow 1.
		$$
	\end{lemma}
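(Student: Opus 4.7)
The plan is to separate the three assertions and tackle them in sequence, treating the norm bound as the main technical input. First I would write $\|\theta(\y)\|^2 = \y^{\top}(\X\X^{\top})^{-1}\y$, which reduces the first claim to controlling the spectrum of $\X\X^{\top}$ and the magnitude of $\|\y\|^2$. Because the rows of $\X$ are i.i.d.\ Gaussian with covariance $\Sigma$ and $d/n\to\infty$, concentration (Hanson--Wright or a standard sample-covariance bound applied to $\X\X^{\top}/d$) yields diagonal entries $x_i^{\top}x_i$ close to $\mathrm{tr}(\Sigma)$ and off-diagonal entries of order $\sqrt{d}$, so $\X\X^{\top}=\mathrm{tr}(\Sigma)\,I_n(1+o_p(1))$ in operator norm. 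Since $\mathrm{tr}(\Sigma)=\Theta(d)$ and $\mathbb{E}\|\y\|^2=nv^2$, this gives $\|\theta(\y)\|^2 \leq (1+o_p(1))\|\y\|^2/\mathrm{tr}(\Sigma)=O_p(nv^2/d)$, hence $\|\theta(\y)\|^2/v^2=O_p(n/d)$.

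For the empirical (surrogate) adversarial loss, I would exploit interpolation: by construction $\X\theta(\y)=\y$, so every training residual $\theta(\y)^{\top}x_i-y_i$ vanishes. For the squared loss the unrestricted $\mathcal{L}_2$-attack then gives
$$l(f_{\theta(\y)}(x_i+A_{\epsilon}),y_i)=\bigl(|\theta(\y)^{\top}x_i-y_i|+\epsilon\|\theta(\y)\|\bigr)^2=\epsilon^2\|\theta(\y)\|^2$$
uniformly in $i$. Since $A_{\epsilon,\xi}$ is a shrinkage of $A_{\epsilon}$ in the same direction with shrinkage factor in $[0,1]$, and the squared loss is monotone along that direction, one has $\widehat R_{L,\xi}(\theta(\y),\epsilon)\leq \epsilon^2\|\theta(\y)\|^2=O_p(\epsilon^2 nv^2/d)=o_p(v^2)$.

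For the population loss I would bracket $R_{L,\xi}(\theta(\y),\epsilon)$ between two expressions that both tend to $v^2$. The surrogate attack is weaker than $A_{\epsilon}$, so the upper bound is the closed form (\ref{eqn:population}): expanding $\|\theta(\y)-\theta_0\|_{\Sigma}^2=\|\theta_0\|_{\Sigma}^2-2\theta(\y)^{\top}\Sigma\theta_0+\|\theta(\y)\|_{\Sigma}^2$ and using $\|\theta(\y)\|=O_p(v\sqrt{n/d})$, Cauchy--Schwarz, and boundedness of the eigenvalues of $\Sigma$, the cross term and the $\epsilon^2\|\theta\|^2,\;2\epsilon c_0\|\theta\|\sqrt{\cdot}$ contributions are all $o_p(v^2)$, giving $R_L(\theta(\y),\epsilon)=v^2+o_p(v^2)$. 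For the matching lower bound, plugging $z=0$ into the surrogate attack shows $R_{L,\xi}(\theta,\epsilon)\geq \mathbb{E}(f_{\theta}(x)-y)^2=\|\theta-\theta_0\|_{\Sigma}^2+\sigma^2$, which at $\theta=\theta(\y)$ is again $v^2+o_p(v^2)$. Combining yields $R_{L,\xi}(\theta(\y),\epsilon)/v^2\to 1$.

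The main obstacle is Step 1: one needs a sharp enough spectral statement on $\X\X^{\top}$ when $d/n\to\infty$ to extract the precise order $n/d$ (not merely $o(1)$) for $\|\theta(\y)\|^2/v^2$, and in particular a lower bound on $\lambda_{\min}(\X\X^{\top})$ of order $d$ that holds with high probability. Once this concentration is in hand, the interpolation observation handles Step 2 immediately, and Step 3 is an algebraic manipulation of the formula in (\ref{eqn:population}) combined with the trivial lower bound coming from the zero attack.
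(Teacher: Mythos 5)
Your proposal is correct, and it actually covers more ground than the paper's written proof, which only details the norm bound $\|\theta(\y)\|^2/v^2=O_p(n/d)$ and leaves the two loss statements implicit (they follow exactly as you argue: interpolation kills the residuals so the empirical adversarial loss is at most $\epsilon^2\|\theta(\y)\|^2=o_p(v^2)$, and $\|\theta(\y)\|/v\to 0$ forces the population loss toward $\|\theta_0\|_{\Sigma}^2+\sigma^2=v^2$ via the closed form in (\ref{eqn:population}) together with the trivial clean-loss lower bound). Where you genuinely diverge is the norm bound itself. The paper decomposes $\y=\X\theta_0+\varepsilon$ and bounds the two quadratic forms separately: the signal part $\theta_0^{\top}\X^{\top}(\X\X^{\top})^{-1}\X\theta_0$ is controlled by comparing to the $\Sigma$-weighted projection and invoking the limit $(n/d)\|\theta_0\|_{\Sigma}^2$ from \cite{belkin2019two}, and the noise part by $\sigma^2\,\mathrm{tr}((\X\X^{\top})^{-1})=O(n/d)$. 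You instead use $\|\y\|^2=O_p(nv^2)$ together with a uniform spectral statement $\X\X^{\top}\approx \mathrm{tr}(\Sigma)I_n$; this is more elementary and gives the same rate, and the required eigenvalue lower bound $\lambda_{\min}(\X\X^{\top})=\Theta(d)$ is exactly what the paper proves separately as Lemma \ref{lem:norm} (by appending rows and citing \cite{bai2008limit}). One small caution on the step you yourself flag as the obstacle: entrywise control (diagonal near $\mathrm{tr}(\Sigma)$, off-diagonals $O_p(\sqrt{d})$) does not by itself yield the operator-norm statement unless $n=o(\sqrt{d})$; you need either an $\varepsilon$-net/Hanson--Wright argument giving deviation $O_p(\sqrt{nd}+n)=o_p(d)$, or the Bai--Yin-type eigenvalue bounds the paper uses. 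With that made precise, your argument is complete.
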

We next show that $\theta_{\xi}^{(T)}$ shares the same properties as $\theta(\y)$. The core idea is that the training trajectory $\{\theta_{\xi}^{(t)}\}_{t=1}^T$ can be sufficiently close to that in the standard training, when both are initialized from zero.
Since the latter converges to $\theta(\y)$, the surrogate adversarial training loss and testing loss of $\theta_{\xi}^{(T)}$ act in a similar way as those for $\theta(\y)$ respectively.
	
	\begin{theorem}\label{thm:high_linear}
		Under the same assumptions as in Lemma \ref{lem:ridge}, when $(\log n)\sqrt{n/d}\rightarrow 0$, take $\eta$ small enough such that the largest eigenvalue of $\eta \X^{\top}\X$ is smaller than 1.  Use zero initialization, and denote $T:=\min\{t\in\mathbb{Z}^+: |\|\X\theta_{\xi}^{(t)}-\y\|_2/(v\sqrt{n}) <1/\sqrt{\log n}\} $, then with probability tending to 1, for any $\xi>0$, we have $T<\infty$ and   $$
		\frac{\widehat{R}_{L,\xi}(\theta_{\xi}^{(T)},\epsilon )}{v^2}\rightarrow 0,\;\text{and }\frac{{R}_{L,\xi}(\theta_{\xi}^{(T)},\epsilon )}{v^2}\rightarrow  1.
		$$
	\end{theorem}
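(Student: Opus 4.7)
The strategy is to couple the adversarial iterates $\{\theta_\xi^{(t)}\}$ with the standard gradient descent iterates $\{\theta_s^{(t)}\}$ (run on the un-attacked MSE with the same step size and zero initialization) and transfer the conclusions of Lemma~\ref{lem:ridge} from the minimum-norm interpolator $\theta(\y)$ to $\theta_\xi^{(T)}$ via this coupling. Under the spectral condition on $\eta \X^\top\X$, standard gradient descent converges geometrically to $\theta(\y)=\X^\top(\X\X^\top)^{-1}\y$, so there exists a deterministic $T_0 = O(\text{poly}(\log n))$ after which $\|\X\theta_s^{(T_0)} - \y\|/(v\sqrt n) \le 1/(2\sqrt{\log n})$; and throughout the trajectory $\|\theta_s^{(t)}\| \le \|\theta(\y)\| = O_p(v\sqrt{n/d})$ by Lemma~\ref{lem:ridge}.

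Next, I would decompose the adversarial gradient as $\nabla\widehat R_{L,\xi}(\theta,\epsilon) = \nabla\widehat R_L(\theta) + \Delta(\theta)$. For squared loss with a linear model, $\partial l/\partial x=2(\theta^\top x-y)\theta$, so each adversarial sample evaluates to $(|\theta^\top x_i-y_i|+c_i(\theta)\|\theta\|)^2$ for some $0\le c_i(\theta)\le\epsilon$ that vanishes linearly as $\|\theta\|\to 0$ (since the surrogate shrinkage factor behaves like $\|\partial l/\partial x\|/\xi$ there). A direct calculation bounds $\|\Delta(\theta)\|\le C\epsilon \|\theta\|$ uniformly, provided $\xi>0$. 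I would then show by induction that the recursion $\theta_\xi^{(t+1)}-\theta_s^{(t+1)} = (I-\eta \X^\top\X/n)(\theta_\xi^{(t)}-\theta_s^{(t)}) - \eta\Delta(\theta_\xi^{(t)})$, combined with $\|I-\eta\X^\top\X/n\|\le 1$ and $\|\Delta(\theta_\xi^{(t)})\|\lesssim \epsilon \|\theta_\xi^{(t)}\|$, yields $\|\theta_\xi^{(t)}-\theta_s^{(t)}\| \le C\eta\epsilon \sum_{s<t}\|\theta_\xi^{(s)}\|$. The scaling $(\log n)\sqrt{n/d}\to 0$ then gives $\|\theta_\xi^{(T_0)}-\theta_s^{(T_0)}\|=o_p(v\sqrt{n/d})$, and in particular $\|\X\theta_\xi^{(T_0)}-\y\|/(v\sqrt n) < 1/\sqrt{\log n}$ with probability tending to one, so the stopping time satisfies $T\le T_0<\infty$.

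Finally, I would transfer the conclusions to $\theta_\xi^{(T)}$. The stopping rule gives $\|\X\theta_\xi^{(T)}-\y\|^2/n = o(v^2)$, and the coupling gives $\|\theta_\xi^{(T)}\|^2 = O_p(v^2 n/d) = o(v^2)$. Expanding $(|\theta_\xi^{(T),\top}x_i-y_i|+c_i\|\theta_\xi^{(T)}\|)^2\le 2(\theta_\xi^{(T),\top}x_i-y_i)^2+2\epsilon^2\|\theta_\xi^{(T)}\|^2$ and averaging yields $\widehat R_{L,\xi}(\theta_\xi^{(T)},\epsilon)/v^2\to 0$. For the population loss, $R_{L,\xi}(\theta,\epsilon)$ is sandwiched between $R_L(\theta,0)=\|\theta-\theta_0\|_\Sigma^2+\sigma^2$ and the exact $R_L(\theta,\epsilon)$ from \eqref{eqn:population}; since $\|\theta_\xi^{(T)}\|=o_p(v)$, both bounds contract to $\|\theta_0\|_\Sigma^2+\sigma^2 = v^2$, so $R_{L,\xi}(\theta_\xi^{(T)},\epsilon)/v^2\to 1$.

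The main technical obstacle is the uniform control of $\Delta(\theta)$ over $T_0$ iterations, since $T_0$ is not bounded and the perturbation must remain sub-dominant relative to $\|\theta_s^{(t)}\|$ throughout. This is precisely where both the assumption $(\log n)\sqrt{n/d}\to 0$ (bounding the accumulated drift) and the positivity of $\xi$ (preventing singular behavior of the surrogate shrinkage factor near the origin, which is visited initially) become essential.
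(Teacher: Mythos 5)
Your proposal follows essentially the same route as the paper's proof: both couple the adversarial iterates with standard gradient descent from zero initialization, show that the attack-induced gradient perturbation is negligible along the trajectory (the paper's ``condition M'' induction and the relative bound in equation (\ref{eqn:gradient}) play the role of your absolute bound $\|\Delta(\theta)\|\lesssim \epsilon\|\theta\|$ plus drift accumulation), and then transfer the interpolation conclusion of Lemma \ref{lem:ridge} to $\theta_{\xi}^{(T)}$. The one caveat is that your bound $\|\Delta(\theta)\|\le C\epsilon\|\theta\|$ is valid only while the surrogate shrinkage is active, i.e. while $\|\theta\|\max_i|x_i^{\top}\theta-y_i|=o(\xi)$, so your induction must also maintain $\|\X\theta_{\xi}^{(t)}\|=O(v\sqrt{n})$ and a lower bound on the residual --- precisely what the paper's condition M records (and the paper's own relative bound works even at $\xi=0$, where your absolute bound would degrade to $O(\epsilon v)$ per step but still suffices).
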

	
The proof of Theorem \ref{thm:high_linear} is postponed to Appendix E. Compared with Theorem \ref{thm:opt}, Theorem \ref{thm:high_linear} no longer requires $\xi$ to be associated with $(d,n)$. A crucial reason for this difference is that under high dimensionality, when $t\leq T$, the smoothness of $\widehat R_{L,\xi}$ along the training trajectory (i.e., the gradient of $\widehat R_{L,\xi}(\theta_{\xi}^{(t)},\epsilon)$) is always dominated by a term that is only determined by the eigenvalues of high dimensional design matrix $\X$, regardless of how small $\xi$ is (refer to equation (\ref{eqn:gradient}) in Appendix E for details). This is contrast to the low dimensional case.

Theorem \ref{thm:high_linear} shows that $R_{L,\xi}(\theta_{\xi}^{(T)},\epsilon)/v^2$ does not converge to $R^*/v^2$. Similar results can be established for two-layer neural networks (with lazy training):
	\begin{theorem}\label{thm:high_network}
		For the two-layer neural network (\ref{eqn:nonlinear}), under the same conditions on $\phi$ as in Theorem \ref{thm:low_nonlinear}, $(\log n)\sqrt{d/n}\rightarrow\infty$, take zero/vanishing initialization and $\eta=\eta_{linear}h/\|a\|^2$. Assume { $\|a\|_{\infty}=O(1)$, $(d\log n)\|a\|_{\infty}v/\sqrt{h}\rightarrow 0$, $\max |a_j|/(\min |a_j|)=\Theta(1)$, $\sqrt{dn}(\log n)^2 \|a\|_{\infty}v\sqrt{h}/\|a\|^2 \rightarrow 0 $ }, $\sqrt{d\log n}(1+v^2\eta\|a\|^3/(h^{3/2}\xi)+L\eta v)^T/h^{\delta/2}\rightarrow 0$. Denote $T:=\min\{t\in\mathbb{Z}^+:|\|f_{\theta_{\xi}^{(t)}}-\y\|_2/(v\sqrt{n}) <1/\sqrt{\log n}\} $, then with probability tending to 1, for any $\xi >0$, we have $T<\infty$ and 
		\begin{equation}\label{eqn:high_linear}
		\frac{\widehat{R}_{N,\xi}(\theta_{\xi}^{(T)},\epsilon )}{v^2}\rightarrow 0,\;\text{and }\frac{{R}_{N,\xi}(\theta_{\xi}^{(T)},\epsilon )}{v^2}\rightarrow  1.
		\end{equation}
	\end{theorem}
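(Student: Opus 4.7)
The plan is to combine the lazy-training linearization from the proof of Theorem \ref{thm:low_nonlinear} with the high-dimensional interpolation analysis of Theorem \ref{thm:high_linear}. Under the vanishing initialization $\theta_{\xi,j}^{(0)}\sim N(0, I_d/(dh^{1+\delta}))$ together with $\phi'(0)\neq 0$ and $\phi(0)=0$, the two-layer network admits a first-order Taylor expansion around $\theta_{\xi}^{(0)}$ that defines an effective linear predictor $\widetilde f_{\beta}(x) = x^\top \beta$ with $\beta := \phi'(0)h^{-1/2}\sum_{j=1}^h a_j(\theta_j - \theta_j^{(0)})$. The induction from the proof of Theorem \ref{thm:low_nonlinear} gives a uniform bound on $|f_{\theta_{\xi}^{(t)}}(x) - \widetilde f_{\beta^{(t)}}(x)|$ of order $\|x\|\,(1+v^2\eta\|a\|^3/(h^{3/2}\xi)+L\eta v)^t/h^{\delta/2}$, with an analogous bound for the gradients in $x$ and $\theta$. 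The hypothesis $\sqrt{d\log n}(1+v^2\eta\|a\|^3/(h^{3/2}\xi)+L\eta v)^T/h^{\delta/2}\to 0$ therefore keeps the two trajectories within $o_p(v)$ of each other in both the fitted values and the gradient updates, uniformly over $t\le T$.

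First, I would apply Theorem \ref{thm:high_linear} to the linearized dynamics driven by the effective design matrix $\phi'(0)\X$. That result yields a finite stopping time $T_0$ after which $\|\X\beta^{(t)}-\y\|_2/(v\sqrt n) < 1/(2\sqrt{\log n})$, together with $\widehat R_{L,\xi}(\beta^{(T_0)},\epsilon)/v^2\to 0$, $R_{L,\xi}(\beta^{(T_0)},\epsilon)/v^2\to 1$, and $\|\beta^{(T_0)}\|^2/v^2=O_p(n/d)$ via Lemma \ref{lem:ridge}. Transferring the stopping criterion to the nonlinear iterates through the trajectory-approximation bound shows that the defining event $\|f_{\theta_{\xi}^{(t)}}-\y\|_2/(v\sqrt n) < 1/\sqrt{\log n}$ is triggered at some $T\le T_0$, giving $T<\infty$ with probability tending to one.

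Second, it remains to translate both conclusions back to the nonlinear network. Using uniform closeness of fitted values on the training sample, $\widehat R_{N,\xi}(\theta_{\xi}^{(T)},\epsilon)$ differs from $\widehat R_{L,\xi}(\beta^{(T)},\epsilon)$ by $o_p(v^2)$, yielding $\widehat R_{N,\xi}(\theta_{\xi}^{(T)},\epsilon)/v^2\to 0$. For the population version, Gaussian tail bounds on $\|x\|$ combined with $\|\beta^{(T)}\|=O_p(v\sqrt{n/d})$ show that both $\epsilon^2\|\beta^{(T)}\|^2$ and the adversarial cross term $2\epsilon c_0\|\beta^{(T)}\|\sqrt{\|\beta^{(T)}-\theta_0\|_\Sigma^2+\sigma^2}$ are $o(v^2)$, so the leading term is $\|\theta_0\|_\Sigma^2+\sigma^2=v^2$, giving $R_{L,\xi}(\beta^{(T)},\epsilon)/v^2\to 1$. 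Propagating the $o_p(v)$ uniform-in-$x$ approximation of $f_{\theta_{\xi}^{(T)}}$ by $\widetilde f_{\beta^{(T)}}$ through the expectation then delivers $R_{N,\xi}(\theta_{\xi}^{(T)},\epsilon)/v^2\to 1$.

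The hard part will be the simultaneous control of the two approximation errors at the random stopping time $T$, which is determined by the nonlinear trajectory rather than the linearized one. One must propagate the deviation bound through all iterations up to $T$ and verify that the stopping criterion on $f_{\theta_{\xi}^{(t)}}$ is equivalent, up to $o(v\sqrt n)$, to the analogous criterion on $\widetilde f_{\beta^{(t)}}$. This requires careful handling of the growth factor $(1+v^2\eta\|a\|^3/(h^{3/2}\xi)+L\eta v)^T$ together with the hypothesized conditions on $(d,n,h,a)$, in particular $(d\log n)\|a\|_\infty v/\sqrt h\to 0$ and $\sqrt{dn}(\log n)^2\|a\|_\infty v\sqrt h/\|a\|^2\to 0$, which jointly tame the sample-level Hessian of the linearized surrogate loss and the lazy-regime deviation throughout training.
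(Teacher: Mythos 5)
Your proposal is correct and follows essentially the same route as the paper: the paper's proof of this theorem is precisely "repeat the three-trajectory comparison (linear network / nonlinear with zero init / nonlinear with vanishing init) from the proof of Theorem \ref{thm:low_nonlinear}, but replace $\theta_0$ by the interpolator $\theta(\y)$ and invoke Lemma \ref{lem:ridge} and Theorem \ref{thm:high_linear} for the linearized dynamics," with the modified conditions on $(a,h,d,n)$ arising exactly where you place them (the decrease of $\|f_{\theta_{\xi}^{(t)}}-\y\|$ along the linear-network step, and the closeness of the vanishing-initialization trajectory). Your treatment of the random stopping time and the transfer of both risk conclusions back to the nonlinear network matches the paper's intended argument.
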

	
	For ReLU network, we have the following result:
	\begin{theorem}\label{thm:high_relu}
		For the two-layer neural network (\ref{eqn:nonlinear}), $(\log n)\sqrt{d/n}\rightarrow\infty$, assume the activation function $\phi$ is ReLU function with zero initialization and no bias.  If $ \|a^+\|/\|a^-\|=1$,  $\|a\|_{\infty}=O(1)$.  Denote $T:=\min\{t\in\mathbb{Z}^+:|\|f_{\theta_{\xi}^{(t)}}-\y\|_2/(v\sqrt{n}) <1/\sqrt{\log n}\} $, then for any $\xi >0$, with probability tending to 1, we have $T<\infty$, and (\ref{eqn:high_linear}) holds.
	\end{theorem}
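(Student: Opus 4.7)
The plan is to combine the ReLU-to-linear reduction used in the proof of Theorem \ref{thm:low_relu} with the high-dimensional interpolation machinery of Theorem \ref{thm:high_linear}. The core observation is that under zero initialization together with the balance condition $\|a^+\|/\|a^-\|=1$, the trajectory of gradient descent on $\widehat R_{N,\xi}$ for the ReLU two-layer network collapses (pathwise) into the trajectory for an equivalent linear predictor on the same data, with effective step size $\eta\|a\|^2/h$. Once this pathwise equivalence is established, Theorem \ref{thm:high_linear} applies verbatim to the induced linear problem.

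The first step is the symmetric update lemma. At $t=0$, $f_{\theta_\xi^{(0)}}\equiv 0$, so the residuals are $y_i$ and the gradient of $\widehat R_{N,\xi}$ in $\theta_j$ factors as $a_j/\sqrt{h}$ times a vector $g_0$ that depends only on the data and the common subgradient convention for $\phi'(0)$; thus $\theta_{\xi,j}^{(1)} = \eta (a_j/\sqrt{h}) g_0$. For any $\theta_j$ of the form $c\, a_j g$ one has $x^\top\theta_j = c\, a_j(x^\top g)$ whose sign under ReLU is determined purely by the sign of $a_j$, so
\[
\sum_j a_j \phi(x^\top\theta_j) = c\|a^+\|^2 (x^\top g)_+ - c\|a^-\|^2 (-x^\top g)_+ = c\|a^+\|^2 (x^\top g),
\]
using $(u)_+ - (-u)_+ = u$ and $\|a^+\|^2 = \|a^-\|^2$. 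Hence $f_{\theta_\xi^{(1)}}$ is linear in $x$. By induction, the structure $\theta_{\xi,j}^{(t)} = (a_j/\sqrt h)\, g_t$ is preserved at every iteration because the surrogate attack $A_{\epsilon,\xi}$ and the gradient of $l$ at $\theta_\xi^{(t)}$ depend on $\theta_{\xi,j}^{(t)}$ only through $x_i^\top \theta_{\xi,j}^{(t)}$, whose ReLU activation pattern is dictated by $\mathrm{sgn}(a_j)$, and the cancellation above repeats.

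The second step maps $\{g_t\}$ to the iterates of gradient descent on $\widehat R_{L,\xi}$ for a linear model with effective step size $\eta\|a\|^2/h$, and identifies the stopping rule $\|\X g_t \cdot \|a^+\|^2/\sqrt h - \y\|/(v\sqrt n) < 1/\sqrt{\log n}$ with the linear stopping rule of Theorem \ref{thm:high_linear}. Under $(\log n)\sqrt{d/n}\to\infty$, that theorem gives $T<\infty$ with probability tending to one, $\widehat R_{L,\xi}/v^2\to 0$, and the corresponding $R_{L,\xi}/v^2\to 1$ via Lemma \ref{lem:ridge}. Translating back, $\widehat R_{N,\xi}(\theta_\xi^{(T)},\epsilon)/v^2 \to 0$ and $R_{N,\xi}(\theta_\xi^{(T)},\epsilon)/v^2 \to 1$, which is (\ref{eqn:high_linear}).

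The main obstacle is making the ReLU-to-linear collapse rigorous along the entire trajectory, not just at initialization. Two points deserve care: (i) the subgradient $\phi'(0)$ is set-valued, so I must fix a consistent selection (e.g., $\phi'(0)=1$) and verify that with this choice the inductive form $\theta_{\xi,j}^{(t)} = (a_j/\sqrt h) g_t$ is genuinely preserved, including in the evaluation of the surrogate scaling $\|\partial l/\partial x\|/\sqrt{\|\partial l/\partial x\|^2+\xi^2}$; and (ii) I must verify that the attack $A_{\epsilon,\xi}(f_{\theta^{(t)}},x_i,y_i)$ inherits the same $a_j$-proportional structure so that the gradient with respect to $\theta_j$ remains a scalar multiple of $a_j$ times a common vector. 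Both reductions rely on the fact that after the cancellation the network behaves exactly as a linear predictor $x\mapsto (\|a^+\|^2/\sqrt h)(x^\top g_t)$, so attacks and gradients depend on $\theta_\xi^{(t)}$ only through $g_t$; once this is verified, the proof follows the linear high-dimensional argument without modification.
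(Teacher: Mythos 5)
Your overall architecture is the same as the paper's: show that under zero initialization and $\|a^+\|=\|a^-\|$ the ReLU network's training trajectory coincides with that of a linear predictor, and then invoke Theorem~\ref{thm:high_linear} (and Lemma~\ref{lem:ridge}) for the induced linear problem. The gap is in the inductive claim that $\theta_{\xi,j}^{(t)}=(a_j/\sqrt h)\,g_t$ for a \emph{single} common vector $g_t$. This holds at $t=1$ (all nodes start at $0$, so with the convention $\phi'(0)=1$ every per-node gradient is $a_j/\sqrt h$ times one data-dependent vector), but it breaks at $t=2$. The gradient of the loss with respect to $\theta_j$ carries the factor $\phi'\bigl((x_i+A_i)^{\top}\theta_j\bigr)=1\{a_j\,(x_i+A_i)^{\top}g_1>0\}$, so nodes with $a_j>0$ accumulate $\frac{2}{n}\sum_{i\in I^+}(x_i+A_i)r_i$ while nodes with $a_j<0$ accumulate $\frac{2}{n}\sum_{i\in I^-}(x_i+A_i)r_i$, where $I^{\pm}$ are the \emph{complementary} sample sets on which $(x_i+A_i)^{\top}g_1$ is positive or negative. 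These two partial sums are not equal for generic data, so after one update you have $\theta_j^{(2)}=(a_j/\sqrt h)g_2^{+}$ for $a_j>0$ and $(a_j/\sqrt h)g_2^{-}$ for $a_j<0$ with $g_2^{+}\neq g_2^{-}$, and the function $f_{\theta^{(2)}}(x)=\frac{1}{h}\bigl[\|a^+\|^2(x^{\top}g_2^{+})_+-\|a^-\|^2(-x^{\top}g_2^{-})_+\bigr]$ is genuinely piecewise linear, not linear. Your own cancellation identity $(u)_+-(-u)_+=u$ only rescues the \emph{function value} when the two vectors agree; it says nothing about the per-node gradients sharing a common direction. You flag exactly this verification as point (ii) of your "obstacles" but then assert it goes through; it does not, as stated.

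The paper closes this hole with an extra device: it assumes the design is symmetrized, $\X=[\X_0,-\X_0]^{\top}$, so that each sample $x_i$ is paired with $-x_i$ (and $y$ with $-y$); under the linear/odd structure the pairing maps $I^+$ bijectively onto $I^-$ with $x_{i'}r_{i'}=x_ir_i$, forcing the two partial sums to coincide and keeping $g_t^{+}=g_t^{-}$ along the whole trajectory. (That device is itself terse and not reflected in the theorem's hypotheses, but it is the ingredient your argument is missing.) Two smaller points: the effective linear learning rate is $\eta\|a\|^2/(2h)$, not $\eta\|a\|^2/h$, since only $\|a^{\pm}\|^2=\|a\|^2/2$ of the weight mass is active on each half-space (this is the paper's "twice the learning rate" remark); and once linearity of the trajectory is secured, your appeal to Theorem~\ref{thm:high_linear} and the translation of the stopping time $T$ is fine and matches the paper's treatment of Theorem~\ref{thm:high_network}.
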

	
	\textbf{Numerical experiment}\\
	A simulation is conducted to verify Theorem \ref{thm:high_linear}. We choose $n=20$, $d= 1000$, and $\sigma^2=1$. The true underlying model $\theta_0$ is all-zero except for its first 10 elements being $1$. The attack intensity is $\epsilon=0, 0.01, 0.1$. Learning rate is taken as 0.001 with zero initialization and $\xi=0.5$. The curves in Figure \ref{fig:1000} represent means of respective statistics, and the shaded areas represent mean $\pm$ one standard deviation, based on 100 replications. Figure \ref{fig:1000} shows that the surrogate adversarial training loss keeps decreasing to around zero for all the choices of $\epsilon$, while the adversarial testing loss converges to some nonzero constant. Note that the three adversarial training loss curves in the left plot of Figure \ref{fig:1000} overlap. 

	More experiments for larger $d$ and $\xi=0$ (with a change when $\theta=0$) are postponed to Appendix A. Besides, we also postpone experiments for neural networks to Appendix A.

	\begin{figure}[!ht]
		\centering
		\includegraphics[trim=100 0 400 0,clip,scale=0.4]{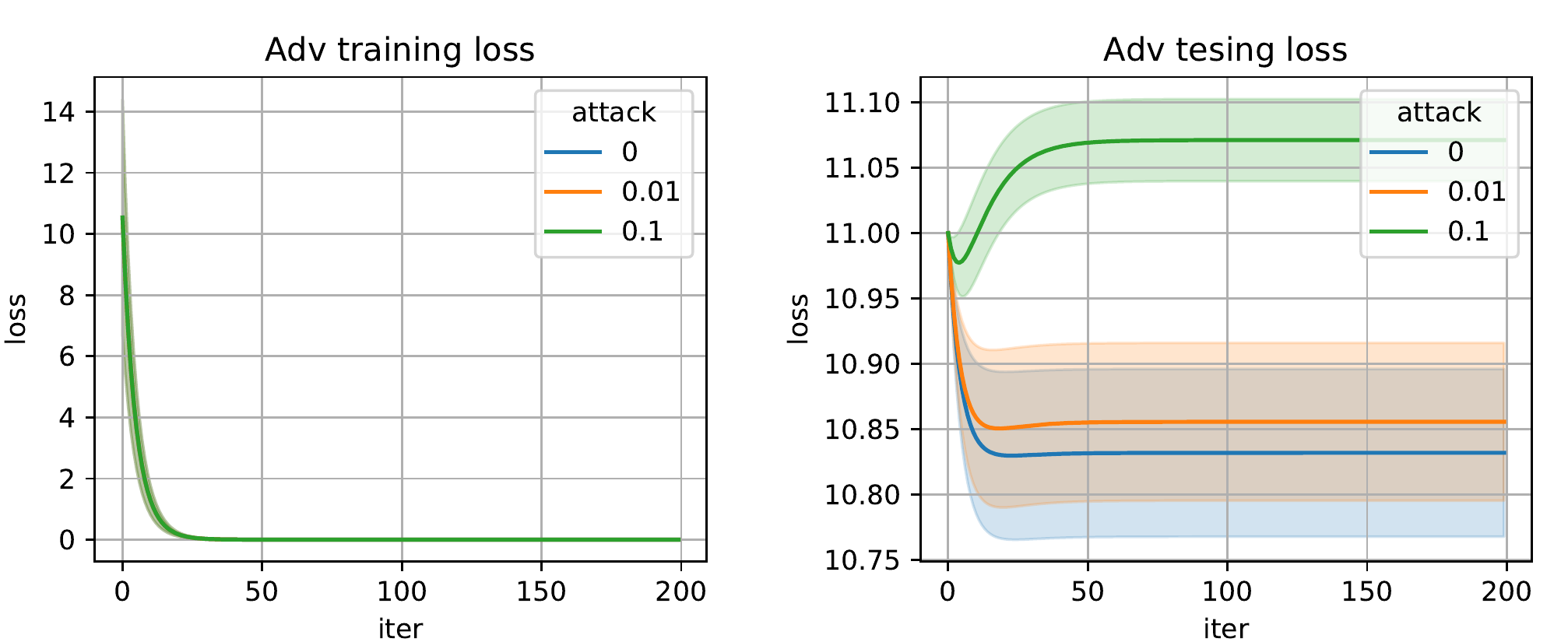}\vspace{-0.1in}
		\caption{Adversarial training, high-dimension.}
		\label{fig:1000}
	\end{figure}

	\subsection{Improving adversarial robustness using LASSO under high dimensionality}\label{sec:sparse}
	
In this section, we explore how incorporating LASSO improves adversarial learning under high dimensionality. In particular, we will present some theoretical justifications for linear models and conduct numerical exploration to evaluate the potential of LASSO in neural networks.
	
Intuitively, a sparse adversarial learning via LASSO makes sense only when the adversarial loss has a sparse global optimization, i.e., $\theta^*$ is a sparse vector. Therefore, certain investigation is necessary to understand the sparsity relation between $\theta_0$ and $\theta^*$.
	
	\begin{prop}\label{prop:sigma}
		Under model (\ref{eqn:model}), the optimal solution of ${R}_{L,\xi}(\theta,\epsilon)$ is of the form
		$
		\theta^*=(\Sigma+\kappa I)^{-1}\Sigma \theta_0
		$
		for some $\kappa$ as a function of $(\theta_0,\Sigma,\sigma^2,\xi)$.  Assuming $\theta_0$ is sparse, then whether the robust coefficient ($(\Sigma+\kappa I)^{-1}\Sigma\theta_0$) is sparse or not depends on $\Sigma$.
	\end{prop}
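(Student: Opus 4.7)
The plan is to exploit the fact that $R_{L,\xi}(\theta,\epsilon)$ depends on $\theta$ only through the two scalar quantities $a^{2}:=\|\theta-\theta_{0}\|_{\Sigma}^{2}+\sigma^{2}$ and $\rho:=\|\theta\|$, and then read off the form of the minimizer from the first-order condition. Writing $u:=\theta^{\top}x-y$, a direct calculation for linear regression with squared loss gives $A_{\epsilon}(f_{\theta},x,y)=\epsilon\,\sgn(u)\theta/\rho$ and hence
\begin{equation*}
\theta^{\top}A_{\epsilon,\xi}(f_{\theta},x,y) \;=\; \frac{2\epsilon u\rho^{2}}{\sqrt{4u^{2}\rho^{2}+\xi^{2}}},
\end{equation*}
so the per-sample adversarial loss equals $u^{2}\bigl(1+2\epsilon\rho^{2}/\sqrt{4u^{2}\rho^{2}+\xi^{2}}\bigr)^{2}$. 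Since $x\sim N(0,\Sigma)$ and $\varepsilon\sim N(0,\sigma^{2})$ are independent Gaussians, $u\sim N(0,a^{2})$, and taking expectation reduces $R_{L,\xi}$ to a smooth function $F(a^{2},\rho)$ of two scalar variables.

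By the chain rule,
\begin{equation*}
\nabla_{\theta}R_{L,\xi}(\theta,\epsilon) \;=\; 2\,\partial_{a^{2}}F(a^{2},\rho)\,\Sigma(\theta-\theta_{0}) \;+\; \rho^{-1}\partial_{\rho}F(a^{2},\rho)\,\theta.
\end{equation*}
Setting this to zero and dividing by $2\,\partial_{a^{2}}F$ (strictly positive because $F$ is increasing in $a^{2}$ for each fixed $\rho$) rearranges into $(\Sigma+\kappa I)\theta=\Sigma\theta_{0}$, where
\begin{equation*}
\kappa := \frac{\partial_{\rho}F(a^{2},\rho)}{2\rho\,\partial_{a^{2}}F(a^{2},\rho)}
\end{equation*}
is evaluated at the optimizer $\theta^{*}$. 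Convexity of $R_{L,\xi}$ (already invoked in the paragraph preceding Theorem~\ref{thm:opt}) together with coercivity yields a unique minimizer, so $\kappa$ is implicitly determined by $(\theta_{0},\Sigma,\sigma^{2},\xi)$ at the fixed $\epsilon$, giving the announced identity $\theta^{*}=(\Sigma+\kappa I)^{-1}\Sigma\theta_{0}$.

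For the sparsity claim, two contrasting choices of $\Sigma$ suffice. When $\Sigma=I$, the operator $(\Sigma+\kappa I)^{-1}\Sigma=(1+\kappa)^{-1}I$ is a scalar multiple of the identity, so $\theta^{*}$ inherits the sparsity pattern of $\theta_{0}$ exactly. At the other extreme, take the equicorrelated covariance $\Sigma=(1-r)I+r\mathbf{1}\mathbf{1}^{\top}$ with $r\in(0,1)$: then $\Sigma\theta_{0}$ has every coordinate nonzero whenever $\mathbf{1}^{\top}\theta_{0}\neq 0$, and by the Sherman--Morrison identity $(\Sigma+\kappa I)^{-1}$ is itself a rank-one correction of a scalar multiple of the identity, hence dense, so $\theta^{*}$ is generically dense no matter how sparse $\theta_{0}$ is. The main obstacle of the argument is verifying that $\partial_{a^{2}}F>0$ so that the definition of $\kappa$ is legitimate; this can be done by rescaling $u=at$ with $t\sim N(0,1)$, which isolates the $a^{2}$-dependence inside the expectation and allows differentiation under the integral sign via dominated convergence.
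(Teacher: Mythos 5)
Your argument is correct. Note that the paper itself does not actually supply a proof of Proposition~\ref{prop:sigma} (the appendix only covers Theorems~\ref{thm:opt}--\ref{thm:lasso} and Lemma~\ref{lem:ridge}), so there is no in-paper derivation to compare against; your proof is the natural one and fills that gap. The reduction to a two-variable function $F(a^2,\rho)$ with $a^2=\|\theta-\theta_0\|_{\Sigma}^2+\sigma^2$ and $\rho=\|\theta\|$ is exactly what the closed form (\ref{eqn:population}) suggests, the computation of $\theta^{\top}A_{\epsilon,\xi}$ matches the surrogate attack as used in the paper's Appendix~B (up to the harmless rescaling of $\xi$ that the paper also absorbs), and the first-order condition correctly yields $(\Sigma+\kappa I)\theta^*=\Sigma\theta_0$ with $\kappa=\partial_{\rho}F/(2\rho\,\partial_{a^2}F)\ge 0$, so $\Sigma+\kappa I$ is invertible. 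Your verification that $\partial_{a^2}F>0$ via the substitution $u=at$ goes through (the pointwise derivative of $s\mapsto s(1+2\epsilon\rho^2/\sqrt{4s\rho^2+\xi^2})^2$ is positive), and the two covariance examples ($\Sigma=I$ versus an equicorrelated $\Sigma$) cleanly establish the ``depends on $\Sigma$'' claim, complementing the block-diagonal case in the paper's Example~\ref{example:sigma}. The only point worth flagging is the degenerate case $\rho=0$: for $\xi>0$ the gradient at $\theta=0$ equals $-2\Sigma\theta_0\neq 0$ (since $\theta_0$ is not perpendicular to $\Sigma$), so the minimizer indeed has $\rho>0$ and your division by $\rho$ is legitimate; a one-line remark to that effect would make the argument airtight.
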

	
	The following example illustrates that, based on Proposition \ref{prop:sigma}, when the correlation between active set and inactive set is zero, the adversarially robust model $\theta^*$ is sparse as well. 
	\begin{example}\label{example:sigma}
		When $\theta_0$ is sparse, and $\Sigma$ can be represented as $\begin{Bmatrix}
		\Sigma_1 & 0\\
		0 & \Sigma_2
		\end{Bmatrix}$, where $\Sigma_1$ is the covaraince of the active attributes, and $\Sigma_2$ is for the other attributes of $x$, the model $\theta^*$ will be sparse as well.
	\end{example}
	To simplify the derivation, we assume $\Sigma=I$ in the following result. Denote $S$ as the active set of $\theta^*$, and $s=|S|$ as the size of $S$. We consider applying LASSO in the adversarial training loss:
	$$\frac{1}{n}\sum_{i=1}^n l(f_{\theta}(x_i+A_{\epsilon,\xi}(f_{\theta},x_i,y_i)),y_i)+\lambda\|\theta\|_1.$$
	The statistical property of $\widehat{\theta}_{\xi}$, the minimizer of the above objective function, is as follows:
	
	\begin{theorem}\label{thm:lasso}
		Assume data generation follows (\ref{eqn:model}), $\theta_0$ is sparse and $\Sigma=I$. Take $\xi/v^2\rightarrow 0$, $\lambda/v=o(1)$ and $\lambda/v\geq (c\sqrt{{(s\log d)/}{n}})\vee (\xi a_n / v^2)$ for some large constant $c$ and $a_n\rightarrow\infty$. If $\epsilon<\sqrt{\pi}\|\theta_0\|_2/(\sqrt{2}v)$, then $\theta^*\neq 0$, and  with probability tending to 1, we have
		$$
		{\frac{R_{L,\xi}(\widehat\theta_{\xi},\epsilon)-R^*}{v^2}\rightarrow 0,}\;\text{and } \frac{\|\widehat{\theta}_{\xi}-\theta^*\|_1}{\lambda}<\infty.
		$$
	\end{theorem}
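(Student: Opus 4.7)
The plan is to adapt the classical LASSO oracle-type analysis to the penalized surrogate adversarial empirical risk $\widehat R_{L,\xi}(\theta,\epsilon)+\lambda\|\theta\|_1$. First I would pin down the structure of $\theta^*$: Proposition~\ref{prop:sigma} with $\Sigma=I$ gives $\theta^*=\theta_0/(1+\kappa)$ for some scalar $\kappa=\kappa(\theta_0,\sigma^2,\epsilon,\xi)$, so $\theta^*$ inherits the support $S$ of $\theta_0$ (hence is $s$-sparse) and is nonzero as soon as $\kappa<\infty$. A direct inspection of the first-order condition for $R_{L,\xi}$, combined with the explicit form (\ref{eqn:population}) perturbed smoothly by the surrogate, shows that $\kappa$ remains bounded exactly when the optimum of $R_{L,\xi}$ is not attained at the origin, which is guaranteed by $\epsilon<\sqrt\pi\|\theta_0\|_2/(\sqrt 2\,v)$. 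This establishes $\theta^*\neq 0$.

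Next, the basic inequality $\widehat R_{L,\xi}(\widehat\theta_\xi,\epsilon)+\lambda\|\widehat\theta_\xi\|_1\le \widehat R_{L,\xi}(\theta^*,\epsilon)+\lambda\|\theta^*\|_1$, together with convexity of $\widehat R_{L,\xi}$ (valid for $\xi>0$), linearizes to
$$\langle\nabla\widehat R_{L,\xi}(\theta^*,\epsilon),\Delta\rangle+\lambda\bigl(\|\Delta_{S^c}\|_1-\|\Delta_S\|_1\bigr)\le 0,$$
where $\Delta:=\widehat\theta_\xi-\theta^*$. The key probabilistic step is an $\ell_\infty$ bound on the empirical gradient at $\theta^*$: each summand of $\nabla\widehat R_{L,\xi}(\theta^*,\epsilon)$ is a coordinate projection of a shrinkage factor (bounded by $1$) times a Gaussian residual times $x_i$, so a sub-exponential/Bernstein estimate together with a union bound over $d$ coordinates yields
$\|\nabla\widehat R_{L,\xi}(\theta^*,\epsilon)-\nabla R_{L,\xi}(\theta^*,\epsilon)\|_\infty=O_p(v^2\sqrt{(\log d)/n})$. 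Since $\nabla R_{L,\xi}(\theta^*,\epsilon)=0$ by optimality, the assumed lower bound $\lambda/v\ge c\sqrt{(s\log d)/n}\vee(\xi a_n/v^2)$ dominates $2\|\nabla\widehat R_{L,\xi}(\theta^*,\epsilon)\|_\infty$ with probability tending to one.

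Given this gradient bound, the usual argument confines $\Delta$ to the cone $\{\|\Delta_{S^c}\|_1\le 3\|\Delta_S\|_1\}$. On this cone I would invoke a restricted eigenvalue property for $\widehat R_{L,\xi}$: its Hessian near $\theta^*$ equals $2\X^\top\X/n$ plus a smooth, bounded rank-one correction from the attack term, and since $\X^\top\X/n$ satisfies a restricted eigenvalue condition uniformly over $s$-sparse cones for Gaussian design with $\Sigma=I$, I obtain $\|\Delta\|_2^2\lesssim \lambda\|\Delta_S\|_1\le \lambda\sqrt s\,\|\Delta\|_2$, hence $\|\Delta\|_2=O_p(\sqrt s\,\lambda/v)$ and $\|\Delta\|_1=O_p(s\lambda/v)$. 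Under the implicit sparsity $s=O(1)$ this gives $\|\Delta\|_1/\lambda=O_p(1)$; by smoothness of $R_{L,\xi}$ one then obtains $R_{L,\xi}(\widehat\theta_\xi,\epsilon)-R^*\lesssim \|\Delta\|_2^2=o_p(v^2)$, which proves the risk claim.

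The main obstacle I foresee is the gradient concentration step. Although the shrinkage factor $\|\partial l/\partial x\|/\sqrt{\|\partial l/\partial x\|^2+\xi^2}$ is bounded by $1$, its $\theta$-derivative carries a factor of $\xi^{-1}$ on the event that the residual is tiny, so one has to show that this event contributes only a bias of order $\xi/v$ to $\mathbb{E}\nabla\widehat R_{L,\xi}$ while the centered fluctuation remains sub-exponential at rate $v^2\sqrt{(\log d)/n}$. Separating this bias from the concentration piece is what pins down the two branches of the maximum defining the threshold on $\lambda$ (the first controls the fluctuation, the second absorbs the bias) as well as the assumption $\xi/v^2\to 0$ needed for that bias to be asymptotically negligible.
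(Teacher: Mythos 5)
Your overall architecture coincides with the paper's: basic inequality plus convexity to linearize, an $\ell_\infty$ concentration bound on the empirical gradient at $\theta^*$, the cone condition, a restricted-eigenvalue-type curvature bound to get $\|\Delta\|_1=O(\lambda)$, and a final first-/second-order expansion for the risk (the paper's Lemma~\ref{lem:lasso_2} settles for an $O(\|\Delta\|)$ upper bound rather than $O(\|\Delta\|_2^2)$, which suffices since $\lambda/v=o(1)$). Your reading of the two branches of the $\lambda$ threshold (fluctuation vs.\ surrogate bias) and of Proposition~\ref{prop:sigma} for $\theta^*\neq 0$ also matches the paper's intent.

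The gap is in the curvature step, and it is exactly the point the paper flags as the non-standard part of the argument. You assert that the Hessian of $\widehat R_{L,\xi}$ near $\theta^*$ is $2\X^{\top}\X/n$ ``plus a smooth, bounded rank-one correction,'' and then invoke the usual Gaussian RE property. But the adversarial loss contains the term $2\epsilon\|\theta\|\,|x_i^{\top}\theta-y_i|$, which is neither smooth nor a rank-one quadratic perturbation; its second-order contribution along $\Delta=\widehat\theta_\xi-\theta^*$ includes a cross term of the form $2\epsilon\bigl(\|\theta^*+\Delta\|-\|\theta^*\|\bigr)\Delta^{\top}x_i\,\sgn(x_i^{\top}\theta^*-y_i)$, which averages to a quantity of order $\epsilon\|\Delta\|^2$ --- the \emph{same} order as the Gram curvature --- and whose sign is a priori indefinite. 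The paper's Lemma~\ref{lem:lasso} handles this by a sign-by-sign case analysis ($x_i^{\top}\theta^*-y_i\gtrless 0$) and shows that after the empirical quadratic $\frac1n\sum(\Delta^{\top}x_i)^2\approx\|\Delta\|^2$ is entirely spent absorbing this cross term, the surviving strong convexity is $\epsilon^2(1-2/\pi)\|\Delta\|^2>0$, coming from the adversarial penalty itself rather than from the design matrix. (Note also that the paper never needs RE of $\X^{\top}\X/n$ over the cone; it only needs $\max_{i,j}|\widehat\Sigma_{ij}-I_{ij}|\to 0$ combined with the cone condition.) Without this computation your claim $\|\Delta\|_2^2\lesssim\lambda\|\Delta_S\|_1$ is unsupported: if the cross term were adversarially signed it could cancel the Gram curvature entirely, and the fact that $1-2/\pi>0$ is what rescues the bound. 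You would need to supply the analogue of Lemma~\ref{lem:lasso} to close this step.
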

The proof of Theorem \ref{thm:lasso} is similar to the traditional LASSO analysis as in \cite{bickel2009simultaneous,belloni2009least} but with an important modification. In the literature, the Hessian of the standard training loss, i.e., $\X^{\top}\X/n$, is usually required to satisfy the so-called restricted eigenvalue condition. However, in adversarial setting, the Hessian changes as $\theta$, so it takes more steps to verify the above condition.

	 \begin{remark}\label{remark}
	 Theorem \ref{thm:lasso} shows the effectiveness of LASSO in sparse linear model and it performs better then the case without LASSO. Note that the estimation consistency still holds for low-dimensional dense model, if $d$ and $n$ satisfies $(d\log d)/n\rightarrow 0$. But, to ensure that LASSO improves the performance in this case, $\lambda$ should be carefully tuned.
	 \end{remark}

We conduct some empirical study to explore the potential applications of LASSO in the adversarial training of neural networks. Similar experiments under large-sample regime can be found for adversarial training \cite{sinha2018certifying,wang2019convergence,raghunathan2019adversarial}, and pruning in adversarial training \cite{ye2019adversarial,li2020towards}.
		
	\begin{figure}[!ht]
		\centering\vspace{-0.05in}
		\includegraphics[trim=30 20 0 30,clip,scale=0.38]{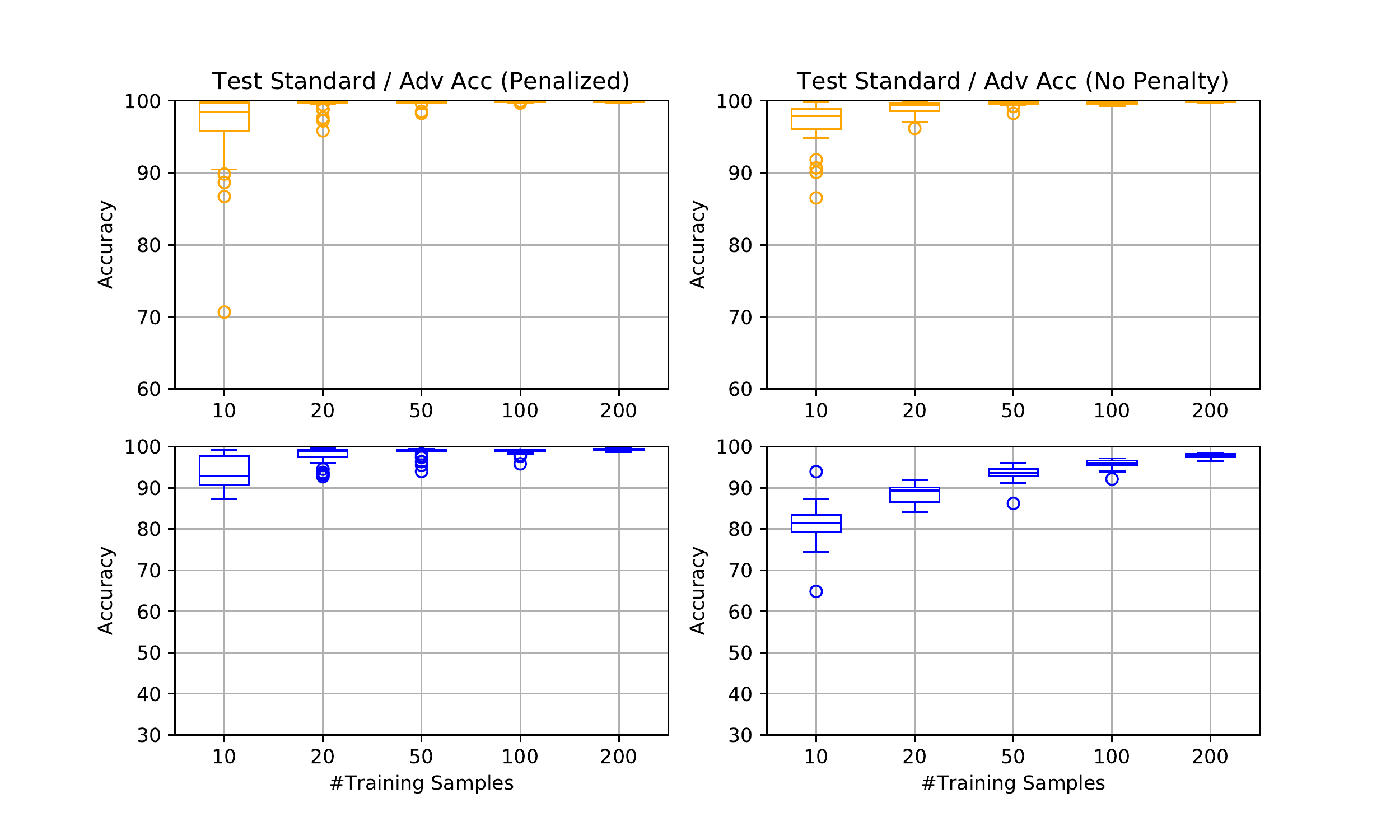}
		\vspace{-0.15in}
		\caption{Comparison on standard (upper) / adversarial (lower) test accuracies between training with/without $\mathcal{L}_1$ penalty under $\mathcal{L}_2$ attack with $\xi=10^{-4}$. Attack strength $\epsilon=3$.}
		\label{fig:l2_xi}
	\end{figure}
		
	\textbf{Numerical experiments} \\
	The program was modified from a repository in Github\footnote{\url{https://github.com/louis2889184/pytorch-adversarial-training}} and library \texttt{Advertorch}. A simple two-layer neural network is constructed with 1024 hidden nodes and ReLU as activation. We use MNIST dataset to distinguish between digits 0 and 1, and randomly select a small number of samples of  0 and 1 from the training dataset to create a high-dimension scenario. The $\mathcal{L}_{2}$ attack level is set to be 3. We trained 2000 epochs to ensure the convergence of the algorithms and repeat the experiment for 30 times to draw a boxplot. After training 2000 epochs, for both $\lambda=0$ (No penalty) and $\lambda=0.001$ (LASSO), the training accuracies for clean data and adversarial data both reach 100\%. The penalty $\lambda=0.001$ was chosen such that the magnitude of penalty is comparable with loss. The results are summarized in Figure \ref{fig:l2_xi}.

	For adversarial accuracy, as shown in Figure \ref{fig:l2_xi}, the results with two different $\lambda$'s are significantly different, where the choice of $\lambda=0.001$ improves the adversarial accuracy compared with $\lambda=0$. As a reference, we also plot the standard accuracy (i.e. prediction accuracy for un-corrupted data), even though the objective function minimized is the (penalized) adversarial training loss. Figure \ref{fig:l2_xi} shows, it approaches 99\% quickly in $n$ for both adversarial training with and without LASSO. 
	
	 We also observe similar results when $\xi=0$, and the details are postponed to Appendix A.

	  We also tried on CIFAR-10 with WideResNet34-10. The $\lambda$ is chosen to ensure that the magnitude of cross entropy loss and the LASSO penalty are comparable. We use all data in the training dataset to conduct this experiment. The results are summarized in Table \ref{tab:cifar10_wide}. Our theorem only concerns the high dimensional case, i.e., small-$n$-large-$d$, however, as showed in Table \ref{tab:cifar10_wide}, both standard and adversarial testing accuracies are still enhanced when using LASSO, in this large-$n$ application (Refer to Remark \ref{remark}).

	\begin{table}[!ht]
	\centering
\begin{tabular}{lll}
\hline Method                & std acc (\%)       & adv acc (\%)        \\\hline
Benchmark             &  84.346(0.355) &	61.760(0.204)  \\
LASSO $10^{-5}$          & 86.568(0.214) & 63.072(0.292) \\\hline
\end{tabular}
\vspace{-0.1in}
  \caption{ Adversarial training in CIFAR-10 using WideResNet34-10 with $\mathcal{L}_2$ attack, $\epsilon=0.5$. }\label{tab:cifar10_wide}
\end{table}

	\section{$\mathcal{L}_{\infty}$ ADVERSARIAL TRAINING}\label{sec:linf}
	In this section, we discuss the adversarial loss and adversarial training under $\mathcal{L}_{\infty}$ attack. Similar as stated in \cite{chen2020more}, the adversarial risk of the linear model becomes
	\begin{equation}
	\begin{split}
	    R_L^{\infty}(\theta,\epsilon)=&\|\theta-\theta_0\|_{\Sigma}^2+\sigma^2+\epsilon^2\|\theta\|_1^2\\
	    &+2\epsilon c_0 \|\theta\|_1\sqrt{\|\theta-\theta_0\|_{\Sigma}^2+\sigma^2 }.
	\end{split}\label{eqn:linf}
	\end{equation}

Below are discussions w.r.t $\mathcal{L}_{\infty}$ adversarial training:

\textbf{Harder to train}\\
From (\ref{eqn:linf}), $R_{L}^{\infty}$ is not differentiable when some element in $\theta$ is zero. Similar as for $\mathcal{L}_2$ attack, we propose to shrink the size of adversarial attack when $R_{L}$ is not differentiable, while a difference is that the shrinkage is applied on each dimension of $x$: for $i=1,...,d$,
	\begin{equation*}
	    [A_{\epsilon,\xi}^{\infty}(f_{\theta},x,y)]_i = \frac{ |[\partial l/\partial x]_i| }{{|[\partial l/\partial x]_i|+\xi }}[A^{\infty}_{\epsilon}(f_{\theta},x,y)]_i.
	\end{equation*}

    A major difference between  $\mathcal{L}_{\infty}$ and $\mathcal{L}_2$ attacks is that, $\mathcal{L}_{\infty}$ attack is more sensitive to $\xi$. For example, if $\theta=(1/d,...,1/d)^{\top}$ and $\epsilon=1/\sqrt{d}$, then $A_{\epsilon}^{\infty}$ becomes $(\epsilon,...,\epsilon)^{\top}$ whose $\mathcal{L}_2$ norm is 1, while $A_{\epsilon,\xi}^{\infty}$ is $(\epsilon/(1+d\xi),...,\epsilon/(1+d\xi))^{\top}$, whose $\mathcal{L}_2$ norm quickly shrinks to zero if $\xi d\rightarrow\infty$. As a result, it is necessary to require that $\xi=o(1/d)$ to avoid overshrinkage of the $\mathcal{L}_{\infty}$ attack. However, as discussed in previous sections, a smaller $\xi$ requires smaller learning rate and more training iterations, thus training under $\mathcal{L}_{\infty}$ attack is more difficult.
    
   \textbf{Effect of interpolation}\\
   In high-dimensional case, adversarial training still suffers from data interpolation: when $\epsilon=O(1/\sqrt{d})$ and $d/n\rightarrow \infty$, the minimal adversarial training loss converges to zero, while the population adversarial loss converges to $v^2$ (recall that $v^2=\|\theta_0\|^2+\sigma^2$). Similar as for $\mathcal{L}_2$ attack, we add LASSO in the $\mathcal{L}_{\infty}$ adversarial training in MNIST and CIFAR-10. The results are summarized in Figure \ref{fig:linf_xi} and \ref{fig:linf} in appendix, as well as Table \ref{tab:cifar10_linf}. For both datasets, LASSO improves both standard and adversarial testing accuracies.
		\begin{table}[!ht]
	\centering
\begin{tabular}{lll}
\hline Method                & std acc (\%)       & adv acc (\%)        \\\hline
Benchmark             & 82.870(0.131) & 50.338(0.315) \\
LASSO $10^{-5}$          & 84.800(0.282) & 54.260(0.376)  \\\hline
\end{tabular}
\vspace{-0.1in}
  \caption{ Adversarial training in CIFAR-10 using WideResNet34-10 with $\mathcal{L}_{\infty}$ attack, $\epsilon=8/255$. }\label{tab:cifar10_linf}
\end{table}

	\begin{remark}
	From the aspect of formulation, $\mathcal{L}_{\infty}$ adversarial loss and LASSO has overlapped effect as both introduce $\mathcal{L}_1$ penalty effect into the loss function, see (\ref{eqn:linf}). However, LASSO and $\mathcal{L}_{\infty}$ are designed for different purposes. From the aspect of loss landscape in deep learning, LASSO does not intend to change the loss landscape near the global minima as the penalty term goes to zero asymptotically, i.e., any global optimum for standard loss are optimum for LASSO problem given infinite training data. On the other hand, for adversarial robustness under $\mathcal{L}_{\infty}$ attack, it aims to select the certain global minima such that the prediction is robust in the nearby region of training samples, and not all minimizers of standard loss are robust to adversarial attack. From this aspect, they can be applied simultaneously. We refer readers to \cite{guo2020connections} for more discussion. 
\end{remark}	
	\section{CONCLUSION AND FUTURE WORKS}
This paper studies the convergence properties of adversarial training in linear models and two-layer neural networks (with lazy training). In the low-dimensional regime, using adversarial training with surrogate attack, the adversarial risk of the trained model converges to the minimal value. In a high-dimensional regime, data interpolation causes the adversarial training loss close enough to zero, while the generalization is poor. One potential solution is to add $\mathcal{L}_1$ penalty in the adversarial training, which results in both consistent adversarial estimate and risk in high dimensional sparse models.
    
    There are several future directions. First, we may focus on classification tasks as a future work. In regression, the adversarially robust model generally outputs smaller-in-magnitude predictions, which is not practical in classification. One may be interested in how adversarial training works in classification. Second, the scenarios we consider are $d/n\rightarrow 0$ and $\infty$, and one can consider the linear dimensionality case, i.e. $d/n\rightarrow c$, as a future direction. Finally, the non-smoothness issue happens to the adversarial loss and the penalty term (e.g., LASSO, \citealp{wang2019improving,wu2020revisiting}), so there is potential to improve further the gradient quality of penalized adversarial training via smoothing the penalty function.

\section*{Acknowledgements}
Dr. Song’s research activities are partially supported by National Science Foundation DMS-1811812.

	\bibliographystyle{asa}
	\bibliography{VaRHDIS}

\newpage
\onecolumn
	\appendix
		The structure of appendix is as follows.  In Section A, we provide more numerical experiments. Section B presents the proof of Theorem \ref{thm:opt}. Section C presents the proof of Theorem \ref{thm:low_nonlinear}, \ref{thm:low_relu},  \ref{thm:high_network} and \ref{thm:high_relu}. Section D shows the proof for Lemma \ref{lem:ridge} and Theorem \ref{thm:high_linear}. And finally Section E is for high-dimensional sparse model (Theorem \ref{thm:lasso}).

	\section{More numerical results}
	\subsection{Low-dimensional linear models}\label{sec:low_linear}
	To verify Theorem \ref{thm:opt} and the statement that ``adversarial training hurts standard testing performance", we run a linear model this experiment. The model is set to be $d=10$ with $\theta_{0,i}=1$ for $i=1,...,10$. The covariance $\Sigma$ is $I$, and for noise, $\sigma^2=1$.
	
	For adversarial training, we use zero initialization, $\xi=0.1$, and $\eta=0.01$. We repeat 100 times to get mean and standard deviation. The results are summarized in Figure \ref{fig:low_linear}. From Figure \ref{fig:low_linear}, one can find that the adversarial testing loss is closed to $R^*(\epsilon)$, while the standard testing loss is away from 1 when $\epsilon=0.5$.
	\begin{figure}[!ht]
		\centering
		\includegraphics[trim=100 0 0 0,clip,scale=0.43]{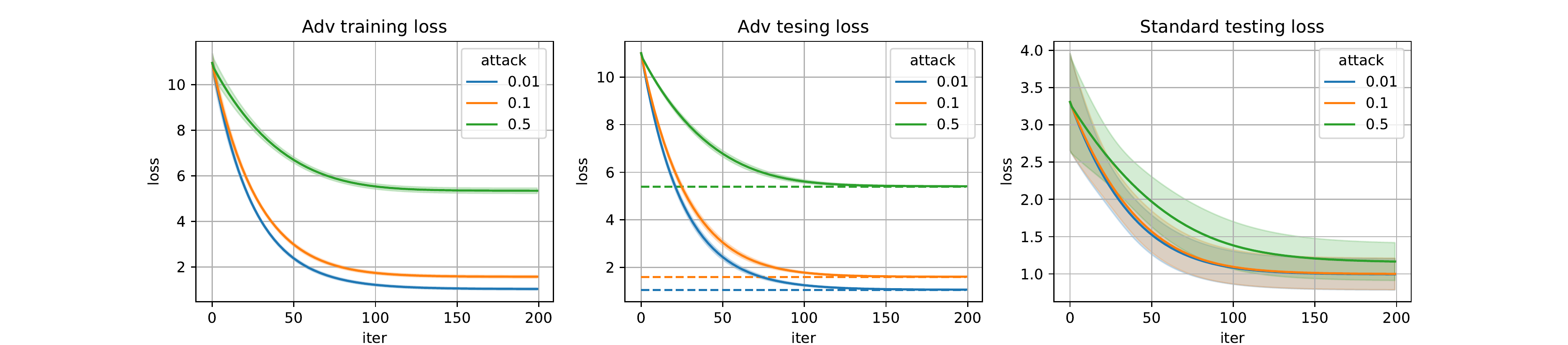}
		\caption{Adversarial training in linear regression under low data dimension. Dashed line in the middle panel: $R^*(\epsilon)$.}
		\label{fig:low_linear}
	\end{figure}
	
	\subsection{Low-dimensional two-Layer networks}\label{sec:low_nn}
	Here we present a numerical experiment to verify our results in Theorems \ref{thm:low_nonlinear}, and  \ref{thm:low_relu} on lazy training. 
	
	We take $d=3$,  $\theta_0=(1/\sqrt{3},1/\sqrt{3},1/\sqrt{3})$, $\Sigma= I$, $\sigma^2=1$ for the data generation model, and $n=100$. For the two-layer neural network, we take $h=50$, and $a_j\sim \text{Unif}[-1,1]$. For a network in Theorem \ref{thm:low_nonlinear}, we take $\phi(x)=1/(1+e^{-x})-1/2$ and $\eta=0.2$. To match the same $\phi'(0)$ and learning rate for all three models, we take $\phi(x)=x/4$ and $\eta=0.2$ for linear network, and $\phi(x)=x1\{x>0\}/4$ and $\eta=0.8$ for ReLU network. For ReLU, we adjust negative $a_j$'s so that $\|a^+\|=\|a^-\|$.
	
	For initialization, we take $\delta=0.5$. For nonlinear networks, we use fast gradient method to approximate $A_{\epsilon,\xi}$ for both training and testing, and for surrogate loss, we take $\xi=0.01$. We run the optimization for 4000 iterations, and repeat 50 times to get mean and standard deviation for the (population) adversarial risk. To estimate the adversarial risk, we randomly simulate 10000 samples and calculate the sample adversarial loss.
	
	The results are shown in Figure \ref{fig:low}. Since we match $\phi'(0)$ and learning rate for all the three networks, the adversarial risks decrease in the same speed and all converges to $R^*(\epsilon)$. However, due to the existence of $\xi$, linear network cannot reach an adversarial testing loss as $R^*(\epsilon)$. For finite $h$, sigmoid networks and ReLU networks have higher adversarial testing loss than linear networks. 
	\begin{figure}[!ht]
		\centering
		\includegraphics[trim=100 0 0 0,clip,scale=0.43]{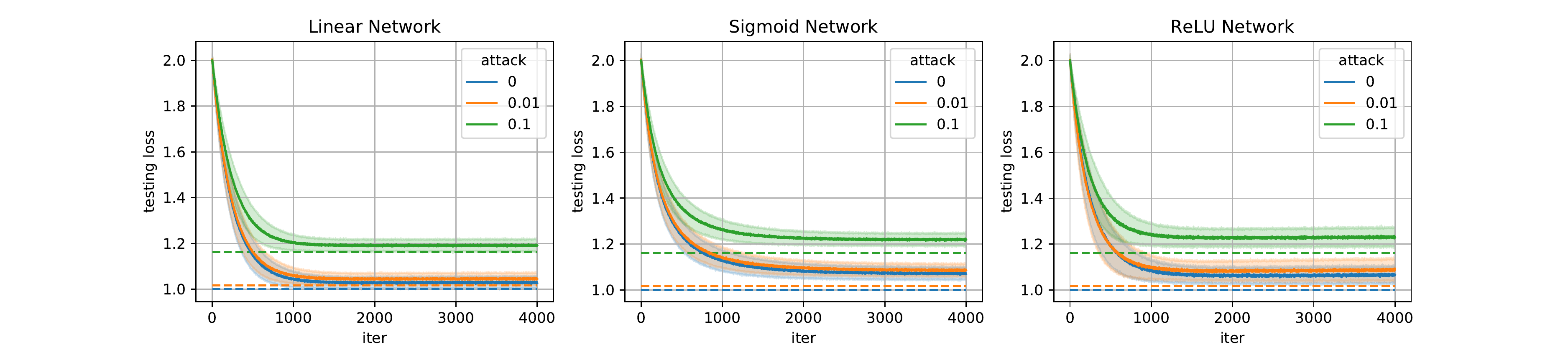}
		\caption{Adversarial training in three two-Layer neural networks (with lazy training) under low data dimension. Dashed line : $R^*(\epsilon)$.}
		\label{fig:low}
	\end{figure}
	
	\subsection{CIFAR-10}
	
	As mentioned in the main text, we conduct some additional experiments. In the first experiment, we run 200 epochs of adversarial training with $\mathcal{L}_2$ attack $\epsilon=3.0$. The initial learning rate is 0.1, and multiplies 0.1 at the 100th and 150th epoch. The value of $\xi$ is initialized as 0.001 and changes according to the learning rate. The first 74 epochs are the same as those in Figure \ref{fig:cifar10}, and we display the remaining epochs in Figure \ref{fig:long}. From Figure \ref{fig:long}, the performance of $\xi>0$ is still better than the case of $\xi=0$ for both adversarial and standard testing accuracy.
	\begin{figure}
	    \centering
	    \includegraphics[scale=0.6]{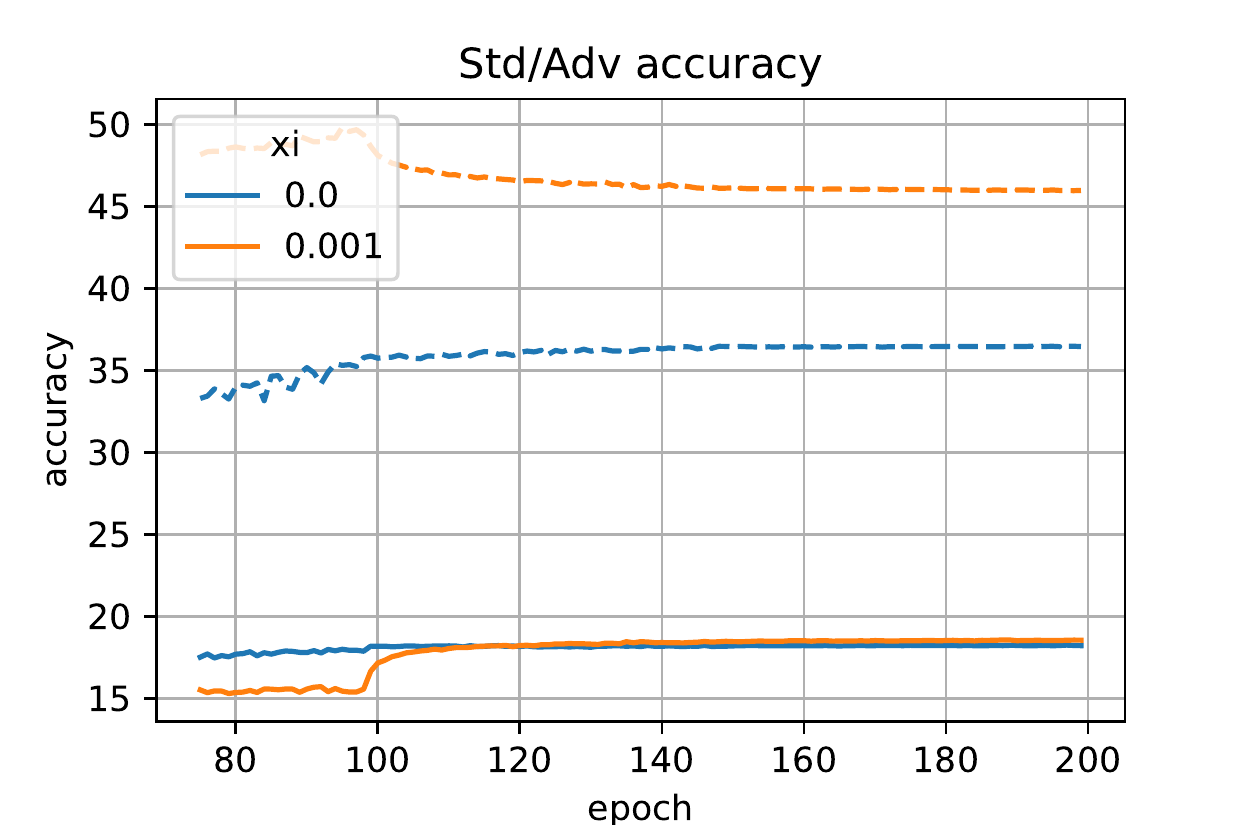}
	    \caption{Adversarial training in CIFAR-10.}
	    \label{fig:long}
	\end{figure}
	
	In the second experiment, besides the 100 epochs of adversarial training, we use standard training to first train 50 epochs. The final standard/adversarial testing accuracy for $\xi=0$ is 33.347(1.257)/17.285(0.241), and the one w.r.t. $\xi=0.001$ becomes 10.313(0.532)/29.762(2.767), while training only in 100 epochs of adversarial training with $\xi=0$ is 33.637(0.724)/17.806(0.241). To conclude, using additional standard training at the beginning may lead to a volatile training process when $\epsilon$ is large.

In the last experiment, we evaluate the attack difference. We use WideResNet34-1 in CIFAR-10 for $\mathcal{L}_2$ PGD-5 attack of strength 0.5 and 3.0 and calculate the attack difference. Attack step size is taken as $2\epsilon/k$ for PGD-$k$. The results are shown in Figure \ref{fig:my_label}. When $\epsilon=0.5$, the attack difference is around 0.16 to 0.17. When $\epsilon=3.0$, the attack difference slowly increases to 1.5 in the end. As a result, a larger attack leads to a larger attack difference, which indicates that potential improvements should be considered to stabilize the training process when $\epsilon$ is large. A similar observation can be found when using PGD-20. 
Despite of that it is difficult to mathematically characterize all non-differentiable points for DNN adversarial loss, the above simulations justify the importance of accommodating this non-differentiability issue, especially for large $\epsilon$.
\begin{figure}
  \begin{center}
    \includegraphics[scale=0.6]{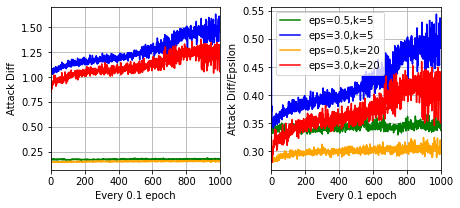}
  \end{center}
\vspace{-0.1in}
  
  \caption{Attack difference in $\mathcal{L}_2$ adversarial training. Both the attack difference and ``relative" attack difference (dividing $\epsilon$) for $\epsilon=3.0$ are larger than the ones in $\epsilon=0.5$. }\label{fig:my_label}
\end{figure}

	\subsection{High-dimensional dense models}
	\subsubsection{Linear model}
	Besides the experiment in Figure \ref{fig:1000}, we further run some experiments with larger $d$  and $\xi=0$.
	\begin{figure}[!ht]
		\centering
		\includegraphics[trim=100 0 400 0,clip,scale=0.43]{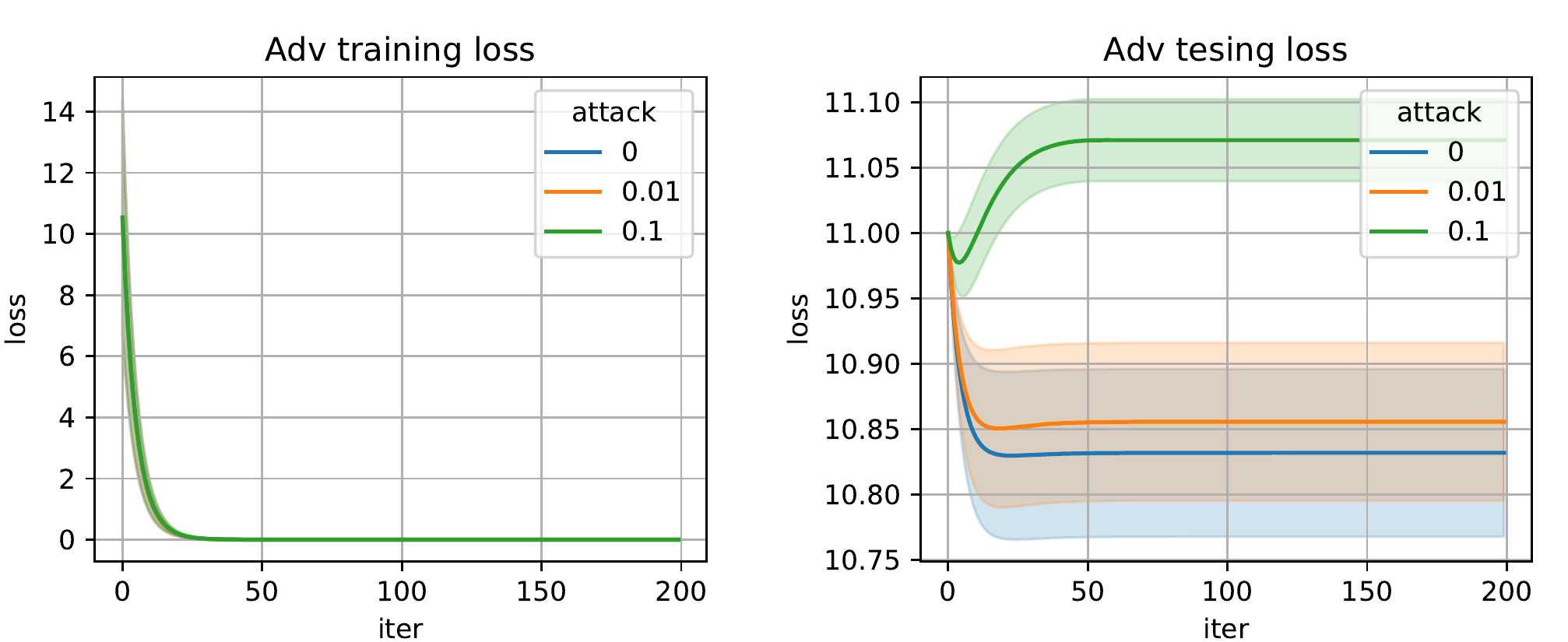}
		\caption{Adversarial training in high-dimensional setup. $\xi=0$}
		\label{fig:xi}
	\end{figure}
	Figure \ref{fig:xi} shows the experiment with the same setting as in Figure \ref{fig:1000} but with $\xi=0$. When $\theta=0$, since the adversarial training loss is not differentiable, we does not impose attack. From Figure \ref{fig:xi}, the adversarial training / testing loss have similar performance as when $\xi=0.1$, while the difference between the gradients of adversarial training and standard training becomes larger than the case when $\xi>0$. To explain this, since $\|\theta_{\xi}^{(t)}\|\rightarrow 0$ when $d/n\rightarrow\infty$, the introduction of positive $\xi$ will leads to a surrogate attack with strength almost zero.
	\begin{figure}[!ht]
		\centering
		\includegraphics[trim=100 0 400 0,clip,scale=0.43]{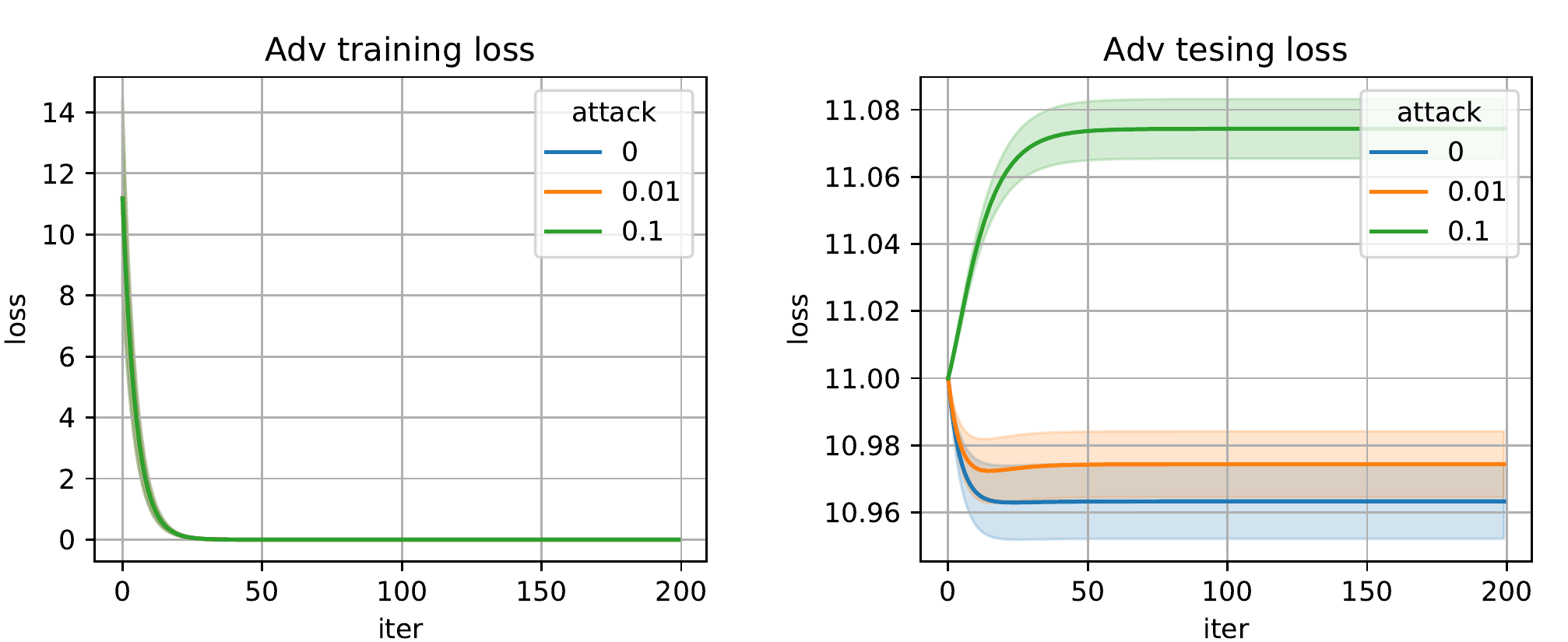}
		\caption{Adversarial training in high-dimensional setup. $\xi=0.5$, $d=5000$}
		\label{fig:5000_001}
	\end{figure}
	\begin{figure}[!ht]
		\centering
		\includegraphics[trim=100 0 400 0,clip,scale=0.43]{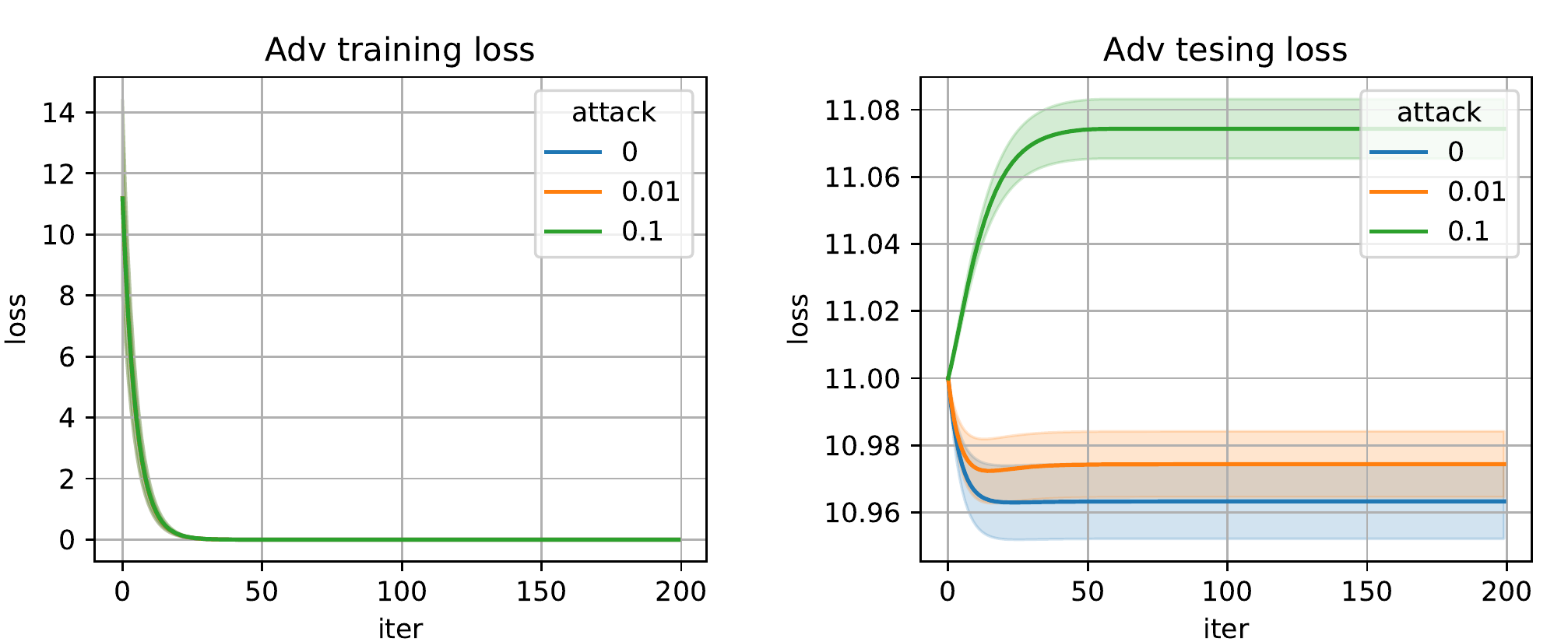}
		\caption{Adversarial training in high-dimensional setup. $\xi=0$, $d=5000$}
		\label{fig:5000}
	\end{figure}
	
	Figures \ref{fig:5000_001} and  \ref{fig:5000} show the experiment with the same setting as in Figure \ref{fig:1000} but with $\eta=0.0002$ and $d=5000$. In addition to the adversarial training / testing loss, from Figures \ref{fig:5000_001} and  \ref{fig:5000}, one can observe that the difference between the gradients of adversarial training and standard training gets smaller. 
	
	\subsubsection{Two-layer neural networks}
	Similar as in Section \ref{sec:low_nn}, we conduct experiment on neural network with three different activation functions. The data generation follows those for Figure \ref{fig:1000} with $\sigma^2=0.1$, $d=1000$, $n=20$. For neural networks, we take $h=50$, and $a_j\sim \text{Unif[-1,1]}$. For ReLU, we adjust negative $a_j$'s so that $\|a^+\|=\|a^-\|$. Initialization takes $\delta$ such that $h^{\delta}=d^{0.6}$. The learning rate is taken as 0.16 for linear and sigmoid networks, and 0.64 for ReLU network so that the convergence pattern is clear in the first 100 iterations.  For adversarial surrogate loss, we take $\xi=0.1$. 
	
	The results are summarized in Figure \ref{fig:high_train} for adversarial training loss and \ref{fig:high_test} for adversarial testing loss. For all three neural networks, the adversarial training loss decreases as fast as standard loss, while the adversarial testing loss are as higher than 1.

	\begin{figure}[!ht]
		\centering
		\includegraphics[trim=100 0 0 0,clip,scale=0.43]{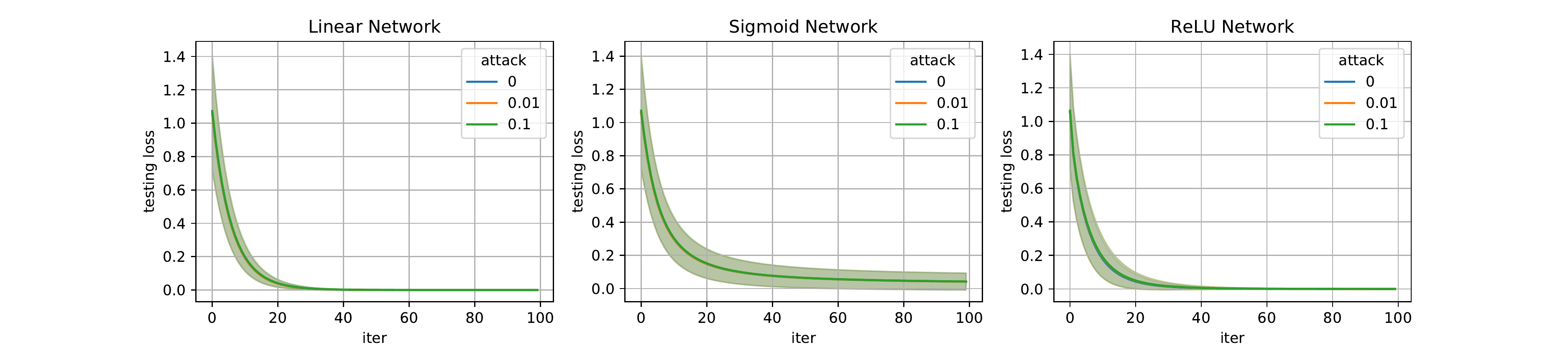}
		\caption{Adversarial training loss of adversarial training in three two-Layer neural networks (with lazy training) under high data dimension.}
		\label{fig:high_train}
	\end{figure}
	\begin{figure}[!ht]
		\centering
		\includegraphics[trim=100 0 0 0,clip,scale=0.43]{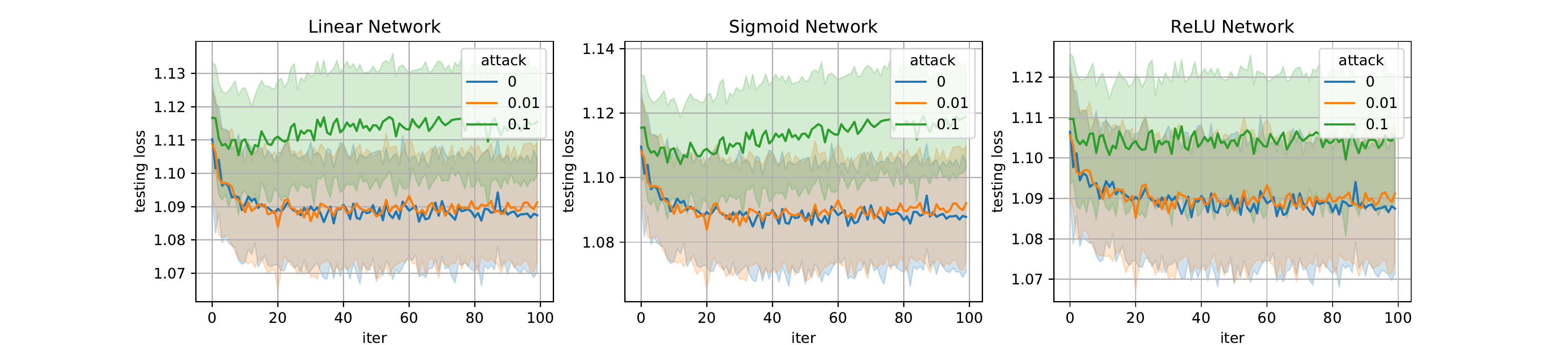}
		\caption{Adversarial testing loss of adversarial training in three two-Layer neural networks (with lazy training) under high data dimension.}
		\label{fig:high_test}
	\end{figure}
	
	\subsection{High-dimensional sparse models}
	In addition to the $\mathcal{L}_{2}$ attack with $\xi=0$ as in Figure \ref{fig:l2_xi}, we also conduct experiments of $\mathcal{L}_{\infty}$ attack with $\xi=0$ and the two attacks with $\xi=1e-04$ below. When $\xi>0$, the adversarial testing accuracy is based on the original attack, i.e. not the surrogate one. As mentioned in Figure \ref{fig:xi}, under high-dimensional setup, a positive $\xi$ leads to the adversarial training getting closer to standard training when $n/d\rightarrow 0$. As a result, we choose a small enough $\xi$ such that the adversarial testing performance is closed to the case when $\xi=0$. Similar as Figure \ref{fig:l2_xi}, all the experiments in Figures \ref{fig:l2}, \ref{fig:linf_xi}, and \ref{fig:linf} shows a better performance using LASSO. 
	
	\begin{figure}[!ht]
		\centering
		\includegraphics[scale=0.5]{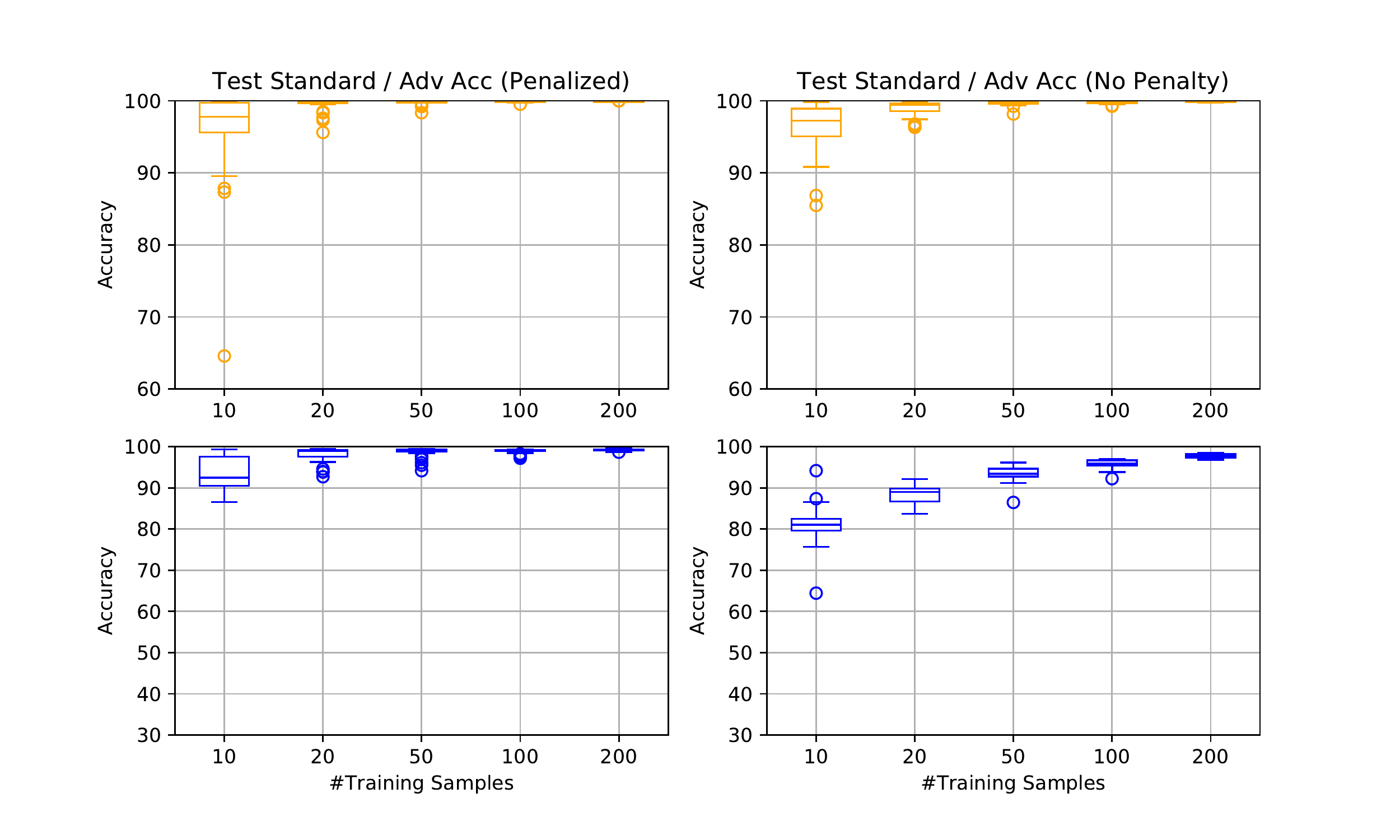}
		\caption{Comparison on standard (upper)/adversarial (lower) test accuracies between training with/without $\mathcal{L}_1$ penalty under $\mathcal{L}_2$ attack. Attack strength $\epsilon=3$.}
		\label{fig:l2}
	\end{figure}
			\begin{figure*}[!ht]
		\centering
		\includegraphics[scale=0.5]{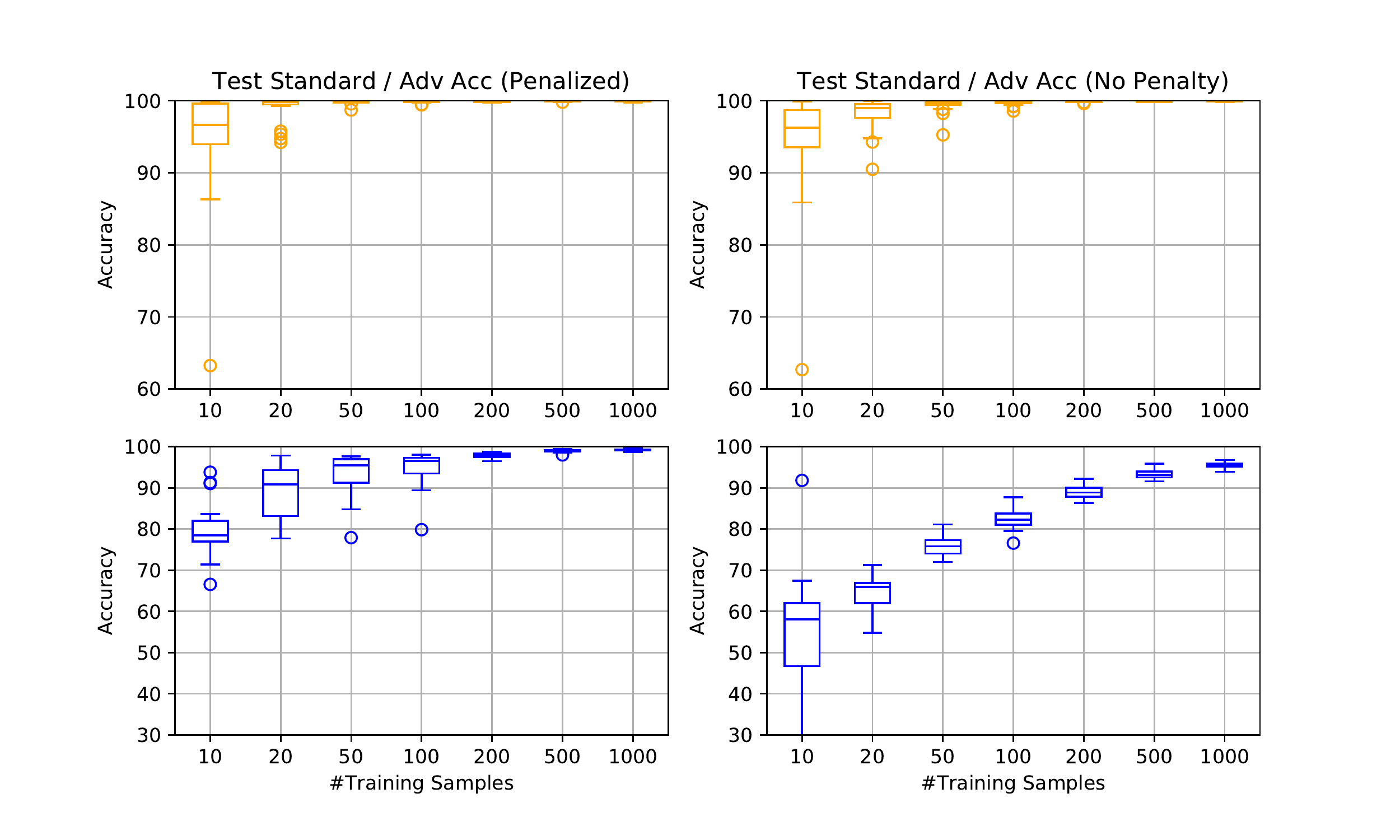}
		\caption{Comparison on standard (upper)/adversarial (lower) test accuracies between training with/without $\mathcal{L}_1$ penalty under $\mathcal{L}_\infty$ attack with $\xi=1e-04$. Attack strength $\epsilon=0.3$.}
		\label{fig:linf_xi}
	\end{figure*}

	\begin{figure}[!ht]
		\centering
		\vspace{-0.15in}
		\includegraphics[trim=0 0 0 20,clip,scale=0.5]{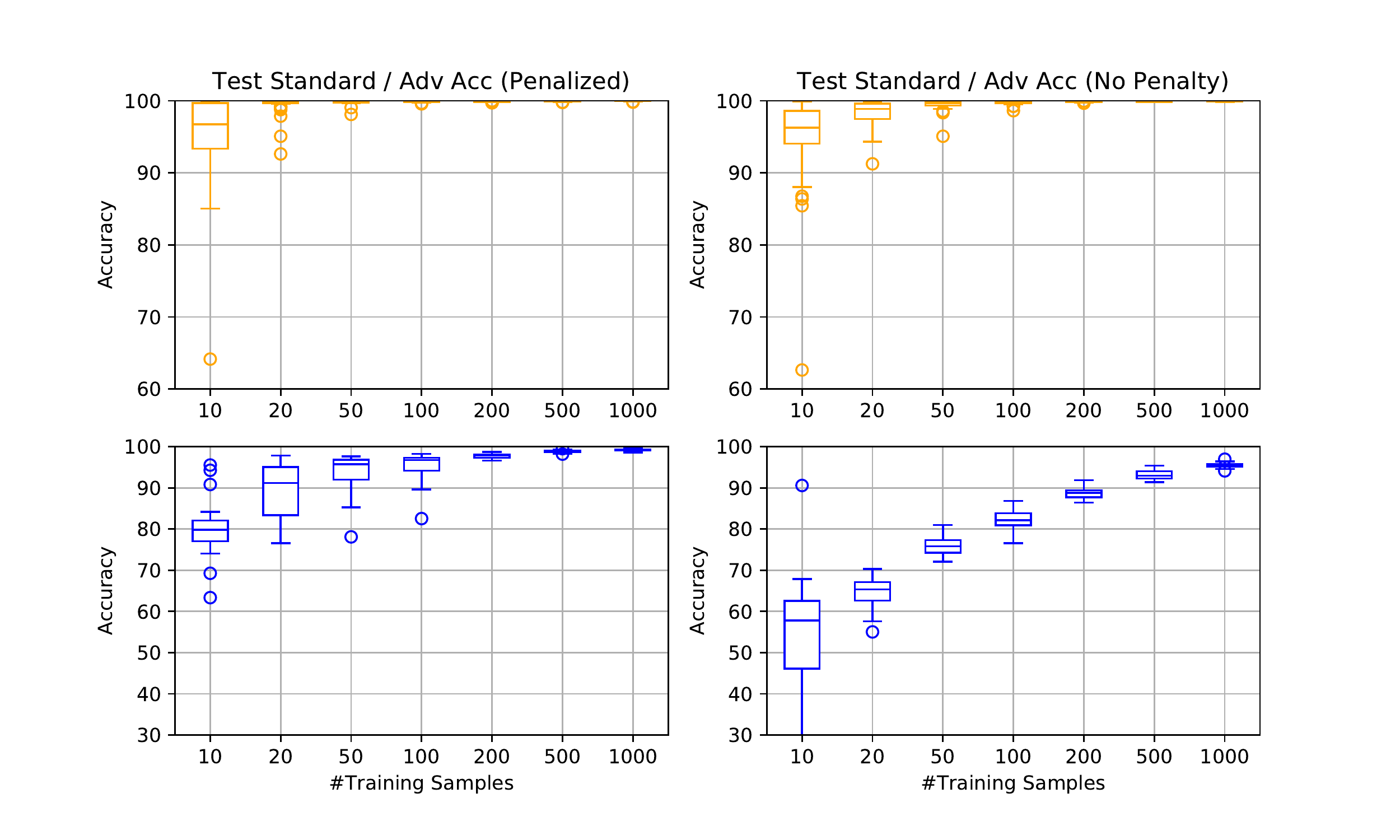}
		\vspace{-0.2in}\caption{Comparison on standard (upper)/adversarial (lower) test accuracy between training with/without $\mathcal{L}_1$ penalty under $\mathcal{L}_{\infty}$ attack.} 
		\label{fig:linf}
	\end{figure}

\section{Proofs for low dimension linear model}
	In the proof of Theorem \ref{thm:opt}, we assume $v^2$ is a constant number, which implies that $\|\theta^*\|\leq B_0$ for some $B_0$. After investigating the results for bounded $v$, we then use a trick to extend to the case when $v$ is changing.

	\begin{lemma}\label{lem:gradient}
		Under the model in (\ref{eqn:model}), when $\|\theta_0\|\leq B_0$  for some constant $B_0>0$, there exists some  function $g_1(\delta,\xi,d,n)$ such that
		\begin{equation*}
		P\left(\sup_{\|\theta\|\leq B_0}\left\|\triangledown \widehat R_{L,\xi}(\theta,\epsilon)-\triangledown R_{L,\xi}(\theta,\epsilon)\right\|>\delta\right)\leq e^{-g_1(\delta,\xi,d,n)},
		\end{equation*}
		where $\triangledown \widehat{R}_{{L,\xi}}(\theta,\epsilon)$ and $\triangledown R_{{L,\xi}}(\theta,\epsilon)$ denote the gradient of $\theta$ \textbf{after fixing} $A_{\epsilon,\xi}$ (so it is not $\partial \widehat{R}_{{L,\xi}}/\partial \theta$ or $\partial R_{{L,\xi}}/\partial \theta$ if $\xi>0$). 
	\end{lemma}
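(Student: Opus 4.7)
The quantity $\triangledown \widehat{R}_{L,\xi}(\theta,\epsilon)$ is an empirical average of $n$ i.i.d.\ summands, and $\triangledown R_{L,\xi}(\theta,\epsilon)$ is its expectation, so at any fixed $\theta$ this is a standard concentration problem. The issue is to make the bound \emph{uniform} over the ball $\{\|\theta\|\le B_0\}$. My plan is therefore a two-step argument: (i) pointwise sub-exponential concentration at each $\theta$, then (ii) an $\varepsilon$-net / Lipschitz discretization over the ball.

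\textbf{Step 1: explicit form of the summand.} Because the attack is held fixed when $\triangledown$ is taken, at each sample the contribution to $\triangledown \widehat{R}_{L,\xi}$ is
\[
2\bigl(\theta^{\top}(x_i+A_{\epsilon,\xi}(\theta,x_i,y_i))-y_i\bigr)(x_i+A_{\epsilon,\xi}(\theta,x_i,y_i)).
\]
Using the explicit linear-model form of $A_{\epsilon}$ (proportional to $\theta/\|\theta\|$ with sign $\sgn(\theta^{\top}x_i-y_i)$) and then the scalar shrinkage defining $A_{\epsilon,\xi}$, one can write each coordinate of this summand as a polynomial-in-$\theta$ functional of the Gaussian pair $(x_i,y_i)$, dampened by a bounded smooth factor whose Lipschitz constant in $\theta$ depends on $\xi$. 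In particular $\|A_{\epsilon,\xi}\|\le\epsilon$, so each coordinate of the summand is a product of two sub-Gaussian variables with parameters uniformly controlled by $B_0,\epsilon,\|\theta_0\|,\sigma$, hence is sub-exponential with constants independent of $\xi$.

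\textbf{Step 2: pointwise concentration.} For a fixed $\theta$, I would apply Bernstein's inequality coordinate-wise (there are $d$ coordinates), so that
\[
P\bigl(\|\triangledown\widehat{R}_{L,\xi}(\theta,\epsilon)-\triangledown R_{L,\xi}(\theta,\epsilon)\|>\delta/2\bigr)\le 2d\exp(-c n\min\{\delta^2/d,\delta/\sqrt{d}\}),
\]
for some constant $c$ depending only on $B_0,\epsilon,\|\theta_0\|,\sigma$. A thin-tail truncation of $x_i$ at level $O(\sqrt{\log n})$ (contributing an extra $n^{-c'}$ to the failure probability) lets me replace unbounded Gaussians with bounded variables if Bernstein is awkward.

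\textbf{Step 3: $\varepsilon$-net and Lipschitz control in $\theta$.} I would cover $\{\|\theta\|\le B_0\}$ with an $r$-net $\mathcal{N}$ of cardinality $(3B_0/r)^d$, apply Step 2 with a union bound, and then control oscillation of $\theta\mapsto\triangledown\widehat{R}_{L,\xi}(\theta,\epsilon)$ within each cell. Differentiating the summand in $\theta$ produces three kinds of terms: (a) polynomial terms in $x_i,\theta$, (b) the Jacobian of the direction $\theta/\|\theta\|$, bounded by $1/\|\theta\|$, and (c) the Jacobian of the shrinkage factor $2|\theta^{\top}x_i-y_i|\|\theta\|/\sqrt{4(\theta^{\top}x_i-y_i)^2\|\theta\|^2+\xi^2}$, whose worst-case Lipschitz size is $O(1/\xi)$. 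On the truncated event $\|x_i\|=O(\sqrt{d\log n})$, the overall Lipschitz constant of the empirical gradient in $\theta$ is bounded by $M(\xi,d,n):=C(B_0,\epsilon,\|\theta_0\|,\sigma)\,d\log n/\xi$. Picking $r=\delta/(4M(\xi,d,n))$ ensures the oscillation within each cell is at most $\delta/4$, and combining with Step 2 yields
\[
g_1(\delta,\xi,d,n)\;\asymp\;\min\!\Bigl\{\tfrac{n\delta^2}{d},\tfrac{n\delta}{\sqrt{d}}\Bigr\}\;-\;d\log\!\Bigl(\tfrac{B_0 d\log n}{\xi\,\delta}\Bigr)\;-\;O(\log n).
\]

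\textbf{Main obstacle.} The only genuinely tricky part is Step 3(c): the shrinkage factor is the source of all $\xi$-dependence and, via the $1/\|\theta\|$ Jacobian in (b), also carries a $1/\|\theta\|$-type singularity near the origin. I would handle the latter by splitting the ball into $\|\theta\|\le\xi$ and $\|\theta\|>\xi$; on the inner region the attack $A_{\epsilon,\xi}$ is itself $O(\|\theta\|/\xi)$-small so the summand is directly $O(\|\theta\|)$ and one does not need a Lipschitz bound, while on the outer region the $1/\|\theta\|$ factor is harmless. Everything else is a routine net-plus-Bernstein exercise, and the resulting $g_1$ is exactly of the order needed to feed into the choice $\xi=v^2 d/\sqrt{n}\log n$ used in Theorem~\ref{thm:opt}.
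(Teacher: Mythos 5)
Your proposal is correct and follows essentially the same route as the paper's proof: pointwise Bernstein concentration at each fixed $\theta$, a covering net of the ball $\{\|\theta\|\le B_0\}$ with a union bound over its $O((B_0/r)^d)$ centers, and an oscillation/Lipschitz bound within each cell whose constant scales like $1/\xi$ (the paper carries this out term by term as $A_1,\dots,A_6$, treating the $\sgn(x_i^{\top}\theta-y_i)$ flips by counting small residuals rather than by differentiating the $\xi$-smoothed factor, but the effect is the same). The paper likewise leaves $g_1$ implicit, so your explicit candidate form is a harmless refinement.
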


	\begin{proof}[Proof of Lemma \ref{lem:gradient}]
		Assume $\theta\in B(0,B_0)$. For any sample $i$, the gradient of $\theta$ on surrogate adversarial loss is (we ignore the constant multiplier of $\xi^2/|x_i^{\top}\theta-y_i|^2$)
		\begin{eqnarray*}
			&&2x_i(x_i^{\top}\theta-y_i)+2\epsilon \frac{\theta}{\sqrt{\|\theta\|^2+\xi^2/|x_i^{\top}\theta-y_i|^2}}|x_i^{\top}\theta-y_i|\\
			&&+2\epsilon x_i\frac{\|\theta\|^2}{\sqrt{ \|\theta\|^2+\xi^2/|x_i^{\top}\theta-y_i|^2 }}\sgn(x_i^{\top}\theta-y_i)+2 \epsilon^2\theta\frac{\|\theta\|^2}{\|\theta\|^2+\xi^2/|x_i^{\top}\theta-y_i|^2}.
		\end{eqnarray*}
		Therefore, by Bernstein inequality, for any fixed $\theta$,
		\begin{equation*}
		P\left( \left\|\triangledown \widehat R_{L,\xi}(\theta,\epsilon)-\triangledown R_{L,\xi}(\theta,\epsilon) \right\|>\delta\right)\leq e^{-c_1 n \delta^2/d}.
		\end{equation*}
		Our aim is to figure out a bound for $\sup_{\|\theta\|\leq B_0}\left\|\triangledown \widehat R_{L,\xi}(\theta,\epsilon)-\triangledown R_{L,\xi}(\theta,\epsilon)\right\|$, thus we consider the following decomposition:
		\begin{eqnarray*}
		\left\|\triangledown \widehat R_{L,\xi}(\theta,\epsilon)-\triangledown R_{L,\xi}(\theta,\epsilon) \right\|&\leq& \left\|\triangledown \widehat R_{L,\xi}(\theta,\epsilon)-\widehat\triangledown R_{L,\xi}(\theta_k,\epsilon) \right\|\\
		&&+\left\|\triangledown R_{L,\xi}(\theta,\epsilon)-\triangledown R_{L,\xi}(\theta_k,\epsilon) \right\|\\
		&&+\left\|\triangledown \widehat R_{L,\xi}(\theta_k,\epsilon)-\triangledown R_{L,\xi}(\theta_k,\epsilon) \right\|,
		\end{eqnarray*}
		where $\theta_k$ is an element in the fixed sequence $\{\theta_j\}_{j=1,...}$. Now we first introduce how to design $\{\theta_j\}_{j=1,...}$. For the ball $B(0,B_0)$, we use balls with radius $1/M$ to cover it. Then there are total $c_dB_0^dM^{d}$ balls for some constant $c_d$ which only depends on $d$. Denote $\theta_k$ as the center of the $k$th ball and thus we obtain $\{\theta_j\}_{j=1,...}$. The worst case among the $c_dM^d$ centers of balls satisfies
		\begin{eqnarray*}
			P\left(\sup_{k}\left\|\triangledown \widehat R_{L,\xi}(\theta_k,\epsilon)-\triangledown R_{L,\xi}(\theta_k,\epsilon) \right\|>\delta\right)\leq c_dB_0^dM^de^{-c_1n\delta^2/d}.
		\end{eqnarray*}
		For any $\theta\in B(0,B_0)$, the distance from $\theta$ to its nearest $\theta_k$ is at most $1/M$, thus there exists some constant $c_2$ such that
		\begin{eqnarray*}
			\inf_k\left\|\triangledown  R_{L,\xi}(\theta_k,\epsilon)-\triangledown R_{L,\xi}(\theta,\epsilon) \right\|\leq  c_2/ M\xi.
		\end{eqnarray*}
		In terms of $\triangledown  \widehat R_{L,\xi}(\theta_k,\epsilon)-\triangledown \widehat R_{L,\xi}(\theta,\epsilon)$, we have
		\begin{eqnarray*}
		&&\triangledown  \widehat R_{L,\xi}(\theta_k,\epsilon)-\triangledown \widehat R_{L,\xi}(\theta,\epsilon)\\
		&=&\frac{2}{n}\sum_{i=1}^n x_ix_i^{\top}( \theta_k-\theta)+\frac{2\epsilon}{n}\sum_{i=1}^n\left( \frac{\theta_k}{\sqrt{\|\theta_k\|^2+\xi^2/|x_i^{\top}\theta_k-y_i|^2}}|x_i^{\top}\theta_k-y_i|-\frac{\theta}{\sqrt{\|\theta\|^2+\xi^2/|x_i^{\top}\theta-y_i|^2}}|x_i^{\top}\theta-y_i|\right)\\
		&&+\frac{2\epsilon}{n}\sum_{i=1}^n \left(\frac{\|\theta_k\|^2}{\sqrt{\|\theta_k\|^2+\xi^2/|x_i^{\top}\theta_k-y_i|^2}}x_i\sgn(x_i^{\top}\theta_k-y_i)-\frac{\|\theta\|^2}{\sqrt{\|\theta\|^2+\xi^2/|x_i^{\top}\theta-y_i|^2}}x_i\sgn(x_i^{\top}\theta-y_i)\right)\\
		&&+2\epsilon^2\frac{1}{n}\sum_{i=1}^n\theta_k\frac{\|\theta_k\|^2}{\|\theta_k\|^2+\xi^2/|x_i^{\top}\theta_k-y_i|^2}-\theta\frac{\|\theta\|^2}{\|\theta\|^2+\xi^2/|x_i^{\top}\theta-y_i|^2},
		\end{eqnarray*}
		thus taking $\widehat{\Sigma}=\frac{1}{n}\sum_{i=1}^n x_ix_i^{\top}$, and denoting $\|A\|$ as the operator norm of matrix $A$,
		\begin{eqnarray*}
		&&\left\|\triangledown  \widehat R_{L,\xi}(\theta_k,\epsilon)-\triangledown \widehat R_{L,\xi}(\theta,\epsilon)\right\|\\
		&\leq& 2\|\widehat{\Sigma}-\Sigma\|\|\theta_k-\theta\|+\left\|\frac{2\epsilon}{n}\sum_{i=1}^n\left( \frac{\theta_k}{\sqrt{\|\theta_k\|^2+\xi^2/|x_i^{\top}\theta_k-y_i|^2}}|x_i^{\top}\theta_k-y_i|-\frac{\theta}{\sqrt{\|\theta\|^2+\xi^2/|x_i^{\top}\theta-y_i|^2}}|x_i^{\top}\theta_k-y_i|\right)\right\|\\
		&&+\left\|\frac{2\epsilon}{n}\sum_{i=1}^n\left( \frac{\theta}{\sqrt{\|\theta\|^2+\xi^2/|x_i^{\top}\theta-y_i|^2}}|x_i^{\top}\theta_k-y_i|-\frac{\theta}{\sqrt{\|\theta\|^2+\xi^2/|x_i^{\top}\theta-y_i|^2}}|x_i^{\top}\theta-y_i|\right)\right\|\\
		&&+\left\|\frac{2\epsilon}{n}\sum_{i=1}^n \left(\frac{\|\theta_k\|^2}{\sqrt{\|\theta_k\|^2+\xi^2/|x_i^{\top}\theta_k-y_i|^2}}x_i\sgn(x_i^{\top}\theta_k-y_i)-\frac{\|\theta\|^2}{\sqrt{\|\theta\|^2+\xi^2/|x_i^{\top}\theta-y_i|^2}}x_i\sgn(x_i^{\top}\theta_k-y_i)\right)\right\|\\
		&&+\left\|\frac{2\epsilon}{n}\sum_{i=1}^n \left(\frac{\|\theta\|^2}{\sqrt{\|\theta\|^2+\xi^2/|x_i^{\top}\theta-y_i|^2}}x_i\sgn(x_i^{\top}\theta_k-y_i)-\frac{\|\theta\|^2}{\sqrt{\|\theta\|^2+\xi^2/|x_i^{\top}\theta-y_i|^2}}x_i\sgn(x_i^{\top}\theta-y_i)\right)\right\|\\
		&&+\left\|2\epsilon^2\frac{1}{n}\sum_{i=1}^n\theta_k\frac{\|\theta_k\|^2}{\|\theta_k\|^2+\xi^2/|x_i^{\top}\theta_k-y_i|^2}-\theta\frac{\|\theta\|^2}{\|\theta\|^2+\xi^2/|x_i^{\top}\theta-y_i|^2}\right\|\\
		&:=&A_1+A_2+A_3+A_4+A_5+A_6.
		\end{eqnarray*}
		Following Lemma A.2 of \cite{ing2011stepwise} to quantify $\|\widehat\Sigma-\Sigma\|$, there is some large enough $c_2'$ such that with probability tending to 1,
		\begin{eqnarray*}
		A_1&\leq& 2 c_2' \sqrt{\frac{d\log n}{n}} \|\theta_k-\theta\|.
		\end{eqnarray*}

		For $A_2$,
		\begin{eqnarray*}
		A_2&=&\left\|\frac{2\epsilon}{n}\sum_{i=1}^n \left( \frac{\theta_k}{\sqrt{\|\theta_k\|^2+\xi^2/|x_i^{\top}\theta_k-y_i|^2}}-\frac{\theta}{\sqrt{\|\theta\|^2+\xi^2/|x_i^{\top}\theta-y_i|^2}} \right) |x_i^{\top}\theta_k-y_i| \right\|\\
		&\leq& \left\|\frac{2\epsilon}{n}\sum_{i=1}^n \left( \frac{\theta_k-\theta}{\sqrt{\|\theta_k\|^2+\xi^2/|x_i^{\top}\theta_k-y_i|^2}} \right) |x_i^{\top}\theta_k-y_i| \right\|\\
	    &&+\left\|\frac{2\epsilon}{n}\sum_{i=1}^n \left( \frac{\theta}{\sqrt{\|\theta_k\|^2+\xi^2/|x_i^{\top}\theta_k-y_i|^2}}-\frac{\theta}{\sqrt{\|\theta\|^2+\xi^2/|x_i^{\top}\theta-y_i|^2}} \right) |x_i^{\top}\theta_k-y_i| \right\|\\
	    &\leq& \frac{1}{M}\frac{2\epsilon}{n}\sum_{i=1}^n \frac{|x_i^{\top}\theta_k-y_i|}{\sqrt{\|\theta_k\|^2+\xi^2/|x_i^{\top}\theta_k-y_i|^2}}\\
	    &&+\left\|\frac{2\epsilon}{n}\sum_{i=1}^n \left( \frac{\theta}{\sqrt{\|\theta_k\|^2+\xi^2/|x_i^{\top}\theta_k-y_i|^2}}-\frac{\theta}{\sqrt{\|\theta\|^2+\xi^2/|x_i^{\top}\theta-y_i|^2}} \right) |x_i^{\top}\theta_k-y_i| \right\|.
		\end{eqnarray*}
		Using Bernstein inequality, for some constant $c_1'$,
		\begin{eqnarray*}
		P\left(\sup_{k}\left|\frac{2\epsilon}{n}\sum_{i=1}^n \frac{|x_i^{\top}\theta_k-y_i|}{\sqrt{\|\theta_k\|^2+\xi^2/|x_i^{\top}\theta_k-y_i|^2}}-\mathbb{E} \frac{2\epsilon|x^{\top}\theta_k-y|}{\sqrt{\|\theta_k\|^2+\xi^2/|x^{\top}\theta_k-y|^2}}\right|\geq \delta\right)\leq c_d B_0^{d}M^de^{-c_1n\delta^2\xi^2}.
		\end{eqnarray*}
		In addition, one can take $c_2'$ large enough such that
		\begin{eqnarray*}
		&&\frac{2\epsilon}{n}\sum_{i=1}^n \left| \frac{1}{\sqrt{\|\theta_k\|^2+\xi^2/|x_i^{\top}\theta_k-y_i|^2}}-\frac{1}{\sqrt{\|\theta\|^2+\xi^2/|x_i^{\top}\theta-y_i|^2}} \right| |x_i^{\top}\theta_k-y_i|\\
		&=&\frac{2\epsilon}{n}\sum_{i=1}^n \left| \frac{|x_i^{\top}\theta_k-y_i|}{\sqrt{\|\theta_k\|^2|x_i^{\top}\theta_k-y_i|^2+\xi^2}}-\frac{|x_i^{\top}\theta-y_i|}{\sqrt{\|\theta\|^2|x_i^{\top}\theta-y_i|^2+\xi^2}} \right| |x_i^{\top}\theta_k-y_i|\\
		&=& \frac{2\epsilon}{n}\sum_{i=1}^n \left| \frac{|x_i^{\top}\theta_k-y_i|\sqrt{\|\theta\|^2|x_i^{\top}\theta-y_i|^2+\xi^2}- |x_i^{\top}\theta-y_i|\sqrt{\|\theta_k\|^2|x_i^{\top}\theta_k-y_i|^2+\xi^2} }{\sqrt{\|\theta_k\|^2|x_i^{\top}\theta_k-y_i|^2+\xi^2}\sqrt{\|\theta\|^2|x_i^{\top}\theta-y_i|^2+\xi^2}} \right| |x_i^{\top}\theta_k-y_i|.
		\end{eqnarray*}
		Furthermore,
		\begin{eqnarray*}
		&&\left||x_i^{\top}\theta_k-y_i|\sqrt{\|\theta\|^2|x_i^{\top}\theta-y_i|^2+\xi^2}- |x_i^{\top}\theta-y_i|\sqrt{\|\theta_k\|^2|x_i^{\top}\theta_k-y_i|^2+\xi^2}\right|\\
		&\leq&\left||x_i^{\top}\theta_k-y_i|\sqrt{\|\theta\|^2|x_i^{\top}\theta-y_i|^2+\xi^2}- |x_i^{\top}\theta_k-y_i|\sqrt{\|\theta_k\|^2|x_i^{\top}\theta_k-y_i|^2+\xi^2}\right|\\
		&&+\left||x_i^{\top}\theta_k-y_i|\sqrt{\|\theta_k\|^2|x_i^{\top}\theta_k-y_i|^2+\xi^2}- |x_i^{\top}\theta-y_i|\sqrt{\|\theta_k\|^2|x_i^{\top}\theta_k-y_i|^2+\xi^2}\right|\\
		&\leq&|x_i^{\top}\theta_k-y_i| \left|\sqrt{\|\theta\|^2|x_i^{\top}\theta-y_i|^2+\xi^2}-\sqrt{\|\theta_k\|^2|x_i^{\top}\theta_k-y_i|^2+\xi^2}\right|\\
		&&+ \|x_i\|\|\theta_k-\theta\|\sqrt{\|\theta_k\|^2|x_i^{\top}\theta_k-y_i|^2+\xi^2}.
		\end{eqnarray*}
		As a result, we have
		\begin{eqnarray*}
		&&\left\|\frac{2\epsilon}{n}\sum_{i=1}^n \left( \frac{\theta}{\sqrt{\|\theta_k\|^2+\xi^2/|x_i^{\top}\theta_k-y_i|^2}}-\frac{\theta}{\sqrt{\|\theta\|^2+\xi^2/|x_i^{\top}\theta-y_i|^2}} \right) |x_i^{\top}\theta_k-y_i| \right\|\\
		&\leq&\|\theta\|\frac{2\epsilon}{n}\sum_{i=1}^n \left| \frac{1}{\sqrt{\|\theta_k\|^2+\xi^2/|x_i^{\top}\theta_k-y_i|^2}}-\frac{1}{\sqrt{\|\theta\|^2+\xi^2/|x_i^{\top}\theta-y_i|^2}} \right| |x_i^{\top}\theta_k-y_i|\\
		&\leq&  \|\theta\|\frac{2\epsilon}{n}\sum_{i=1}^n \left| \frac{\|x_i\|\|\theta_k-\theta\|\sqrt{\|\theta_k\|^2|x_i^{\top}\theta_k-y_i|^2+\xi^2} }{\sqrt{\|\theta_k\|^2|x_i^{\top}\theta_k-y_i|^2+\xi^2}\sqrt{\|\theta\|^2|x_i^{\top}\theta-y_i|^2+\xi^2}} \right| |x_i^{\top}\theta_k-y_i|\\
		&&+ \|\theta\|\frac{2\epsilon}{n}\sum_{i=1}^n \left| \frac{ \left|\sqrt{\|\theta\|^2|x_i^{\top}\theta-y_i|^2+\xi^2}-\sqrt{\|\theta_k\|^2|x_i^{\top}\theta_k-y_i|^2+\xi^2}\right| }{\sqrt{\|\theta_k\|^2|x_i^{\top}\theta_k-y_i|^2+\xi^2}\sqrt{\|\theta\|^2|x_i^{\top}\theta-y_i|^2+\xi^2}} \right| |x_i^{\top}\theta_k-y_i|^2\\
		&\leq& \frac{2\epsilon B_0}{nM}\sum_{i=1}^n \frac{\|x_i\|}{\xi}|x_i^{\top}\theta_k-y_i|\\
		&&+\frac{2\epsilon}{n}\sum_{i=1}^n \left|\frac{\|\theta\|}{\sqrt{\|\theta_k\|^2|x_i^{\top}\theta_k-y_i|^2+\xi^2}}-\frac{\|\theta\|}{\sqrt{\|\theta\|^2|x_i^{\top}\theta-y_i|^2+\xi^2}} \right| |x_i^{\top}\theta_k-y_i|^2\\
		&=& \frac{2\epsilon B_0}{nM}\sum_{i=1}^n \frac{\|x_i\|}{\xi}|x_i^{\top}\theta_k-y_i|\\
		&&+\frac{2\epsilon}{n}\sum_{i=1}^n \left|\frac{\|\theta_k\||x_i^{\top}\theta_k-y_i|-\|\theta\||x_i^{\top}\theta_k-y_i|}{\sqrt{\|\theta_k\|^2|x_i^{\top}\theta_k-y_i|^2+\xi^2}} \right| |x_i^{\top}\theta_k-y_i|\\
		&\leq& \frac{2\epsilon B_0}{nM}\sum_{i=1}^n \frac{\|x_i\|}{\xi}|x_i^{\top}\theta_k-y_i|\\
		&&+\frac{2\epsilon}{n}\sum_{i=1}^n \left|\frac{\|\theta_k\||x_i^{\top}\theta_k-y_i|-\|\theta_k\||x_i^{\top}\theta-y_i|}{\sqrt{\|\theta_k\|^2|x_i^{\top}\theta_k-y_i|^2+\xi^2}} \right| |x_i^{\top}\theta_k-y_i|
		+\frac{2\epsilon}{n}\sum_{i=1}^n \left|\frac{\|\theta_k\||x_i^{\top}\theta-y_i|-\|\theta\||x_i^{\top}\theta-y_i|}{\sqrt{\|\theta_k\|^2|x_i^{\top}\theta_k-y_i|^2+\xi^2}} \right| |x_i^{\top}\theta_k-y_i|\\
		&\leq& \frac{2\epsilon B_0}{nM}\sum_{i=1}^n \frac{\|x_i\|}{\xi}|x_i^{\top}\theta_k-y_i|\\
		&&+B_0\frac{2\epsilon}{n}\sum_{i=1}^n\frac{\|x_i\|}{M\xi} |x_i^{\top}\theta_k-y_i|
		+\frac{2\epsilon}{n}\sum_{i=1}^n \left|\frac{\|\theta_k-\theta\||x_i^{\top}\theta_k-y_i|+\|\theta_k-\theta\|^2\|x_i\|}{\sqrt{\|\theta_k\|^2|x_i^{\top}\theta_k-y_i|^2+\xi^2}} \right| |x_i^{\top}\theta_k-y_i|\\
		&\leq& \frac{2\epsilon B_0}{nM}\sum_{i=1}^n \frac{\|x_i\|}{\xi}|x_i^{\top}\theta_k-y_i|+B_0\frac{2\epsilon}{n}\sum_{i=1}^n\frac{\|x_i\|}{M\xi} |x_i^{\top}\theta_k-y_i|
		+\frac{2\epsilon}{n}\frac{1}{M\xi}\sum_{i=1}^n |x_i^{\top}\theta_k-y_i|^2+\frac{2\epsilon}{n}\frac{1}{M^2\xi}\sum_{i=1}^n |x_i^{\top}\theta_k-y_i|\|x_i\|,
		\end{eqnarray*}
		which can be bounded using Bernstein inequality as well.
		
		On the other hand, 
		\begin{eqnarray*}
		A_3&=&\left\|\frac{2\epsilon}{n}\sum_{i=1}^n\frac{\theta}{\sqrt{\|\theta\|^2+\xi^2/|x_i^{\top}\theta-y_i|^2}}\left( |x_i^{\top}\theta_k-y_i| -|x_i^{\top}\theta-y_i|\right)  \right\|\\
		&\leq& \left|\frac{2\epsilon}{n}\sum_{i=1}^n\left( |x_i^{\top}\theta_k-y_i| -|x_i^{\top}\theta-y_i|\right)  \right|\leq 2\epsilon  \|\theta_k-\theta\|(\sup_i\|x_i\|).
		\end{eqnarray*}
    	Further, assume $\|\theta_k\|\leq\|\theta\|$,
		\begin{eqnarray*}
		A_4&=&\frac{2\epsilon}{n}\sum_{i=1}^n \left| \frac{\|\theta_k\|^2}{\sqrt{\|\theta_k\|^2+\xi^2/|x_i^{\top}\theta_k-y_i|^2}}-\frac{\|\theta\|^2}{\sqrt{\|\theta\|^2+\xi^2/|x_i^{\top}\theta-y_i|^2}} \right| \|x_i\|\\
		&\leq&\frac{2\epsilon}{n}\sum_{i=1}^n \left| \frac{\|\theta_k\|^2}{\sqrt{\|\theta_k\|^2+\xi^2/|x_i^{\top}\theta_k-y_i|^2}}-\|\theta\|+\|\theta_k\|-\|\theta_k\|\right| \|x_i\|\\
		&\leq&\frac{2\epsilon}{n}\sum_{i=1}^n \|\theta_k-\theta\| \|x_i\|\\
		&&+\frac{2\epsilon}{n}\sum_{i=1}^n \left| \frac{\|\theta_k\|^2-\|\theta_k\|\sqrt{\|\theta_k\|^2+\xi^2/|x_i^{\top}\theta_k-y_i|^2}}{\sqrt{\|\theta_k\|^2+\xi^2/|x_i^{\top}\theta_k-y_i|^2}}\right| \|x_i\|\\
		&\leq&\frac{2\epsilon}{n}\sum_{i=1}^n \|\theta_k-\theta\| \|x_i\|+\frac{2\epsilon}{n}\sum_{i=1}^n  \frac{\|\theta_k\|\xi}{\sqrt{\|\theta_k\|^2|x_i^{\top}\theta_k-y_i|^2+\xi^2}} \|x_i\|,
		\end{eqnarray*}
		where both the terms can be bounded using Bernstein inequality.
		
		For $A_5$,
	\begin{eqnarray*}
	A_5&\leq& B_0\left\| \frac{2\epsilon}{n} \sum_{i=1}^n x_i\sgn(x_i^{\top}\theta_k-y_i)-x_i\sgn(x_i^{\top}\theta-y_i) \right\|\\
	&\leq& B_0\left(\frac{2\epsilon}{n}\sum_{i=1}^n \|x_i\|1\left\{  |x_i^{\top}\theta_k-y_i|\leq \|x_i\|\|\theta_k-\theta\|   \right\}\right).
	\end{eqnarray*}
	For any $\theta_k$, $x_i^{\top}\theta_k-y_i$ follows a Gaussian distribution with some mean and the variance is finite. As a result, for some $c_3>0$, we have
	\begin{eqnarray*}
	P\left(\left|\frac{1}{n}\sum_{i=1}^n \|x_i\| 1\left\{  |x_i^{\top}\theta_k-y_i|\leq \kappa  \right\}-\mathbb{E}\left(\|x\|1\{ |x^{\top}\theta_k-y| \leq \kappa\}\right)\right|>\delta\right)\leq e^{-nc_3\delta^2/(d\kappa)}.
	\end{eqnarray*}
Note that $\mathbb{E}\left(\|x\|1\{ |x^{\top}\theta_k-y| \leq \kappa\}\right)=O(\kappa\sqrt{d})$. Finally,
		\begin{eqnarray*}
		A_6&\leq& c_2' \|\theta-\theta_k\|.
		\end{eqnarray*}

		Putting $\left\|\triangledown \widehat R_{L,\xi}(\theta,\epsilon)-\widehat\triangledown R_{L,\xi}(\theta_k,\epsilon) \right\|$, $\left\|\triangledown R_{L,\xi}(\theta,\epsilon)-\triangledown R_{L,\xi}(\theta_k,\epsilon) \right\|$, and $\left\|\triangledown \widehat R_{L,\xi}(\theta_k,\epsilon)-\triangledown R_{L,\xi}(\theta_k,\epsilon) \right\|$ together, there exists some function $g_1$ such that
		\begin{eqnarray*}
		P\left(\sup_{\|\theta\|\leq B_0}\left\|\triangledown \widehat R_{L,\xi}(\theta,\epsilon)-\triangledown R_{L,\xi}(\theta,\epsilon) \right\|> \delta \right)\leq
		e^{-g_1(\delta,\xi,d,n)}.
		\end{eqnarray*}
		When using this lemma, instead of figuring out the complicated form of $g_1$, one can directly use $(\xi,\delta,d,n)$ to determine a suitable $M$, and then use the upper bound of $\left\|\triangledown R_{L,\xi}(\theta,\epsilon)-\triangledown R_{L,\xi}(\theta_k,\epsilon) \right\|$, $\left\|\triangledown \widehat R_{L,\xi}(\theta_k,\epsilon)-\triangledown R_{L,\xi}(\theta_k,\epsilon) \right\|$ and $A_1$ to $A_5$ to obtain a probability bound.
	\end{proof}
		\begin{proof}[Proof of Theorem \ref{thm:opt}]
		Rewrite $\theta_{\xi}^{(t)}$ as $\theta^{(t)}$ for simplicity. After we defining $\triangledown R_{L,\xi}$, unlike $\triangledown R_{L,0}$, we may not find a function whose gradient is $\triangledown R_{L,\xi}$. As s result, we propose another function $\widetilde{R}_{{L,\xi}}(\theta,\epsilon)$ and figure out its gradient $\partial\widetilde{R}_{{L,\xi}}(\theta,\epsilon)/\partial \theta$, then bound the difference between $\partial\widetilde{R}_{{L,\xi}}(\theta,\epsilon)/\partial \theta$ and $\triangledown R_{L,\xi}$.
		
		Denote $\widetilde{R}_{{L,\xi}}(\theta,\epsilon)$ as 
		\begin{eqnarray*}
			\widetilde{R}_{{L,\xi}}(\theta,\epsilon)=\mathbb{E}\bigg[ (x^{\top}\theta-y)^2+2\epsilon\sqrt{\|\theta\|^2+\xi^2/|x^{\top}\theta-y|^2} |x^{\top}\theta -y|+\epsilon^2 \|\theta\|^2
			\bigg],
		\end{eqnarray*}		
		then taking gradient on $\widetilde{R}_{\xi}$ w.r.t $\theta$, it becomes
		\begin{eqnarray*}
		&&\triangledown\widetilde{R}_{{L,\xi}}(\theta,\epsilon)\\
		&=&\mathbb{E}\bigg[2x(x^{\top}\theta-y)+2\epsilon\frac{\theta}{\sqrt{\|\theta\|^2+\xi^2/|x^{\top}\theta-y|^2 }}|x^{\top}\theta-y|-\frac{2\epsilon \xi^2 x}{\sqrt{\|\theta\|^2+\xi^2/|x^{\top}\theta-y|^2 }}\frac{\sgn(x^{\top}\theta-y)}{(x^{\top}\theta-y)^2}\\
		&&+2\epsilon x\sqrt{\|\theta\|^2+\xi^2/|x^{\top}\theta-y|^2}\sgn(x^{\top}\theta-y)+2\epsilon^2\theta\bigg]
		\end{eqnarray*}
		For $\triangledown R_{L,\xi}(\theta,\epsilon)$, we have
		\begin{eqnarray*}
		&&\triangledown R_{L,\xi}(\theta,\epsilon)\\
		&=&\mathbb{E} \bigg[2x(x^{\top}\theta-y)+2\epsilon \frac{\theta}{\sqrt{\|\theta\|^2+\xi^2/|x^{\top}\theta-y|^2}}|x^{\top}\theta-y|\\
			&&+2\epsilon x\frac{\|\theta\|^2}{\sqrt{ \|\theta\|^2+\xi^2/|x^{\top}\theta-y|^2 }}\sgn(x^{\top}\theta-y)+2 \epsilon^2\theta\frac{\|\theta\|^2}{\|\theta\|^2+\xi^2/|x^{\top}\theta-y|^2}\bigg]\\
        \end{eqnarray*}
		We assume $\theta\in B(0,B_1)$, then there exists $B_2$ such that both $\|\triangledown R_{{L,\xi}}(\epsilon,\theta)\|$ and $\|\partial\widetilde{R}_{{L,\xi}}(\theta,\epsilon)/\partial \theta\|$ are bounded by some large constant $B_2>0$.
		
		To compare the difference between $\frac{\partial \widetilde{R}_{{L,\xi}}}{\partial \theta}$ and $\triangledown R_{{L,\xi}}(\theta,\epsilon)$, we have
		\begin{eqnarray*}
			\left\|\triangledown R_{{L,\xi}}(\theta,\epsilon)-\frac{\partial \widetilde{R}_{{L,\xi}}}{\partial \theta}\right\|=2\epsilon\left\|\theta \mathbb{E}\left(\frac{\xi^2/|x^{\top}\theta-y|^2}{\|\theta\|^2+\xi^2/|x^{\top}\theta-y|^2}\right)\right\|.
		\end{eqnarray*}
		
		As a result, there exists some $B_3>0$ such that
		\begin{equation*}
		\left\|\triangledown R_{\xi}(\theta,\epsilon)-\frac{\partial \widetilde{R}_{{L,\xi}}}{\partial \theta}\right\|\leq \frac{\xi}{B_3} \left\|\frac{\partial \widetilde{R}_{{L,\xi}}}{\partial \theta}\right\|.
		\end{equation*}
		Therefore the gradient of $\widetilde{R}_{{L,\xi}}$ is the dominant term when updating $\theta$ in each iteration.
		
		Next we show that $\widetilde{R}_{{L,\xi}}(\theta^{(t+1)},\epsilon)$ is smaller than  $\widetilde{R}_{{L,\xi}}(\theta^{(t)},\epsilon)$ in probability. From the definition of $\widetilde{R}$, similar with Proposition \ref{prop:geo}, there exists $L$ such that 
		\begin{eqnarray*}
			\widetilde{R}_{L,\xi}(\theta^{(t+1)},\epsilon)&\leq& \widetilde{R}_{L,\xi}(\theta^{(t)},\epsilon) +\left(\frac{\partial \widetilde{R}_{L,\xi}}{\partial \theta^{(t)}} \right)^{\top}(\theta^{(t+1)}-\theta^{(t)})+\frac{L}{\xi}\|\theta^{(t+1)}-\theta^{(t)}\|^2.
		\end{eqnarray*}
		From Lemma \ref{lem:gradient}, with probability at least $1-e^{-g_1(\delta,\xi,d,n)}$, \begin{eqnarray*}
			\left\|\triangledown \widehat R_{L,\xi}(\theta,\epsilon)-\triangledown R_{L,\xi}(\theta,\epsilon)\right\|<\delta,
		\end{eqnarray*}
		thus
		\begin{eqnarray*}
			\left(\frac{\partial \widetilde{R}_{L,\xi}}{\partial \theta^{(t)}} \right)^{\top}\left(\theta^{(t+1)}-\theta^{(t)}\right)= -\eta\left(\frac{\partial \widetilde{R}_{L,\xi}}{\partial \theta^{(t)}} \right)^{\top}\triangledown\widehat{R}_{L,\xi}(\theta^{(t)},\epsilon)
			&\leq&-\eta\left\|\frac{\partial \widetilde{R}_{L,\xi}}{\partial \theta^{(t)}} \right\|^2(1-\xi/B_3)+\eta\delta B_2,
		\end{eqnarray*}
		and
		\begin{eqnarray*}
			\|\theta^{(t+1)}-\theta^{(t)}\|^2&=&\eta^2\|\triangledown\widehat{R}_{L,\xi}(\theta^{(t)},\epsilon)\|^2\leq \eta^2\|\triangledown {R}_{L,\xi}(\theta^{(t)},\epsilon)\|^2+2\eta^2\delta B_2+\eta^2B_2^2\delta^2\\&\leq&\eta^2\left\|\frac{\partial \widetilde{R}_{L,\xi}}{\partial \theta^{(t)}} \right\|^2(1+\xi/B_3)^2+2\eta^2\delta B_2+o,
		\end{eqnarray*}
		where $A+o$ represents $A+a$ where $a/A\rightarrow 0$ for expressions $a$ and $A$.
		
		Therefore,
		\begin{eqnarray*}
			\widetilde{R}_{L,\xi}(\theta^{(t+1)},\epsilon)&\leq& \widetilde{R}_{L,\xi}(\theta^{(t)},\epsilon)-\eta\left\|\frac{\partial \widetilde{R}_{{L,\xi}}}{\partial \theta^{(t)}} \right\|^2(1-\xi/B_3)+\eta\delta B_2 + \frac{L\eta^2}{\xi}\left\|\frac{\partial \widetilde{R}_{\xi}}{\partial \theta^{(t)}} \right\|^2(1+2\xi/B_3)+o. 
		\end{eqnarray*}
		
		When taking $\eta=\frac{\xi}{2L}$, 
		\begin{equation}\label{eqn:bound_B1}
		\widetilde{R}_{L,\xi}(\theta^{(t+1)},\epsilon)\leq \widetilde{R}_{L,\xi}(\theta^{(t)},\epsilon) -\frac{\xi}{4L} \left\|\frac{\partial \widetilde{R}_{L,\xi}}{\partial \theta^{(t)}}  \right\|^2+\frac{\xi^2B_2^2}{LB_3}+\frac{\xi\delta B_2}{2L}+o .   
		\end{equation}
		From (\ref{eqn:bound_B1}), when taking $\xi\rightarrow 0$ and $\delta\rightarrow 0$, we know that $\widetilde{R}_{\xi}(\theta^{(t)},\epsilon)$ will decrease in $t$ at the beginning of training; thus based on the shape of $\widetilde{R}_{\xi}$ and $\|\theta^{(0)}\|\leq B_0$, one can use induction to show the existence of $B_1$, i.e. $\theta^{(t)}\in B(0,B_1)$ for any $t$.
		
		To bound the difference $\widetilde R_\xi(\theta^{(t+1)},\epsilon)$ and $\widetilde R_{\xi}(\theta^*,\epsilon)$, since $\widetilde{R}_{\xi}$ is a convex function, we have
		\begin{eqnarray*}
			\widetilde{R}_{L,\xi}(\theta^{(t)},\epsilon)\leq \widetilde{R}_{L,\xi}(\theta^*,\epsilon)+\left(\frac{\partial \widetilde{R}_{L,\xi}}{\partial \theta^{(t)}}\right)^{\top}(\theta^{(t)}-\theta^*).
		\end{eqnarray*}
		Thus
		\begin{eqnarray*}
			\widetilde{R}_{L,\xi}(\theta^{(t+1)},\epsilon)&\leq& \widetilde{R}_{L,\xi}(\theta^*,\epsilon)+\left(\frac{\partial \widetilde{R}_{L,\xi}}{\partial \theta^{(t)}}\right)^{\top}(\theta^{(t)}-\theta^*)-\frac{\xi}{4L }\left\|\frac{\partial \widetilde{R}_{L,\xi}}{\partial \theta^{(t)}}\right\|^2+\frac{\xi^2B_2^2}{LB_3}+\frac{\xi\delta B_2}{2L}+o \\
			&\leq&\widetilde{R}_{L,\xi}(\theta^*,\epsilon)+\triangledown\widehat{R}_{L,\xi}(\theta^{(t)},\epsilon)^{\top}(\theta^{(t)}-\theta^*)-\frac{\xi}{4L }\left\|\triangledown\widehat{R}_{L,\xi}(\theta^{(t)},\epsilon)\right\|^2+\frac{\xi B_1B_2}{B_3}+\delta B_2+o \\
			&=&\widetilde{R}_{L,\xi}(\theta^*,\epsilon)-\frac{L}{\xi}\left\| \theta^{(t+1)}-\theta^*  \right\|^2+\frac{L}{\xi} \|\theta^{(t)}-\theta^*\|^2+\frac{\xi B_1B_2}{B_3}+\delta B_2+o,
		\end{eqnarray*}
		where the last step is obtained from the updating rule $\theta^{(t+1)}-\theta^{(t)}=-\eta \triangledown\widehat{R}_{L,\xi}(\theta^{(t)},\epsilon)$.
		
		As a result, summing up from $t=1$ to $T$,
		\begin{equation*}
		\sum_{t=1}^T\widetilde{R}_{L,\xi}(\theta^{(t)},\epsilon)-\widetilde{R}_{L,\xi}(\theta^*,\epsilon)\leq \frac{L}{\xi }\|\theta^{(0)}-\theta^*\|^2 +\frac{T\xi B_1B_2}{B_3}+T\delta B_2+o .
		\end{equation*}

		If $\theta^{(t)}$ is away from $\theta^*$, then $\widetilde{R}_{L,\xi}(\theta^{(t)},\epsilon)$ reduces in every step.

		When $\theta^{(t)}$ is close to $\theta^*$, since $\xi\rightarrow0$, we have $\widetilde{R}_{L,\xi}(\theta,\epsilon)- R_{L,\xi}(\theta,\epsilon)\rightarrow 0$, thus through taking suitable choice of $(\xi,t,\eta)$, we can get $R_{L,\xi}(\theta^{(T)},\epsilon)\rightarrow R_{L,0}(\theta^*,\epsilon)$.
		
		The proof of $\widehat R_{L,\xi}(\theta^{(T)},\epsilon)-R_{L,\xi}(\theta^{(T)}\epsilon)\rightarrow 0$ in probability follows a concentration bound similar as in Lemma \ref{lem:gradient}. 
		
		Finally we relax the condition of $\|\theta_0\|\leq B_0$. For $y_i=\theta_0^{\top}x_i+\varepsilon_i$ with $\|\theta_0\|_{\Sigma}^2+\sigma^2=1$, $\sigma>\xi/v$, denote $y'_i:=vy_i=(v\theta_0)^{\top}x_i+v\varepsilon_i:=(\theta_0')^{\top}x_i+\varepsilon_i'$ for some $v\rightarrow \infty$. In this case, if the initialization satisfy $(\theta^{'(0)})=v\theta^{(0)}$ and $\xi'=v^2\xi$, then for any $t=1,...,T$, we always have $(\theta^{'(t)})=v\theta^{(t)}$. Therefore, since $\theta^{(T)}\rightarrow \theta^*$, we also have $\theta^{'(T)}/v \rightarrow\theta^{*}$, which is just the minimizer of population adversarial loss for $y_i'$.
		
	\end{proof}
	
	\section{Proofs for low-dimensional nonlinear network}
	
	\begin{proof}[Proof of Theorem \ref{thm:low_nonlinear}]
		In the proof, we first consider  taking gradient on loss w.r.t $x$ to get the attack direction, i.e. fast gradient method to obtain the attack. And after the main proof, we discuss how to adapt the true attack.
		
		We consider three optimization problems: (1) a linear network using zero initialzation, (2) a nonlinear network using zero intialization, and (3) a nonlinear network with vanishing initialization. Extending from Theorem \ref{thm:opt}, we know how (1) works, then we bound the difference between (1), (2), and (3).
		
		We first bound the difference among (1) and (2) assuming $\xi$ and $t$ are any arbitrary number. And finally choose some suitable $\xi$ and $t$ to ensure the consistency. 
		
		Denote $$A_{\epsilon,\xi,OP1}(f,x,y)=\epsilon\sgn(f(x)-y)\frac{\frac{1}{\sqrt{h}} \sum_{j=1}^ha_j\phi'(0)\theta_j}{\sqrt{\left\|\frac{1}{\sqrt{h}}\sum_{j=1}^ha_j\phi'(0)\theta_j\right\|^2+\xi^2/|f_{\theta}(x)-y|^2}},$$
		and for the weight $\theta_j$ for the $j$th node, 
		$$ g_{j,\xi}^{OP1}(\theta,\epsilon)=\frac{2}{n}\sum_{i=1}^n\frac{\phi(0)'a_j}{\sqrt{h}}\left[x_i+A_{\epsilon,\xi,OP1}(f,x_i,y_i)\right]\left[\frac{1}{\sqrt{h}}\sum_{k=1}^h\left(\phi(0)'x_i^{\top}\theta_ka_k+\phi'(0)A_{\epsilon,\xi,OP1}(f,x_i,y_i)^{\top}\theta_ka_k\right)-y_i\right].$$
		$$ g_{j,\xi}^{OP2}(\theta,\epsilon)=\frac{2}{n}\sum_{i=1}^n\frac{\phi'\left(\left(x+  A_{\epsilon,\xi}(f,x,y)\right)^{\top}\theta_j\right)a_j}{\sqrt{h}}\left[x_i+A_{\epsilon,\xi}(f,x_i,y_i)\right]\left[\frac{1}{\sqrt{h}}\sum_{k=1}^h \phi\left(\left(x+  A_{\epsilon,\xi}(f,x,y)\right)^{\top}\theta_k\right)a_k-y_i\right].$$
		
		Denote $\theta^{OP1}(t)$ as the weight obtained using linear network with zero initialization, i.e. for each hidden node $j=1,...,h$,
		\begin{eqnarray*}
			&&\theta^{OP1}_j(0)=0,\\
			&&\theta^{OP1}_j(t+1)=\theta^{OP1}_j(t)-\eta g_{j,\xi}^{OP1}(\theta^{OP1}_{j}(t),\epsilon).
		\end{eqnarray*}Also denote $\theta^{OP2}(t)$ as the weight obtained using nonlinear network with nonzero initialization, i.e.
		\begin{eqnarray*}
			&&\theta^{OP2}_j(0)=0,\\
			&&\theta^{OP2}_j(t+1)=\theta^{OP2}_j(t)-\eta g_{j,\xi}^{OP2}(\theta^{OP2}_{j}(t),\epsilon).
		\end{eqnarray*}
		For the original problem we consider (i.e. nonlinear network with vanishing initialization), define
		\begin{eqnarray*}
			&&\theta^{OP3}_j(0)\sim N\left( 0,\frac{1}{dh^{1+\delta'}}I \right),\\
			&&\theta^{OP3}_j(t+1)=\theta^{OP3}_j(t)-\eta g_{j,\xi}^{OP2}(\theta^{OP3}_{j}(t),\epsilon).
		\end{eqnarray*}
		\paragraph{Difference between (1) and (2)}At $(t+1)$th step, we have
		\begin{eqnarray*}
			\theta^{OP1}_j(t+1)-\theta^{OP2}_j(t+1)&=&\theta^{OP1}_j(t)-\theta^{OP2}_j(t)-\eta g_{j,\xi}^{OP1}(\theta^{OP1}_j(t),\epsilon)+\eta g_{j,\xi}^{OP2}(\theta^{OP2}_j(t),\epsilon)\\
			&=&\theta^{OP1}_j(t)-\theta^{OP2}_j(t)-\eta \left[g_{j,\xi}^{OP1}(\theta^{OP1}_j(t),\epsilon)- g_{j,\xi}^{OP1}(\theta^{OP2}_j(t),\epsilon)\right]\\&&-\eta \left[g_{j,\xi}^{OP1}(\theta^{OP2}_j(t),\epsilon)- g_{j,\xi}^{OP2}(\theta^{OP2}_j(t),\epsilon)\right].
		\end{eqnarray*}
		Denote 
		\begin{eqnarray*}
			\Delta_{1}(\theta,x,y)&=&A_{\epsilon,\xi}(f,x,y)-A_{\epsilon,\xi,OP1}(f,x,y),
		\end{eqnarray*}
		\begin{eqnarray*}
			\Delta_{2}(\theta,x,y)&=&\frac{1}{\sqrt{h}}\sum_{j=1}^h \phi\left(\left(x+  A_{\epsilon,\xi}(f,x,y)\right)^{\top}\theta_j\right)a_j\\&&-\frac{1}{\sqrt{h}}\sum_{j=1}^h\phi(0)'x^{\top}\theta_ja_j+\phi'(0)A_{\epsilon,\xi,OP1}(f,x,y)^{\top}\theta_ja_j,
		\end{eqnarray*}
		and
		\begin{eqnarray*}
			\Delta_{3,j}(\theta,x,y)=\phi'\left(\left(x+  A_{\epsilon,\xi}(f,x,y)\right)^{\top}\theta_j\right)-\phi'(0).
		\end{eqnarray*}
		Assume all $\Delta_{1}(\theta,x,y)$, $\Delta_{2}(\theta,x,y)$, $\Delta_{3,j}(\theta,x,y)$ converges to zero for any $j$ and $(x,y)\in\{ (x_i,y_i)\}_{i=1,...,n}$ (we will later go back to validate this assumption), then 
		\begin{eqnarray*}
			&&g_{j,\xi}^{OP1}(\theta,\epsilon)- g_{j,\xi}^{OP2}(\theta,\epsilon)\\
			&=&-\frac{2}{n}\sum_{i=1}^n\frac{\phi'\left(\left(x+  A_{\epsilon,\xi}(f,x,y)\right)^{\top}\theta_j\right)a_j}{\sqrt{h}}\left[x_i+A_{\epsilon,\xi}(f,x_i,y_i)\right]\left[\frac{1}{\sqrt{h}}\sum_{k=1}^h \phi\left(\left(x_i+  A_{\epsilon,\xi}(f,x_i,y_i)\right)^{\top}\theta_j\right)a_k-y_i\right]\\
			&&+\frac{2}{n}\sum_{i=1}^n\frac{\phi(0)'a_j}{\sqrt{h}}\left[x_i+A_{\epsilon,\xi,OP1}(f,x_i,y_i)\right]\left[\frac{1}{\sqrt{h}}\sum_{k=1}^h\left(\phi(0)'x_i^{\top}\theta_ka_k+\phi'(0)A_{\epsilon,\xi,OP1}(f,x_i,y_i)^{\top}\theta_ka_k\right)-y_i\right]\\
			&=&-\frac{2}{n}\sum_{i=1}^n\frac{a_j\Delta_{3,j}(\theta,x_i,y_i)}{\sqrt{h}}\left[x_i+A_{\epsilon,\xi,OP1}(f,x_i,y_i)\right]\left[\frac{1}{\sqrt{h}}\sum_{k=1}^h\left(\phi(0)'x_i^{\top}\theta_ka_k+\phi'(0)A_{\epsilon,\xi,OP1}(f,x_i,y_i)^{\top}\theta_ka_k\right)-y_i\right]\\
			&&-\frac{2}{n}\sum_{i=1}^n\frac{\phi(0)'a_j}{\sqrt{h}}\Delta_{1}(\theta,x_i,y_i)\left[\frac{1}{\sqrt{h}}\sum_{k=1}^h\left(\phi(0)'x_i^{\top}\theta_ka_k+\phi'(0)A_{\epsilon,\xi,OP1}(f,x_i,y_i)^{\top}\theta_ka_k\right)-y_i\right]\\
			&&-\frac{2}{n}\sum_{i=1}^n\frac{\phi(0)'a_j}{\sqrt{h}}\left[x_i+A_{\epsilon,\xi,OP1}(f,x_i,y_i)\right]\Delta_{2}(\theta,x_i,y_i)+o,
		\end{eqnarray*}
		for some remainder term $o$, thus
		\begin{eqnarray*}
			&&\frac{1}{\sqrt{h}}\sum_{i=1}^ha_jg_{j,\xi}^{OP1}(\theta,\epsilon)- \frac{1}{\sqrt{h}}\sum_{i=1}^ha_jg_{j,\xi}^{OP2}(\theta,\epsilon)\\
			&=&-\frac{2}{n}\sum_{i=1}^n\frac{\sum_{j=1}^h a_j^2\Delta_{3,j}(\theta,x_i,y_i)}{{h}}\left[x_i+A_{\epsilon,\xi,OP1}(f,x_i,y_i)\right]\left[\frac{1}{\sqrt{h}}\sum_{k=1}^h\left(\phi(0)'x_i^{\top}\theta_ka_k+\phi'(0)A_{\epsilon,\xi,OP1}(f,x_i,y_i)^{\top}\theta_ka_k\right)-y_i\right]\\
			&&-\frac{2}{n}\sum_{i=1}^n\frac{\phi(0)'\|a\|^2}{{h}}\Delta_{1}(\theta,x_i,y_i)\left[\frac{1}{\sqrt{h}}\sum_{k=1}^h\left(\phi(0)'x_i^{\top}\theta_ka_k+\phi'(0)A_{\epsilon,\xi,OP1}(f,x_i,y_i)^{\top}\theta_ka_k\right)-y_i\right]\\
			&&-\frac{2}{n}\sum_{i=1}^n\frac{\phi(0)'\|a\|^2}{h}\left[x_i+A_{\epsilon,\xi,OP1}(f,x_i,y_i)\right]\Delta_{2}(\theta,x_i,y_i)+o.
		\end{eqnarray*}
		When $\|\theta^{\top}a/\sqrt{h}\|\leq B_1 v$, we have
		\begin{eqnarray*}
			&&\left\|\frac{1}{\sqrt{h}}\sum_{i=1}^ha_jg_{j,\xi}^{OP1}(\theta,\epsilon)- \frac{1}{\sqrt{h}}\sum_{i=1}^ha_jg_{j,\xi}^{OP2}(\theta,\epsilon)\right\|\\
			&=&O\left( v\max_{i}\frac{|\sum_{j=1}^h a_j^2\Delta_{3,j}(\theta,x_i,y_i)|}{h} \right)\\
			&&+\left\|\frac{2}{n}\sum_{i=1}^n\frac{\phi(0)'\|a\|^2}{{h}}\Delta_{1}(\theta,x_i,y_i)\left[\frac{1}{\sqrt{h}}\sum_{k=1}^h\left(\phi(0)'x_i^{\top}\theta_ka_k+\phi'(0)A_{\epsilon,\xi,OP1}(f,x_i,y_i)^{\top}\theta_ka_k\right)-y_i\right]\right\|\\
			&&+\left\|\frac{2}{n}\sum_{i=1}^n\frac{\phi(0)'\|a\|^2}{h}\left[x_i+A_{\epsilon,\xi,OP1}(f,x_i,y_i)\right]\Delta_{2}(\theta,x_i,y_i)\right\|+o.
		\end{eqnarray*}
		We know that with probability at least $1-e^{-g_1(\delta,\xi,d,n)}$,
		\begin{eqnarray*}
			&&\left\|\frac{1}{\sqrt{h}}\sum_{i=1}^ha_jg_{j,\xi}^{OP1}(\theta,\epsilon)- \frac{1}{\sqrt{h}}\sum_{i=1}^ha_jg_{j,\xi}^{OP1}(\theta',\epsilon)\right\|\\
			&\leq&  \frac{\|a\|^2}{h}\|\triangledown \widehat R_{L,\xi}(\theta^{\top}a/\sqrt{h},\epsilon)-\triangledown \widehat R_{L,\xi}((\theta')^{\top}a/\sqrt{h},\epsilon)\|\mbox{ (from linear network to linear model)}\\
			&\leq& 2\delta + \frac{\|a\|^2}{h}\|\triangledown  R_{L,\xi}(\theta^{\top}a/\sqrt{h},\epsilon)-\triangledown  R_{L,\xi}((\theta')^{\top}a/\sqrt{h},\epsilon)\|\\
			&\leq&2\delta +\frac{\|a\|^2}{h} \frac{Lv}{\sqrt{h}\xi}\|\theta^{\top}a-(\theta')^{\top}a \|.
		\end{eqnarray*}
		As a result, with probability at least $1-e^{-g_1(\delta,\xi,d,n)}$,
		\begin{eqnarray}
			&&\left\| \frac{1}{\sqrt{h}}\theta^{OP1}(t+1)^{\top}a-\frac{1}{\sqrt{h}}\theta^{OP2}(t+1)^{\top}a \right\|\label{eqn:from}\\
			&\leq& \left\|\frac{1}{\sqrt{h}} \theta^{OP1}(t)^{\top}a-\frac{1}{\sqrt{h}}\theta^{OP2}(t)^{\top}a \right\| + \eta\left\|\frac{1}{\sqrt{h}}\sum_{i=1}^ha_jg_{j,\xi}^{OP1}(\theta^{OP1}(t),\epsilon)- \frac{1}{\sqrt{h}}\sum_{i=1}^ha_jg_{j,\xi}^{OP1}(\theta^{OP2}(t),\epsilon) \right\|\nonumber\\
			&&+\eta\left\| \frac{1}{\sqrt{h}}\sum_{i=1}^ha_jg_{j,\xi}^{OP1}(\theta^{OP2}(t),\epsilon)- \frac{1}{\sqrt{h}}\sum_{i=1}^ha_jg_{j,\xi}^{OP2}(\theta^{OP2}(t),\epsilon)  \right\|.\label{eqn:to}
		\end{eqnarray}

		As will be mentioned later, due to zero initialization, one can track $\theta_j^{OP1}(t)$ once given $\frac{1}{\sqrt{h}}\theta^{OP1}(t)^{\top}a/\sqrt{h}$. Similar property holds for $\theta_j^{OP2}(t)$. Therefore we bound $ \frac{1}{\sqrt{h}}\theta^{OP1}(t+1)^{\top}a-\frac{1}{\sqrt{h}}\theta^{OP2}(t+1)^{\top}a$, which further enable us to bound $\|\theta^{OP1}_j(t)-\theta^{OP2}_j(t)\|$.

		\paragraph{Linear network in (1)} To further figure out the difference, we need some knowledge on $\theta^{OP1}(t)$, i.e. how $\theta^{OP1}(t)$ affects $\Delta_1$, $\Delta_2$, and $\Delta_3$. 
		
		Observe that $ \frac{\phi'(0)}{\sqrt{h}}\theta^{OP1}(T)^{\top}a$ is just the coefficients of a linear model, so using zero initialization on $\theta^{OP1}(t)$, we have for any $j,k$ in $1,...,h$,
		\begin{eqnarray*}
			\frac{\|\theta^{OP1}_j(t)\|}{\|\theta^{OP1}_k(t)\|}=\frac{|a_j|}{|a_k|}.
		\end{eqnarray*}
		
		Now we study the sufficient conditions on $(a,h)$ which enable us to bound the difference between $\theta^{OP1}(t)$ and $\theta^{OP2}(t)$. In general, we want the Taylor approximation of $\phi(x_i^{\top}\theta_j)$ valid, and $\left\|\frac{1}{\sqrt{h}}\sum_{i=1}^ha_jg_{j,\xi}^{OP1}(\theta,\epsilon)- \frac{1}{\sqrt{h}}\sum_{i=1}^ha_jg_{j,\xi}^{OP2}(\theta,\epsilon)\right\|$ goes to zero.
		
		To validate the approximation of $\phi(x_i^{\top}\theta_k)=\phi(0)+\phi'(0)(x_i^{\top}\theta_k)+o$, we require $\max_{i,j}|\theta_j^{OP1}(t)^{\top}x_i|\rightarrow 0$. Since $\max_i\|x_i\|=O(\sqrt{d\log n})$ and $\|\theta_j^{OP1}(t)\|\leq\sqrt{h}|a_j|\|\theta_0\|/\|a\|^2$, a sufficient condition becomes
		\begin{eqnarray*}
		\sqrt{d\log n}\frac{\|a\|_{\infty}}{\|a\|^2}\|\theta_0\|\sqrt{h}\rightarrow 0.
		\end{eqnarray*}
		(We will use a stronger version $(d\log n)\|a\|_{\infty}v\sqrt{h}/\|a\|^2\rightarrow 0$.)
		
		Next we study the bound of $\left\|\frac{1}{\sqrt{h}}\sum_{i=1}^ha_jg_{j,\xi}^{OP1}(\theta,\epsilon)- \frac{1}{\sqrt{h}}\sum_{i=1}^ha_jg_{j,\xi}^{OP2}(\theta,\epsilon)\right\|$. Recall that $\theta_0$ is the true model (i.e. a vector, not a matrix). By the definition of $\Delta_{3,j}$, we have
		\begin{eqnarray*}
			|\Delta_{3,j}(\theta,x,y)|&=&\left|\phi'\left(\left(x+  A_{\epsilon,\xi}(f,x,y)\right)^{\top}\theta_j\right)-\phi'(0)\right|\\
			&=&O(\phi''(0)|(x+A_{\epsilon,\xi}(f,x,y))^{\top}\theta_j|)\\
			&=& O(\phi''(0)\|\theta_j\|(\|x\|+\epsilon)).
		\end{eqnarray*}
		As a result, since $\max_i\|x_i\|=O(\sqrt{d\log n})$ in probability,  we have
		\begin{eqnarray*}
			\max_i\frac{|\sum_{j=1}^ha_j^2\Delta_{3,j}(\theta^{OP1}(t),x_i,y_i)|}{ \|a\|^2}&=&O\left(\sqrt{d\log n}\max_{j}\|\theta^{OP1}(t)_j\| \right)\\
			&=&O\left( \sqrt{d\log n}\frac{\|a\|_{\infty}}{\|a\|^2} \sqrt{h}\|\theta_0\| \right).
		\end{eqnarray*}
		Similarly, for $\Delta_1$, if the sign of $\frac{\phi'(0)}{\sqrt{h}}x^{\top}\theta^{\top}a-y$ and $f_{\theta}(x)-y$ are the same, then
		\begin{eqnarray*}
			\|A_{\epsilon,\xi}(f,x,y)-	A_{\epsilon,\xi,OP1}(f,x,y)\|=O\left(\frac{\|x\|\sum_{j=1}^h |a_j|\|\theta_j\|^2}{\|\sum_{j=1}^h a_j\theta_j\|}\right)=O\left(\sqrt{d\log n} \frac{\|a\|_{\infty}}{\|a\|^2}\sqrt{h}\|\theta_0\| \right).
		\end{eqnarray*}
		If the signs are different, then since
		\begin{eqnarray*}
		\frac{\phi'(0)}{\sqrt{h}}x_i^{\top}\theta^{\top}a-f_{\theta}(x_i)=O\left(d\log n\sqrt{h}\|\theta_0\|^2\frac{\|a\|_{\infty}}{\|a\|^2}\right),
		\end{eqnarray*}
		there is a proportion of at most $O\left(d\log n\sqrt{h}\|\theta_0\|^2\frac{\|a\|_{\infty}}{\|a\|^2}/v\right)$ of samples such that $A_{\epsilon,\xi}$ and $A_{\epsilon,\xi,OP1}$ are far away from each other. Denote $b_{i}$ as the indicator that the sign of $\frac{\phi'(0)}{\sqrt{h}}x^{\top}\theta^{\top}a-y$ and $f_{\theta}(x)-y$ are the same, then
		\begin{eqnarray*}
		&&\left\|\frac{2}{n}\sum_{i=1}^n\frac{\phi(0)'\|a\|^2}{{h}}\Delta_{1}(\theta,x_i,y_i)\left[\frac{1}{\sqrt{h}}\sum_{k=1}^h\left(\phi(0)'x_i^{\top}\theta_ka_k+\phi'(0)A_{\epsilon,\xi,OP1}(f,x_i,y_i)^{\top}\theta_ka_k\right)-y_i\right]\right\|\\
		&\leq& \left\|\frac{2}{n}\sum_{i=1}^n\frac{\phi(0)'\|a\|^2}{{h}}\Delta_{1}(\theta,x_i,y_i)\left[\frac{1}{\sqrt{h}}\sum_{k=1}^h\left(\phi(0)'x_i^{\top}\theta_ka_k+\phi'(0)A_{\epsilon,\xi,OP1}(f,x_i,y_i)^{\top}\theta_ka_k\right)-y_i\right]1\{b_i=1\}\right\|\\
		&&+\left\|\frac{2}{n}\sum_{i=1}^n\frac{\phi(0)'\|a\|^2}{{h}}\Delta_{1}(\theta,x_i,y_i)\left[\frac{1}{\sqrt{h}}\sum_{k=1}^h\left(\phi(0)'x_i^{\top}\theta_ka_k+\phi'(0)A_{\epsilon,\xi,OP1}(f,x_i,y_i)^{\top}\theta_ka_k\right)-y_i\right]1\{b_i=0\}\right\|\\
		&=&O\left( (d\log n)\|a\|_{\infty}\frac{\|\theta_0\|^2}{\sqrt{h}} \right).
		\end{eqnarray*}
		
		Based on $\Delta_1$, for $\Delta_2$, when $ \sqrt{dh\log n}\|a\|_{\infty}\|\theta_0\|/\|a\|^2\rightarrow 0 $, i.e. $\max_{i,j}|\theta_j^{OP1}(t)^{\top}x_i|\rightarrow 0$, we have
		\begin{eqnarray*}
			|\Delta_{2}(\theta,x,y)|&=&\bigg|\frac{1}{\sqrt{h}}\sum_{j=1}^h \phi\left(\left(x+  A_{\epsilon,\xi}(f,x,y)\right)^{\top}\theta_j\right)a_j-\frac{1}{\sqrt{h}}\sum_{j=1}^h\phi(0)'x^{\top}\theta_ja_j+\phi'(0)A_{\epsilon,\xi,OP1}(f,x,y)^{\top}\theta_ja_j\bigg|\\
			&=&\bigg|\frac{1}{\sqrt{h}}\sum_{j=1}^h\phi(0)'x^{\top}\theta_ja_j+\phi'(0)A_{\epsilon,\xi}(f,x,y)^{\top}\theta_ja_j+o-\frac{1}{\sqrt{h}}\sum_{j=1}^h\phi(0)'x^{\top}\theta_ja_j+\phi'(0)A_{\epsilon,\xi,OP1}(f,x,y)^{\top}\theta_ja_j\bigg|\\
			&=&\bigg|\frac{1}{\sqrt{h}}\sum_{j=1}^h\phi'(0)[A_{\epsilon,\xi}(f,x,y)-A_{\epsilon,\xi,OP1}(f,x,y)]^{\top}\theta_ja_j\bigg|\\
			&\leq& \frac{1}{\sqrt{h}}\sum_{j=1}^h\phi'(0)\Delta_1(\theta,x,y)|a_j|\|\theta_j\|+o\\
			&\leq&\frac{1}{\sqrt{h}}\sum_{j=1}^h\phi'(0)\Delta_1(\theta,x,y) \frac{\sqrt{h}a_j^2}{\|a\|^2}\|\theta_0\|+o\\
			&=&\phi'(0)\Delta_1(\theta,x,y)\|\theta_0\|+o.
		\end{eqnarray*}
		Thus with probability tending to 1,
		\begin{eqnarray*}
			&&\left\|\frac{2}{n}\sum_{i=1}^n\frac{\phi(0)'\|a\|^2}{h}\left[x_i+A_{\epsilon,\xi,OP1}(f,x_i,y_i)\right]\Delta_{2}(\theta,x_i,y_i)\right\|\\
			&\leq& 2\frac{\phi'(0)\|a\|^2}{h}(\max_i\|x_i\|+\epsilon)\phi'(0)\|\theta_0\|\left(\frac{1}{n}\sum_{i=1}^n|\Delta_1(f,x_i,y_i)|\right)+o\\
			&=&O\left((d\log n)\|a\|_{\infty}\frac{\|\theta_0\|^2}{\sqrt{h}}\right).
		\end{eqnarray*}
		To summarize, with probability tending to 1,
		\begin{eqnarray*}
		\left\|\frac{1}{\sqrt{h}}\sum_{i=1}^ha_jg_{j,\xi}^{OP1}(\theta,\epsilon)- \frac{1}{\sqrt{h}}\sum_{i=1}^ha_jg_{j,\xi}^{OP2}(\theta,\epsilon)\right\|
		=O\left( (d\log n)\|a\|_{\infty}\frac{\|\theta_0\|^2}{\sqrt{h}} \right)
		\end{eqnarray*}
		when $(d\log n)\|a\|_{\infty}v^2\sqrt{h}/\|a\|^2\rightarrow 0$.
		
		\paragraph{Return to the difference between (1) and (2)}
		Now we use the property that $\theta^{OP2}(0)=0$. Similar as $\theta^{OP1}$, if $a_ja_k>0$, then
		\begin{eqnarray*}
			\frac{\|\theta_j^{OP2}(t)\|}{\|\theta_k^{OP2}(t)\|}\approx\frac{|a_j|}{|a_k|}.
		\end{eqnarray*}
		Therefore, we define $\theta^{OP2}_+(t)=\sum_{a_j>0}a_j\theta^{OP2}_j(t)$, and similarly define $\theta^{OP2}_-(t)$, $\theta^{OP1}_+(t)$, $\theta^{OP1}_-(t)$. One can obtain similar result for (\ref{eqn:from}) to (\ref{eqn:to}) when considering $\theta^{OP1}_+(t)-\theta^{OP2}_+(t)$ and $\theta^{OP1}_-(t)-\theta^{OP2}_-(t)$.

		When $\eta(\delta+(d\log n)\|a\|_{\infty}v^2/\sqrt{h})(1+\|a\|^2Lv\eta/(h\xi))^T\rightarrow 0$, one can use induction to bound $\|\theta^{OP1}(T)^{\top}a/\sqrt{h}-\theta^{OP2}(T)^{\top}a/\sqrt{h}\|/\|\theta_0\|$: recall that
			\begin{eqnarray*}
			\Delta_{t+1}&:=&\left\| \frac{1}{\sqrt{h}}\theta^{OP1}(t+1)^{\top}a-\frac{1}{\sqrt{h}}\theta^{OP2}(t+1)^{\top}a \right\|\label{eqn:1}\\
			&\leq& \left\|\frac{1}{\sqrt{h}} \theta^{OP1}(t)^{\top}a-\frac{1}{\sqrt{h}}\theta^{OP2}(t)^{\top}a \right\|+ \eta\left\|\frac{1}{\sqrt{h}}\sum_{i=1}^ha_jg_{j,\xi}^{OP1}(\theta^{OP1}(t),\epsilon)- \frac{1}{\sqrt{h}}\sum_{i=1}^ha_jg_{j,\xi}^{OP1}(\theta^{OP2}(t),\epsilon) \right\|\\
			&&+\eta\left\| \frac{1}{\sqrt{h}}\sum_{i=1}^ha_jg_{j,\xi}^{OP1}(\theta^{OP2}(t),\epsilon)- \frac{1}{\sqrt{h}}\sum_{i=1}^ha_jg_{j,\xi}^{OP2}(\theta^{OP2}(t),\epsilon)  \right\|\\
			&:=&\Delta_t+\eta\Delta_{1,t}+\eta\Delta_{2,t}(\theta^{OP1}(t)).
		\end{eqnarray*}
		
	    When studying the linear network $\theta^{OP1}(t)$, we argue that $\Delta_{2,t}(\theta^{OP1}(t))\rightarrow 0$ under proper choice of $(a,h)$. If $\|\theta^{OP2}_j(t)-\theta^{OP1}_j(t)\|/\|\theta^{OP1}_j(\infty)\|\rightarrow 0$, then using $\theta^{OP2}(t)$ we can also obtain $\Delta_{2,t}(\theta^{OP2}(t))\rightarrow 0$. In addition, $\Delta_{1,t}$ is a linear function of $\Delta_t$ plus some error $\delta$. Furthermore, after obtaining $\Delta_{t+1}$, one can further figure out $\|\theta^{OP2}_j(t+1)-\theta^{OP1}_j(t+1)\|$. To conclude, using induction, we have with probability $1-e^{-g_1(\delta,\xi,d,n)}$, $\|\theta^{OP1}(T)^{\top}a/\sqrt{h}-\theta^{OP2}(T)^{\top}a/\sqrt{h}\|/\|\theta_0\|$  converges to zero.
		
		\paragraph{Difference between (2) and (3)}
		
		Denote $A_{2,i}$ and $A_{3,i}$ as the attack for $i$th data w.r.t. $\theta^{OP2}(t)$ and $\theta^{OP3}(t)$, then we have
		\begin{eqnarray*}
		&& g_{j,\xi}^{OP2}(\theta^{OP2}(t),\epsilon)-g_{j,\xi}^{OP2}(\theta^{OP3}(t),\epsilon)\\
		&=&\frac{2}{n}\sum_{i=1}^n\frac{\phi'\left(\left(x+  A_{2,i}\right)^{\top}\theta^{OP2}_j(t)\right)a_j}{\sqrt{h}}\left[x_i+A_{2,i}(t)\right]\left[\frac{1}{\sqrt{h}}\sum_{k=1}^h \phi\left(\left(x_i+  A_{2,i}(t)\right)^{\top}\theta^{OP2}_k(t)\right)a_k-y_i\right]\\
		&&-\frac{2}{n}\sum_{i=1}^n\frac{\phi'\left(\left(x+  A_{3,i}\right)^{\top}\theta^{OP3}_j(t)\right)a_j}{\sqrt{h}}\left[x_i+A_{3,i}(t)\right]\left[\frac{1}{\sqrt{h}}\sum_{k=1}^h \phi\left(\left(x_i+  A_{3,i}(t)\right)^{\top}\theta^{OP3}_k(t)\right)a_k-y_i\right]\\
		&=&\frac{2}{n}\sum_{i=1}^n\frac{\phi'\left(\left(x+  A_{2,i}\right)^{\top}\theta^{OP2}_j(t)\right)a_j}{\sqrt{h}}\left[x_i+A_{2,i}(t)\right]\left[\frac{1}{\sqrt{h}}\sum_{k=1}^h \phi\left(\left(x_i+  A_{2,i}(t)\right)^{\top}\theta^{OP2}_k(t)\right)a_k-y_i\right]\\
		&&-\frac{2}{n}\sum_{i=1}^n\frac{\phi'\left(\left(x+  A_{2,i}\right)^{\top}\theta^{OP3}_j(t)\right)a_j}{\sqrt{h}}\left[x_i+A_{2,i}(t)\right]\left[\frac{1}{\sqrt{h}}\sum_{k=1}^h \phi\left(\left(x_i+  A_{2,i}(t)\right)^{\top}\theta^{OP3}_k(t)\right)a_k-y_i\right]\\
	    &&+\frac{2}{n}\sum_{i=1}^n\frac{\phi'\left(\left(x+  A_{2,i}\right)^{\top}\theta^{OP3}_j(t)\right)a_j}{\sqrt{h}}\left[x_i+A_{2,i}(t)\right]\left[\frac{1}{\sqrt{h}}\sum_{k=1}^h \phi\left(\left(x_i+  A_{2,i}(t)\right)^{\top}\theta^{OP3}_k(t)\right)a_k-y_i\right]\\
		&&-\frac{2}{n}\sum_{i=1}^n\frac{\phi'\left(\left(x+  A_{3,i}\right)^{\top}\theta^{OP3}_j(t)\right)a_j}{\sqrt{h}}\left[x_i+A_{3,i}(t)\right]\left[\frac{1}{\sqrt{h}}\sum_{k=1}^h \phi\left(\left(x_i+  A_{3,i}(t)\right)^{\top}\theta^{OP3}_k(t)\right)a_k-y_i\right]\\
		&:=& (B_{1,j}-B_{2,j})+(B_{2,j}-B_{3,j}).
		\end{eqnarray*}
		The term $B_{1,j}-B_{2,j}$ follows the same as C.8.3 in \cite{ba2020generalization}. For $B_{2,j}-B_{3,j}$, we know that
		\begin{eqnarray*}
		\|A_{2,i}-A_{3,i}\|=O\left( v\left\|\frac{ \sum_{j=1}^h a_j(\phi'(x^{\top}\theta_j^{OP2}(t))\theta_j^{OP2}(t)-\phi'(x^{\top}\theta_j^{OP3}(t))\theta_j^{OP3}(t) )}{\sqrt{h}\xi}\right\| \right).
		\end{eqnarray*}
		Since
				\begin{eqnarray*}
			\|\theta^{OP3}(T)a-\theta^{OP2}(T)a\|\leq \|a\|_{\infty}\|\theta^{OP3}(T)-\theta^{OP2}(T)\|_{F},
		\end{eqnarray*}
		 when $f_{\theta^{OP2}(t)}(x_i)-y_i$ and $f_{\theta^{OP3}(t)}(x_i)-y_i$ have the same sign, assuming $|x_i\theta_j^{OP3}(t)|\rightarrow 0$,
		\begin{eqnarray*}
		\|A_{2,i}-A_{3,i}\|&=&O\left( v\left\|\frac{ \sum_{j=1}^h a_j(\phi'(0) (\theta_j^{OP2}(t)-\theta_j^{OP3}(t))+o}{\sqrt{h}\xi}\right\| \right)\\
		&=&O\left(\frac{v\|a\|_{\infty}\|\theta^{OP3}(t)-\theta^{OP2}(t)\|_{F}}{\sqrt{h}\xi} \right),
		\end{eqnarray*}
		and there is a proportion at most $O(\sqrt{d\log n}\|(\theta^{OP2}(t)-\theta^{OP3}(t))^{\top}a/\sqrt{h}\|/v)$ samples whose $f_{\theta^{OP2}(t)}(x_i)-y_i$ and $f_{\theta^{OP3}(t)}(x_i)-y_i$ have different signs. As a result, denote $b_i'$ as the indicator that $f_{\theta^{OP2}(t)}(x_i)-y_i$ and $f_{\theta^{OP3}(t)}(x_i)-y_i$ have the same sign, then assuming $|x_i\theta_j^{OP3}(t)|\rightarrow 0$,
		\begin{eqnarray*}
		&&\left\|g_{j,\xi}^{OP2}(\theta^{OP2}(t),\epsilon)-g_{j,\xi}^{OP2}(\theta^{OP3}(t),\epsilon)\right\|\\
		&\leq& \|B_{1,j}-B_{2,j}\|\\
		&&+\bigg\|\frac{2}{n}\sum_{i=1}^n\frac{\phi'\left(\left(x+  A_{2,i}\right)^{\top}\theta^{OP3}_j(t)\right)a_j}{\sqrt{h}}\left[x_i+A_{2,i}(t)\right]\left[\frac{1}{\sqrt{h}}\sum_{k=1}^h \phi\left(\left(x+  A_{2,i}(t)\right)^{\top}\theta^{OP3}_k(t)\right)a_k-y_i\right]\\
		&&\quad-\frac{2}{n}\sum_{i=1}^n\frac{\phi'\left(\left(x+  A_{3,i}\right)^{\top}\theta^{OP3}_j(t)\right)a_j}{\sqrt{h}}\left[x_i+A_{3,i}(t)\right]\left[\frac{1}{\sqrt{h}}\sum_{k=1}^h \phi\left(\left(x+  A_{3,i}(t)\right)^{\top}\theta^{OP3}_k(t)\right)a_k-y_i\right]\bigg\|\\
		&=&\|B_{1,j}-B_{2,j}\|\\
		&&+\bigg\|\frac{2}{n}\sum_{i=1}^n\frac{\phi'(0)a_j}{\sqrt{h}}\left[x_i+A_{2,i}(t)\right]\left[\frac{1}{\sqrt{h}}\sum_{k=1}^h \phi'(0)(x_i+A_{2,i})^{\top}\theta_k^{OP3}(t)a_k-y_i\right]\\
		&&-\frac{2}{n}\sum_{i=1}^n\frac{\phi'(0)a_j}{\sqrt{h}}\left[x_i+A_{3,i}(t)\right]\left[\frac{1}{\sqrt{h}}\sum_{k=1}^h \phi'(0)(x_i+A_{3,i})^{\top}\theta_k^{OP3}(t)a_k-y_i\right]\bigg\|+o\\
		&\leq&\|B_{1,j}-B_{2,j}\|\\
		&&+\bigg\|\frac{2}{n}\sum_{i=1}^n\frac{\phi'(0)a_j}{\sqrt{h}}\left[A_{3,i}(t)-A_{2,i}(t)\right]\left[\frac{1}{\sqrt{h}}\sum_{k=1}^h \phi'(0)(x_i+A_{2,i})^{\top}\theta_k^{OP2}(t)a_k-y_i\right]\bigg\|\\
		&&+\bigg\|\frac{2}{n}\sum_{i=1}^n\frac{\phi'(0)a_j}{\sqrt{h}}\left[A_{3,i}(t)\right]\left[\frac{1}{\sqrt{h}}\sum_{k=1}^h \phi'(0)(A_{3,i}-A_{2,i})^{\top}\theta_k^{OP2}(t)a_k-y_i\right]\bigg\|+o\\
		&=& \|B_{1,j}-B_{2,j}\|+O\left(\frac{|a_j|}{\sqrt{h}} \frac{vL\|\theta_0\|\|a\|_{\infty}\|\theta^{OP3}(t)-\theta^{OP2}(t)\|_F}{\sqrt{h}\xi} \right).
		\end{eqnarray*}

		Taking $\xi \gg \sqrt{d\log n}/(\sqrt{h}v)$, we have
		\begin{eqnarray*}
			&&\|\theta^{OP3}(t+1)-\theta^{OP2}(t+1)\|_{F}\\
			&\leq&\|\theta^{OP3}(t)-\theta^{OP2}(t)\|_{F}+\eta\sqrt{\sum_{j=1}^h \left\|g_{j,\xi}^{OP2}(\theta^{OP3}(t),\epsilon)- g_{j,\xi}^{OP2}(\theta^{OP2}(t),\epsilon)\right\|^2}\\
			&=& \|\theta^{OP3}(t)-\theta^{OP2}(t)\|_{F}O\left( 1+\frac{vL\eta\|\theta_0\|\|a\|_{\infty}\|a\|^2}{h\xi}+L v\eta \right).
		\end{eqnarray*}
		Therefore,
		\begin{eqnarray*}
			\|\theta^{OP3}(T)-\theta^{OP2}(T)\|_{F}&=&O\left( \|\theta^{OP3}(0)\|_F \left(1+\frac{vL\eta\|\theta_0\|\|a\|_{\infty}\|a\|^2}{h\xi}+L v\eta\right)^{T} \right)\\
			&=&O\left( \frac{1}{h^{\delta'/2}}\left(1+\frac{vL\eta\|\theta_0\|\|a\|_{\infty}\|a\|^2}{h\xi}+L v\eta\right)^T \right).
		\end{eqnarray*}
		As a result, we require $\|\theta^{OP3}(T)-\theta^{OP2}(T)\|_{F}\sqrt{d\log n}\rightarrow 0$ so that with probability tending to 1, for all nodes, $|x_i^{\top}\theta^{OP3}_j(T)|\rightarrow 0$.

		\paragraph{Deciding proper choice of $\xi$}
		
		In Theorem 1, since $\frac{\phi'(0)}{\sqrt{h}}\theta^{OP1}(t)a$ is a linear model,  denote $\eta_{linear}$ as the learning rate for linear model, then the corresponding $\eta$ in linear network is $\eta=\eta_{linear}h/(\|a\|^2\phi'(0)^2)$. We require $T\eta_{linear}\rightarrow\infty$, $\xi/ v^2\rightarrow 0$, $\eta_{linear}=\xi/(2Lv^2)$, and $\delta v/\xi\rightarrow 0$ in Theorem \ref{thm:opt}.

		Now we list all the assumptions we made on $(\xi,\eta,T,\delta)$ during derivation. 
		\begin{itemize}
			\item Difference between $\theta^{OP1}(t)$ and $\theta^{OP2}(t)$: $e^{-g_1(\delta,\xi,d,n)}\rightarrow 0$, $(d\log n)\|a\|_{\infty}v/\sqrt{h}\rightarrow 0$, $(d\log n)\frac{\|a\|_{\infty}}{\|a\|^2}\|\theta_0\|^2/v\rightarrow 0$ (the gradient of linear network is in $\Theta(v)$ when it is far from $\theta^*$, thus we divide $v$ here), and $\eta(\delta+(d\log n)\|a\|_{\infty}v^2/\sqrt{h})(1+\|a\|^2Lv^2\eta/(h\xi)+L)^T/v\rightarrow 0$. 
			\item Difference between $\theta^{OP2}(t)$ and $\theta^{OP3}(t)$: $\frac{\sqrt{d\log n}}{h^{\delta'/2}}\left(1+\frac{vL\eta\|\theta_0\|\|a\|_{\infty}\|a\|^2}{h\xi}+L v\eta\right)^T/v\rightarrow 0$.
		\end{itemize}
		
		So we take $\delta=v\sqrt{(d^2\log n)/n}$, and $$\xi/v^2=-\log\log n/\log\left(\sqrt{\frac{d^2\log n}{n}}\vee (d\log n)\frac{\|a\|_{\infty}}{\sqrt{h}}\right).$$
		
		Finally, we adapt the above proof for the true attack. From Taylor expansion, we have when $x^{\top}\theta_j \rightarrow 0$,
		\begin{eqnarray*}
		    \phi(z^{\top}\theta_j)\approx \phi(x^{\top}\theta_j\phi'(0)(z-x)^{\top}\theta_j+O( \|\theta_j\|^2 ),
		\end{eqnarray*}
		which implies that the fast gradient attack used in the above proof converges to the true attack. Recall that we require $\Delta_3\rightarrow 0$, thus for $\theta^{OP2}(t)$, the arguments also holds when replacing fast gradient attack by true attack. The same argument for $\theta^{OP3}(t)$ as well.
		
	\end{proof}
	\begin{proof}[Proof of Theorem \ref{thm:high_network} ]
		The proof is similar as the one for Theorem \ref{thm:low_relu} through replacing $\theta_0$ with $\theta(\y)$. Based on Lemma \ref{lem:ridge}, we know that $\|\theta(\y)\|/v=O_p(n/d)$, thus in the step ``Linear Network in (1)", to ensure $\|f_{\theta_{\xi}^{(t)}}(x_i)-\y\|$ decreases in each iteration,  we require $\sqrt{d\log n}\sqrt{n\log n}\|a\|_{\infty}v/\sqrt{h}\rightarrow 0$. On the other hand, for the step ``Difference between (2) and (3)", we take $a$ and $\delta'$ such that $\sqrt{d\log n}\|\theta^{OP3}(t)a-\theta^{OP2}(t)a\|=o(1)$. 
	\end{proof}
	\begin{proof}[Proof of Theorem \ref{thm:low_relu} and \ref{thm:high_relu}]
		Assume $X=[X_0,-X_0]^{\top}$. Since $\|a^+\|=\|a^-\|$, one can check that all $\Delta_1$, $\Delta_2$, and $\Delta_3$ are zeros. Take $\phi'(0)=1$. Denote $S^+(\theta)$ as $\{ j|\;f\left(x+  A_{\epsilon,\xi}(f,x_i,y_i)\right)>0   \}$, and similarly denote $S^-(\theta)$. Then one can observe that, using zero initialization,
		\begin{eqnarray*}
			\frac{1}{\sqrt{h}}\sum_{j\in S^+(\theta)}a_jg_{j,\xi}^{OP2}(\theta,\epsilon)+\frac{1}{\sqrt{h}}\sum_{j\in S^-(\theta)}a_jg_{j,\xi}^{OP2}(\theta,\epsilon)=0,
		\end{eqnarray*}
		and hence ReLU-activated neural networks with zero initialization perform the same as linear networks when the learning rate for ReLU-activated neural is twice as the one for linear networks. 

	\end{proof}

	\section{Proofs for high-dimensional dense model}
	\begin{proof}[Proof of Lemma \ref{lem:ridge}]
		Denote $\theta^+=\theta(\y)$.  When $v$ is a constant, $\|\theta^+\|$ satisfies
		\begin{eqnarray*}
			\|\theta^+\|^2&=&\y^{\top}(\X\X^{\top})^{-1}\y\\
			&=&\theta_0^{\top}\X^{\top}(\X\X^{\top})^{-1}\X\theta_0+\epsilon^{\top}(\X\X^{\top})^{-1}\epsilon\\
			&\rightarrow&\theta_0^{\top}\X^{\top}(\X\X^{\top})^{-1}\X\theta_0+ \sigma^2 tr((\X\X^{\top})^{-1}).
		\end{eqnarray*}
		Furthermore, similar as \cite{belkin2019two}, we have
		\begin{eqnarray*}
			(\theta_0^{\top}\Sigma^{1/2})\Sigma^{-1/2}\X^{\top}(\X\Sigma^{-1}\X^{\top})^{-1}\X \Sigma^{-1/2}(\Sigma^{1/2}\theta_0)\rightarrow \frac{n}{d}\|\theta_0\Sigma^{1/2}\|^2.
		\end{eqnarray*}
		Therefore, since $\X\Sigma^{-1}\X^{\top} - \frac{2}{\lambda_{\min}} \X\X^{\top}$ is negative definite for the smallest eigenvalue $\lambda_{\min}$ of $\Sigma$ and , we have
		\begin{eqnarray*}
			\theta_0^{\top}\X^{\top}(\X\X^{\top})^{-1}\X\theta_0&\leq& \frac{2}{\lambda_{\min}} (\theta_0^{\top}\Sigma^{1/2})\Sigma^{-1/2}\X^{\top}(\X\Sigma^{-1}\X^{\top})^{-1}\X \Sigma^{-1/2}(\Sigma^{1/2}\theta_0)\\
			&\rightarrow& \frac{n}{d}\|\theta_0\Sigma^{1/2}\|^2.
		\end{eqnarray*}
		Finally, if $v\rightarrow\infty$, we obtain that $	\theta_0^{\top}\X^{\top}(\X\X^{\top})^{-1}\X\theta_0/v^2\rightarrow 0 $.
	\end{proof}

\begin{lemma}\label{lem:norm}
	When $(\log n)\sqrt{n/d}\rightarrow 0$, with probability tending to 1, the smallest eigenvalue of $\X\X^{\top}$ is in $\Theta (d)$.
\end{lemma}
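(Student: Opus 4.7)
The plan is to reduce to a standard concentration estimate for Gaussian random matrices. Writing $\X = Z\Sigma^{1/2}$ where $Z$ is an $n \times d$ matrix with i.i.d.\ $N(0,1)$ entries, one has $\X\X^{\top} = Z\Sigma Z^{\top}$. For any unit vector $u \in \mathbb{R}^n$,
$$u^{\top} Z\Sigma Z^{\top} u = (Z^{\top} u)^{\top} \Sigma (Z^{\top} u) \geq \lambda_{\min}(\Sigma)\, u^{\top} Z Z^{\top} u,$$
so $\lambda_{\min}(\X\X^{\top}) \geq \lambda_{\min}(\Sigma)\,\lambda_{\min}(ZZ^{\top})$ and analogously $\lambda_{\max}(\X\X^{\top}) \leq \lambda_{\max}(\Sigma)\,\lambda_{\max}(ZZ^{\top})$. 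Since the eigenvalues of $\Sigma$ are assumed bounded away from $0$ and $\infty$, it suffices to show that $\lambda_{\min}(ZZ^{\top})$ and $\lambda_{\max}(ZZ^{\top})$ are in $\Theta(d)$ with probability tending to one.

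Second, I would apply the Davidson--Szarek non-asymptotic bound for singular values of Gaussian matrices: for any $t > 0$,
$$P\!\left(\sqrt{d} - \sqrt{n} - t \leq \sigma_{\min}(Z^{\top}) \leq \sigma_{\max}(Z^{\top}) \leq \sqrt{d} + \sqrt{n} + t\right) \geq 1 - 2e^{-t^2/2},$$
where $\sigma_{\min/\max}(Z^{\top})^2 = \lambda_{\min/\max}(ZZ^{\top})$ (since $Z^{\top}$ is $d \times n$ with $d \geq n$). Taking $t = \log n$, the error probability $2e^{-(\log n)^2/2}$ tends to $0$, and the hypothesis $(\log n)\sqrt{n/d} \rightarrow 0$ implies both $\sqrt{n}/\sqrt{d} \rightarrow 0$ and $(\log n)/\sqrt{d} \rightarrow 0$. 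Thus on a high-probability event,
$$\sqrt{d}\,(1-o(1)) \leq \sigma_{\min}(Z^{\top}) \leq \sigma_{\max}(Z^{\top}) \leq \sqrt{d}\,(1+o(1)),$$
so $\lambda_{\min}(ZZ^{\top}) = (1+o(1))d$ and similarly for the largest eigenvalue.

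Combining with the sandwich inequality yields $\lambda_{\min}(\X\X^{\top}) \geq \lambda_{\min}(\Sigma)(1-o(1))d$ and $\lambda_{\max}(\X\X^{\top}) \leq \lambda_{\max}(\Sigma)(1+o(1))d$, giving the claimed $\Theta(d)$ order for $\lambda_{\min}(\X\X^{\top})$. There is no serious obstacle: the only substantive thing to verify is that the dimensional rate $(\log n)\sqrt{n/d} \rightarrow 0$ is strong enough to make both $\sqrt{n}$ and the Gaussian tail parameter $t$ negligible relative to $\sqrt{d}$, which it is with considerable room to spare.
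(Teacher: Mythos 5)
Your proof is correct, but it follows a genuinely different route from the paper's. The paper sets $\Sigma=I$ ``for simplicity,'' pads the $n\times d$ design matrix with $(bd-n)$ additional i.i.d.\ rows to form a $(bd)\times d$ matrix $\Z$ with fixed aspect ratio $b\in(0,1)$, invokes the asymptotic Bai--Yin-type limits $(1\mp\sqrt{b})^2$ for the extreme eigenvalues of $\Z\Z^{\top}/d$, and then transfers these to $\X\X^{\top}$ via Cauchy interlacing (since $\X\X^{\top}$ is a principal submatrix of $\Z\Z^{\top}$). You instead apply the non-asymptotic Davidson--Szarek deviation inequality directly to the $n\times d$ standard Gaussian factor and handle general $\Sigma$ through the quadratic-form sandwich $\lambda_{\min}(\Sigma)\,u^{\top}ZZ^{\top}u \le u^{\top}Z\Sigma Z^{\top}u \le \lambda_{\max}(\Sigma)\,u^{\top}ZZ^{\top}u$, which is legitimate given the paper's standing assumption that the eigenvalues of $\Sigma$ are bounded away from $0$ and $\infty$. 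Your argument buys explicit finite-sample probability bounds ($1-2e^{-(\log n)^2/2}$), avoids the padding device and the appeal to an almost-sure limit theorem, and actually delivers the sharper conclusion $\lambda_{\min}(\X\X^{\top})=(1+o(1))\lambda d$ up to the conditioning of $\Sigma$, whereas the paper's route only needs (and only states) the $\Theta(d)$ order; the paper's approach is shorter on the page but leans on an external random matrix limit theorem and quietly restricts to $\Sigma=I$. Both correctly exploit the same hypothesis, namely that $(\log n)\sqrt{n/d}\to 0$ forces $n/d\to 0$ so that the $\sqrt{n}$ and tail terms are negligible against $\sqrt{d}$.
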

\begin{proof}[Proof of Lemma \ref{lem:norm}]
	Assume $\Sigma=I$ for simplicity. Denote $b\in(0,1)$. Since $(\log n)\sqrt{n/d}\rightarrow 0$, we append $(bd-n)$ i.i.d samples of $x$ after $\X$ and denote the new data matrix as $\Z$. Based on \cite{bai2008limit}, the smallest eigenvalue of $\Z\Z^{\top}/d$ converges to $(1-\sqrt{b})^2$, and the largest eigenvalue converges to $(1+\sqrt{b})^2$. Since $\lambda_{\min}(\X\X^{\top}/d)\geq \lambda_{\min}(\Z\Z^{\top}/d)$,  and $\lambda_{\min}(\X\X^{\top}/d)\leq \lambda_{\max}(\X\X^{\top}/d)\leq \lambda_{\max}(\Z\Z^{\top}/d)$, we conclude that $\lambda_{\min}=\Theta(d)$ in probability.
\end{proof}

	\begin{proof}[Proof of Theorem \ref{thm:high_linear}]
		Assume $v$ is constant for simplicity and we take $\xi=0$. 
		Denote $\theta^{OP1}(t)$ and $\theta^{OP2}(t)$ satisfy
		\begin{eqnarray*}
			\theta^{OP1}(0)&=&0,\\
			\theta^{OP1}(t+1)&=&\theta^{OP1}(t)-\eta \triangledown\widehat{R}_{\xi}(\theta^{OP1}(t),0),
		\end{eqnarray*}
		and
		\begin{eqnarray*}
			\theta^{OP2}(0)&=&0,\\
			\theta^{OP2}(t+1)&=&\theta^{OP2}(t)-\eta \triangledown\widehat{R}_{\xi}(\theta^{OP2}(t),\epsilon).
		\end{eqnarray*}
	
	\paragraph{Proof sketch} The proof idea is that, since \cite{ba2020generalization} has studied $\theta^{OP1}(t)$, we want know how $\theta^{OP2}$ is closed to $\theta^{OP1}$. We show that when $\|\theta\|$ satisfies some certain condition \textbf{M}, $\widehat R_{\xi}(\theta,0)$ will get decreased in the next update in adversarial training. As a result, when $\theta^{OP2}$ satisfies these condition \textbf{M}, $\widehat R_{\xi}(\theta^{OP2}(t),0)$ is always decreasing. On the hand, when $\theta^{OP2}$ satisfies condition \textbf{M}, one can also show that $\|\triangledown\widehat{R}_{\xi}(\theta^{OP2}(t),\epsilon)-\triangledown\widehat{R}_{\xi}(\theta^{OP2}(t),0)\|=o(\|\triangledown\widehat{R}_{\xi}(\theta^{OP2}(t),0)\|)$. Since zero initialization satisfies condition \textbf{M} and clean training satisfies condition \textbf{M}, one can use induction to show that adversarial training is dominated by clean training. Finally, we obtain $\widehat R_{\xi}(\theta^{OP2}(T),0)\rightarrow 0$ and $\|\theta^{OP2}(T)\|\rightarrow 0$ if $\eta$ and $T$ are chosen properly.
	
\paragraph{Condition M} $\|\theta\|=O(\sqrt{n/d})$,  $\|\z\|=O(\sqrt{n})$, and $\|\z-\y\|>\sqrt{n}/\sqrt{\log n}$.

Now we begin our proof.
For $\theta^{OP2}(t)$, the change on $\widehat R_{\xi}(\theta,0)$ is
\begin{eqnarray*}
\widehat R_{\xi}(\theta^{OP2}(t+1),0)=\widehat R_{\xi}(\theta^{OP2}(t),0)-\frac{2\eta}{n} \triangledown\widehat R_{\xi}(\theta^{OP2}(t),\epsilon)^{\top}\X^{\top}(\X\theta^{OP2}(t)-\y) + \eta^2\|\X\triangledown\widehat R_{\xi}(\theta^{OP2}(t),\epsilon) \|^2.
\end{eqnarray*}
Rewrite $\X\theta=\z$ for simplicity, then
		\begin{eqnarray*}
		\X\triangledown\widehat{R}_{\xi}(\theta,\epsilon)= \frac{2\X\X^{\top}(\z-\y)}{n}+\frac{2\epsilon}{n}\|\theta\|\X\X^{\top}\sgn(\z-\y)+\frac{2\epsilon }{n}\frac{\z}{\|\theta\|}\|\z-\y\|_1+2\epsilon^2\z.
		\end{eqnarray*}
		Since with probability tending to 1, $\|\X\X^{\top}\sgn(\z-\y)\|=O(d\sqrt{n})$, thus when $\|\X\X^{\top}(\z-\y)\|=\Omega( d\|\z-\y\|)$, with probability tending to 1, we have
		\begin{eqnarray*}
		\left\|\frac{2\epsilon }{n}\frac{\z}{\|\theta\|}\|\z-\y\|_1\right\|\leq \frac{2\epsilon }{\sqrt{n}}\frac{\left\|\X\theta\right\|}{\|\theta\|}\|\z-\y\|_2 =o\left(\left\|\frac{\X\X^{\top}(\z-\y)}{n}\right\|\right),
		\end{eqnarray*}
		and
		\begin{eqnarray*}
		\left\|\frac{2\epsilon}{n}\|\theta\|\X\X^{\top}\sgn(\z-\y)+2\epsilon^2\z\right\|=\left\|\frac{\X\X^{\top}(\z-\y)}{n}\right\|O\left( \left(\|\theta\|+\frac{\sqrt{n}\|\z\|}{d}\right) \frac{\sqrt{n}}{\|\z-\y\|} \right).
		\end{eqnarray*}
		The statement $\|\X\X^{\top}(\z-\y)\|=\Omega( d\|\z-\y\|)$ for any $\z$ holds based on Lemma \ref{lem:norm}. 
		
		Consequently, under condition \textbf{M}, 
		\begin{eqnarray*}
\widehat R_{\xi}(\theta^{OP2}(t+1),0)&=&\widehat R_{\xi}(\theta^{OP2}(t),0)-\frac{4\eta}{n^2} (\z-\y)^{\top} \X\X^{\top}(\z-\y)+o,
		\end{eqnarray*}
		which implies that the decrease in $\widehat R_{\xi}(\theta^{OP2}(t),0)$ is almost the same as $\widehat R_{\xi}(\theta^{OP1}(t),0)$.
		
		Now our aim becomes to figure out $\|\theta^{OP2}(t)\|$ when $\|\z-\y\|>\sqrt{n}/\sqrt{\log n}$. 
		
		For $\theta^{OP1}(t)$, i.e. standard training, when $\eta$ is small enough such that the largest eigenvalue of $\eta \X^{\top}\X$ is smaller than 1, then
		\begin{eqnarray*}
			&&\theta^{OP1}(t)=\X^{\top}(I-(I-\eta \X\X^{\top}/n)^{t})(\X\X^{\top})^{-1}\y,\\
			&&\X\theta^{OP1}(t)= \y-(I-\eta \X\X^{\top}/n)^{t}\y,
		\end{eqnarray*}  
		which means that $\|\X\theta^{OP1}(t)-\y \|$ monotonly decreases in $t$. Solving $\|(I-\eta\X\X^{\top}/n)^T\y\|/\sqrt{n}=1/\sqrt{\log n}$, we obtain $\eta T=O(\frac{n}{d}\log\log n)$.

				Observe that
		\begin{eqnarray*}
		&&\left\|\theta^{OP1}(t+1)-\theta^{OP2}(t+1)\right\|\\
		&=&\left\|\left(\theta^{OP1}(t)-\theta^{OP2}(t) \right)-\eta \left( \triangledown\widehat{R}_{\xi}(\theta^{OP1}(t),0) - \triangledown\widehat{R}_{\xi}(\theta^{OP2}(t),0)+ \triangledown\widehat{R}_{\xi}(\theta^{OP2}(t),0)- \triangledown\widehat{R}_{\xi}(\theta^{OP2}(t),\epsilon)\right)\right\|\\
		&\leq&\left\|\theta^{OP1}(t)-\theta^{OP2}(t) \right\|\left(1+O\left(\frac{\eta d}{n}\right)\right)+\eta\left\| \triangledown\widehat{R}_{\xi}(\theta^{OP2}(t),0)- \triangledown\widehat{R}_{\xi}(\theta^{OP2}(t),\epsilon)\right\|.
		\end{eqnarray*}
		
		 When $\theta=\theta^{OP1}(t)$ for some $t>0$ and $\|\theta\|=O(\sqrt{n/d})$. Since $\z-\y=(I-\eta\X\X^{\top}/n)^{t}\y$, $\|\X^{\top}(\z-\y)\|\rightarrow c(d,n,t) \sqrt{d}\|\z-\y\|$ for some function $c(d,n,t)$ which is finite and bounded away from zero. As a result, with probability tending to 1, for the difference between $\triangledown\widehat{R}_{\xi}(\theta,\epsilon)$ and $\triangledown\widehat{R}_{\xi}(\theta,0)$, it becomes
			\begin{eqnarray}\label{eqn:gradient}
			\left\|\triangledown\widehat{R}_{\xi}(\theta,\epsilon)-\triangledown\widehat{R}_{\xi}(\theta,0)\right\|&\leq&\left\|\frac{2\epsilon}{n}\|\theta\|\X^{\top}\sgn(\z-\y)\right\|+\left\|\frac{2\epsilon}{n}\frac{\theta}{\|\theta\|}\|\z-\y\|_1\right\|+\left\|2\epsilon^2\theta\right\|\nonumber\\
			&=&\left\|\triangledown\widehat{R}_{\xi}(\theta,0)\right\|O\left(\frac{n/\sqrt{d}}{\|\z-\y\|} +\sqrt{\frac{n}{d}}+\frac{n^{3/2}/d}{\|\z-\y\|} \right)\nonumber\\
			&=&\left\|\triangledown\widehat{R}_{\xi}(\theta,0)\right\|O\left( \frac{n/\sqrt{d}}{\|\z-\y\|} \right).
			\end{eqnarray}
		When $\|\theta-\theta^{OP1}(t)\|=o(\|\theta^{OP1}(t)\|)$, a similar result can be obtained.

		As a result, one can use induction to show that $\|\theta^{OP2}(T)\|=O(\|\theta^{OP1}(T)\|+\eta T)=o(\sqrt{n/d})$, $\|\X\theta^{OP2}(t)\|=O(\sqrt{n})$ and $\|\X\theta^{OP2}(t)-\y\|$ decreases in $t$, i.e. condition \textbf{M} holds when taking $\eta T=O(\frac{n}{d}\log\log n)$. So the final conclusion holds.
		
	\end{proof}
	
	\section{Proofs for high-dimensional sparse model}
	\begin{proof}[Proof of Theorem \ref{thm:lasso}]
		Assume $v=O(1)$ first. For simplicity we assume $\xi=0$. Denote $$\frac{1}{n}\sum_{i=1}^n l(f_{\theta}(x_i+A_{\epsilon}(f_{\theta},x_i,y_i)),y_i)+\lambda\|\theta\|_1:=\frac{1}{n}\sum_{i=1}^n l_{\epsilon}(\theta,x_i,y_i)+\lambda\|\theta\|_1.$$
		Since $\widehat{\theta}$ minimizes the empirical penalized loss function, take $\Delta=\widehat{\theta}-\theta^*$, we have
		\begin{eqnarray*}
			\frac{1}{n}\sum_{i=1}^n l_{\epsilon}(\widehat{\theta},x_i,y_i)-\frac{1}{n}\sum_{i=1}^n l_{\epsilon}(\theta^*,x_i,y_i)\leq \lambda\|\theta^*\|_1-\lambda\|\widehat\theta\|_1\leq \lambda(\|\Delta_{S}\|_1-\|\Delta_{S^c}\|_1).
		\end{eqnarray*}
		Moreover, the structure of $l$ implies that it is a convex function, thus
		\begin{eqnarray*}
			\Delta^{\top}\frac{1}{n}\sum_{i=1}^n  l_{\epsilon}'(\theta^*,x_i,y_i)\leq \lambda(\|\Delta_{S}\|_1-\|\Delta_{S^c}\|_1).
		\end{eqnarray*}
		Further,
		\begin{eqnarray*}
			\lambda(\|\Delta_{S}\|_1-\|\Delta_{S^c}\|_1)\geq -\left|\Delta^{\top}\frac{1}{n}\sum_{i=1}^n  l_{\epsilon}'(\theta^*,x_i,y_i)\right|\geq -\|\Delta\|_1 \left\|\frac{1}{n}\sum_{i=1}^n  l_{\epsilon}'(\theta^*,x_i,y_i)\right\|_{\infty}.
		\end{eqnarray*}
		Since $\theta^*$ is fixed, we can figure out that with probability tending to 1,
		\begin{eqnarray*}
			\left\|\frac{1}{n}\sum_{i=1}^n  l_{\epsilon}'(\theta^*,x_i,y_i)\right\|_{\infty}\leq c_1\sqrt{\frac{ s\log d }{n}}
		\end{eqnarray*}
		for some constant $c_1>0$. Consequently, as our choice of $M$ is large enough, with probability tending to 1,
		\begin{eqnarray*}
			\|\Delta_S\|_1\geq \frac{\lambda-c_1\sqrt{\frac{ s\log d }{n}}}{\lambda+c_1\sqrt{\frac{ s\log d }{n}}}\|\Delta_{S^c}\|_1.
		\end{eqnarray*}
		As a result, from Lemma \ref{lem:lasso}, we know that with probability tending to 1, for some constant $c_2>0$,
		\begin{eqnarray*}  \lambda\|\Delta_S\|_1\geq \lambda\|\Delta_{S}\|_1-\lambda\|\Delta_{S^c}\|_1\geq \Delta^{\top}\frac{1}{n}\sum_{i=1}^n  l_{\epsilon}'(\theta^*,x_i,y_i) -c_2\|\Delta\|_1\sqrt{\frac{s\log d}{n}}  +\epsilon^2\left(1-\frac{2}{\pi}\right)\|\Delta\|^2.
		\end{eqnarray*}
		Therefore,
		\begin{eqnarray*}
			\lambda\|\Delta_S\|_1+\left(c_1\sqrt{\frac{s\log d}{n}}+c_2\sqrt{\frac{s\log d}{n}}\right)\|\Delta\|_1&\geq& \epsilon^2\left(1-\frac{2}{\pi}\right)\|\Delta\|^2\\&\geq& \epsilon^2\left(1-\frac{2}{\pi}\right)\frac{\|\Delta_{S}\|_1^2}{s},
		\end{eqnarray*}
		hence with probability tending to 1,
		\begin{eqnarray*}
			\|\Delta\|_1=O(\lambda),\; \|\Delta\|_2=O(\lambda).
		\end{eqnarray*}
		
		Finally, using Lemma \ref{lem:lasso_2}, we have 
		\begin{eqnarray*}
			&&l_{\epsilon}(\widehat{\theta},x_i,y_i)- l_{\epsilon}(\theta^*,x_i,y_i)\\&\leq& \Delta^{\top}\frac{\partial l_{\epsilon}(\theta^*,x_i,y_i)}{\partial\theta^*}+(\Delta^{\top}x_i)^2+\epsilon^2\|\Delta\|^2+4\epsilon\|\theta^*+\Delta\||\Delta^\top x_i|\\&&+2\epsilon(\|\theta^*+\Delta\|-\|\theta^*\|)\Delta^{\top}x_i+2\epsilon\frac{\|\theta^*+\Delta\|\|\theta^*\|-(\theta^*+\Delta)^{\top}\theta^*}{\|\theta^*\|}|y_i-x_i^{\top}\theta^*|.
		\end{eqnarray*}
		Taking expectation on $(x_i,y_i)$, it becomes
		\begin{eqnarray*}
			&&\mathbb{E}\left(l_{\epsilon}(\widehat{\theta},x_i,y_i)- l_{\epsilon}(\theta^*,x_i,y_i)\right)\\
			&\leq& (1+\epsilon^2)\|\Delta\|^2+O\left(\|\theta^*+\Delta_{S}\|\|\theta^*\|-(\theta^*+\Delta)^{\top}\theta^*\right)+O\left( \|\Delta\|
			\right)=O(\|\Delta\|).
		\end{eqnarray*}
		Therefore, if $\lambda\rightarrow 0$, with probability tending to 1,
		\begin{equation*}
		{R}_0(\widehat{\theta},\epsilon)-R_0(\theta^*,\epsilon)\rightarrow 0.
		\end{equation*}
		To extend for general $v$, similar with Theorem \ref{thm:opt}, the change on $v$ does not affect the convergence property of $\widehat\theta$ after adjustment w.r.t. $v$.
	\end{proof}
	\begin{lemma}\label{lem:lasso}
		Under the conditions in Theorem \ref{thm:lasso}, when $\|\Delta_{S^c}\|_1\leq c\|\Delta_{S}\|_1$, with probability tending to 1, for some $c_3>0$,
		\begin{eqnarray*}
			&&\frac{1}{n}\sum_{i=1}^nl_{\epsilon}(\widehat{\theta},x_i,y_i)-l(\theta^*,x_i,y_i)\\
			&\geq&  \frac{1}{n}\sum_{i=1}^n\Delta^{\top}\frac{\partial l_{\epsilon}}{\partial \theta} +(\Delta^{\top}x_i)^2-\|\Delta\|^2+\epsilon^2(1-2/\pi)\|\Delta\|^2-c_3\|\Delta\|_1\sqrt{\frac{s\log d}{n}},
		\end{eqnarray*}
		and
		\begin{eqnarray*}
			\frac{1}{\|\Delta_S\|_1^2}\left|\frac{1}{n}\sum_{i=1}^n (\Delta^{\top}x_i)^2-\|\Delta\|^2\right|\rightarrow 0.
		\end{eqnarray*}
	\end{lemma}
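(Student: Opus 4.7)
The starting point is a per-sample algebraic identity. Writing $u_i = (\theta^*)^\top x_i - y_i$ and $v_i = \Delta^\top x_i$ with $\Delta = \widehat\theta - \theta^*$, the closed form $l_\epsilon(\theta,x,y) = (\theta^\top x - y)^2 + 2\epsilon\|\theta\||\theta^\top x - y| + \epsilon^2\|\theta\|^2$ under the $\mathcal{L}_2$ attack yields
\begin{equation*}
l_\epsilon(\widehat\theta, x_i, y_i) - l_\epsilon(\theta^*, x_i, y_i) - \Delta^\top \nabla l_\epsilon(\theta^*, x_i, y_i) = v_i^2 + \epsilon^2 \|\Delta\|^2 + 2\epsilon R_i,
\end{equation*}
where $R_i := \|\widehat\theta\||u_i+v_i| - \|\theta^*\||u_i| - \frac{(\theta^*)^\top\Delta}{\|\theta^*\|}|u_i| - \|\theta^*\|v_i\,\sgn(u_i)$ is the Taylor remainder at $\theta^*$ of the cross term $\theta\mapsto \|\theta\||\theta^\top x_i - y_i|$. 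Averaging and cancelling $\Delta^\top\widehat\Sigma\Delta = n^{-1}\sum v_i^2$, the target inequality reduces to showing $\tfrac{2\epsilon}{n}\sum_i R_i \geq -\|\Delta\|^2 - \epsilon^2(2/\pi)\|\Delta\|^2 - c_3\|\Delta\|_1\sqrt{s\log d/n}$ with probability tending to one.

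For the population analogue, since $(u_i, v_i)$ are centered jointly Gaussian with $\mathbb{E}|u_i| = c_0 g(\theta^*)$, $\mathbb{E}|u_i+v_i| = c_0 g(\widehat\theta)$ where $g(\theta)=\sqrt{\|\theta-\theta_0\|^2+\sigma^2}$ and $c_0=\sqrt{2/\pi}$, and $\mathbb{E}[v_i\,\sgn(u_i)] = c_0\Delta^\top(\theta^*-\theta_0)/g(\theta^*)$, one recognizes $\mathbb{E} R_i = c_0[F(\widehat\theta) - F(\theta^*) - \Delta^\top\nabla F(\theta^*)]$ with $F(\theta) := \|\theta\|g(\theta)$. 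Using the identity $\theta^* = \theta_0/(1+\kappa)$ (with $\Sigma=I$) from Proposition \ref{prop:sigma} to evaluate $\nabla F(\theta^*)$ and a second-order Taylor expansion of $F$ around $\theta^*$, I plan to verify the coarse bound $\mathbb{E} R_i \geq -c_0(\|\theta_0\|/v)\|\Delta\|^2 \geq -c_0\|\Delta\|^2$ since $\|\theta_0\|\leq v$. Young's inequality $2\epsilon c_0 \leq 1 + \epsilon^2 c_0^2$ then converts this into exactly $2\epsilon\,\mathbb{E} R_i \geq -\|\Delta\|^2 - \epsilon^2(2/\pi)\|\Delta\|^2$, matching the required constants.

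To move from expectation to the empirical average uniformly over the cone $\mathcal{C}=\{\Delta:\|\Delta_{S^c}\|_1\leq c\|\Delta_S\|_1\}$, observe that $R_i$ is Lipschitz in $\Delta$ with gradient bounded entrywise by a linear function of $|x_{ij}|$, $|u_i|$ and $\sgn(u_i)$. Thus $\tfrac{1}{n}\sum_i R_i - \mathbb{E}R_i$ can be controlled by $\ell_\infty/\ell_1$ duality against the three empirical processes $\|\widehat\Sigma - I\|_\infty$, $\|n^{-1}\sum x_i\,\sgn(u_i) - \mathbb{E}[x\sgn(u)]\|_\infty$ and $\|n^{-1}\sum x_i|u_i| - \mathbb{E}[x|u|]\|_\infty$, each of order $\sqrt{\log d/n}$ by standard sub-Gaussian maximal inequalities. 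Combining with the cone inequality $\|\Delta\|_1 \leq (1+c)\sqrt{s}\|\Delta_S\|_2 \leq (1+c)\sqrt{s}\|\Delta\|_2$ yields the slack $c_3\|\Delta\|_1\sqrt{s\log d/n}$ stated in the lemma. The second claim is immediate: $|\tfrac{1}{n}\sum(\Delta^\top x_i)^2 - \|\Delta\|^2| \leq \|\Delta\|_1^2 \|\widehat\Sigma - I\|_\infty = O_p(\|\Delta\|_1^2\sqrt{\log d/n})$, and $\|\Delta\|_1\leq(1+c)\|\Delta_S\|_1$ on the cone shows $|\cdot|/\|\Delta_S\|_1^2 = O_p(\sqrt{\log d/n})\to 0$ under $s\log d/n\to 0$.

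The main obstacle is the population Hessian control for $F=\|\cdot\|g(\cdot)$, since $F$ is a product of two convex functions and is in general nonconvex. Reducing to a scalar analysis along $\Delta$ via the parameters $\alpha:=(\theta^*)^\top\Delta/(\|\theta^*\|\|\Delta\|)$ and $r:=\|\theta^*\|/g(\theta^*)$ and using $\theta^*-\theta_0 = -\kappa\theta^*$, a direct computation shows the worst case is $\alpha=1$, where $\Delta^\top\mathrm{Hess}(F)(\theta^*)\Delta$ reduces to $\|\Delta\|^2\cdot r(1-\kappa^2 r^2 - 2\kappa)$; the asymptotic identity $\kappa r \to \|\theta_0\|/v$ as $\epsilon\to\infty$ then caps the negative contribution at $-2(\|\theta_0\|/v)\|\Delta\|^2$. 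It is precisely this bound together with $\|\theta_0\|/v\leq 1$ and the Young step $2\epsilon c_0\leq 1+\epsilon^2 c_0^2$ that produces the sharp coefficient $\epsilon^2(1-2/\pi)$; any cruder Hessian estimate or a looser application of Young would degrade this constant and break the RSC-type conclusion required in the proof of Theorem \ref{thm:lasso}.
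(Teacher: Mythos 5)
Your algebraic setup is sound: the exact second-order identity for $l_\epsilon$, the computation $\mathbb{E}R_i=c_0\bigl[F(\widehat\theta)-F(\theta^*)-\Delta^{\top}\nabla F(\theta^*)\bigr]$ with $F(\theta)=\|\theta\|g(\theta)$, the Young step $2\epsilon c_0\le 1+\epsilon^2c_0^2$ recovering the coefficient $\epsilon^2(1-2/\pi)$, and your treatment of the second claim (which coincides with the paper's) are all correct. The gap is in how you lower-bound the Bregman remainder of $F$. You propose a second-order Taylor expansion with the Hessian evaluated only at $\theta^*$, and you calibrate the resulting constant using the limit $\kappa r\to\|\theta_0\|/v$ as $\epsilon\to\infty$. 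Neither piece is admissible: $F$ is nonconvex, so a Hessian bound at the single point $\theta^*$ says nothing about $F(\theta^*+\Delta)-F(\theta^*)-\Delta^{\top}\nabla F(\theta^*)$ for non-infinitesimal $\Delta$ (the third-order remainder is uncontrolled, and you cannot assume $\|\Delta\|\to0$ here without circularity, since this lemma is precisely what delivers $\|\Delta\|=O(\lambda)$ in Theorem \ref{thm:lasso}); and Theorem \ref{thm:lasso} fixes $\epsilon<\sqrt{\pi}\|\theta_0\|/(\sqrt{2}v)$, so an $\epsilon\to\infty$ identity cannot be used to pin the constant. The bound you need, $F(\theta^*+\Delta)-F(\theta^*)-\Delta^{\top}\nabla F(\theta^*)\ge-\|\Delta\|^2$, is in fact true globally, but the proof is by convexity rather than curvature: write $\|\theta_1\|g(\theta_1)-\|\theta_2\|g(\theta_2)=\|\theta_1\|(g(\theta_1)-g(\theta_2))+(\|\theta_1\|-\|\theta_2\|)g(\theta_2)$, drop two nonnegative terms using convexity of $g$ and of $\|\cdot\|$, and what remains is $(\|\theta_1\|-\|\theta_2\|)\Delta^{\top}\nabla g(\theta_2)\ge-\|\Delta\|^2$. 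This is exactly the paper's per-sample decomposition (via $|z|\ge z$ and $\|\theta_1\|\|\theta_2\|\ge\theta_1^{\top}\theta_2$), which reduces the entire stochastic remainder to $(\|\theta^*+\Delta\|-\|\theta^*\|)\,\Delta^{\top}x_i\sgn(x_i^{\top}\theta^*-y_i)$ and then completes the square against the $\|\Delta\|^2$ term.

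The concentration step also does not go through as you state it. The fluctuation of $\frac1n\sum_i\|\widehat\theta\||u_i+v_i|$ is not controlled by the three fixed $\ell_\infty$ processes you list, because the gradient of $|u_i+v_i|$ in $\Delta$ is $x_i\sgn(u_i+v_i)$, whose sign pattern itself depends on $\Delta$; a uniform bound over the cone would require chaining or peeling over these sign patterns. The paper's pointwise inequality $|u_i+v_i|\ge(u_i+v_i)\sgn(u_i)$ removes this dependence before concentrating, leaving only the fixed linear process $n^{-1}\sum_i x_i\sgn(u_i)$, which is split into its $S$ and $S^c$ coordinates and bounded in sup-norm to yield the $c_3\|\Delta\|_1\sqrt{s\log d/n}$ slack. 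If you replace the Hessian analysis by the convexity decomposition and linearize the stochastic term in $\Delta$ before applying your maximal inequalities, your argument closes and matches the paper's.
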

	\begin{proof}
		Assume $x^{\top}\theta_2-y>0$, then for any $\theta_1$,
		\begin{eqnarray*}
			&&(x^{\top}\theta_1-y)^2+2\epsilon\|\theta_1\||x^{\top}\theta_1-y|+\epsilon^2\| \theta_1 \|^2-(x^{\top}\theta_2-y)^2-2\epsilon\|\theta_2\|(x^{\top}\theta_2-y)-\epsilon^2\| \theta_2 \|^2\\
			&\geq&(x^{\top}\theta_1-y)^2+2\epsilon\|\theta_1\|(x^{\top}\theta_1-y)+\epsilon^2\| \theta_1 \|^2-(x^{\top}\theta_2-y)^2-2\epsilon\|\theta_2\|(x^{\top}\theta_2-y)-\epsilon^2\| \theta_2 \|^2\\
			&=&2(\theta_1-\theta_2)^{\top}x(x^{\top}\theta_2-y)+(\theta_1-\theta_2)^{\top}xx^{\top}(\theta_1-\theta_2)+2\epsilon^2(\theta_1-\theta_2)^{\top}\theta_2+\epsilon^2\|\theta_1-\theta_2\|^2\\
			&&+2\epsilon\|\theta_1\|(x^{\top}\theta_1-y)-2\epsilon\|\theta_2\|(x^{\top}\theta_2-y)-2\epsilon(\theta_1-\theta_2)\left[ \frac{\theta_2}{\|\theta_2\|}(x^{\top}\theta_2-y)+\|\theta_2\|x \right]\\
			&&+2\epsilon(\theta_1-\theta_2)\left[ \frac{\theta_2}{\|\theta_2\|}(x^{\top}\theta_2-y)+\|\theta_2\|x \right],
		\end{eqnarray*}
		where
		\begin{eqnarray*}
			&&\|\theta_1\|(x^{\top}\theta_1-y)-\|\theta_2\|(x^{\top}\theta_2-y)-(\theta_1-\theta_2)\left[ \frac{\theta_2}{\|\theta_2\|}(x^{\top}\theta_2-y)+\|\theta_2\|x \right]\\
			&=&\|\theta_1\|(x^{\top}\theta_2-y)+\|\theta_1\|(\theta_1-\theta_2)^{\top}x-\theta_1\left[ \frac{\theta_2}{\|\theta_2\|}(x^{\top}\theta_2-y)+\|\theta_2\|x \right]+\theta_2^{\top}x\|\theta_2\|\\
			&=&\frac{\|\theta_1\|\|\theta_2\|-\theta_1^{\top}\theta_2}{\|\theta_2\|}(x^{\top}\theta_2-y)+(\|\theta_1\|-\|\theta_2\|)(\theta_1-\theta_2)^{\top}x\\
			&\geq&(\|\theta_1\|-\|\theta_2\|)(\theta_1-\theta_2)^{\top}x\sgn(x^{\top}\theta_2-y)
		\end{eqnarray*}
		Thus if $x_i^{\top}\theta^*-y_i>0$,
		\begin{eqnarray}\label{eqn:lower}
		l_{\epsilon}(\widehat{\theta},x_i,y_i)-l_{\epsilon}(\theta^*,x_i,y_i)\geq \Delta^{\top}\frac{\partial l_{\epsilon}(\theta^*,x_i,y_i)}{\partial \theta^*}+(\Delta^{\top}x_i)^2+\epsilon^2\|\Delta\|^2+2\epsilon(\|\theta^*+\Delta\|-\|\theta^*\|)\Delta^{\top}x_i
		\end{eqnarray}
		When $x^{\top}\theta_2-y<0$,
		\begin{eqnarray*}
			&&\|\theta_1\|(y-x^{\top}\theta_1)-\|\theta_2\|(y-x^{\top}\theta_2)-(\theta_1-\theta_2)\left[ \frac{\theta_2}{\|\theta_2\|}(y-x^{\top}\theta_2)+\|\theta_2\|x \right]\\
			&=&\|\theta_1\|(y-x^{\top}\theta_2)-\|\theta_1\|(\theta_1-\theta_2)^{\top}x-\theta_1\left[ \frac{\theta_2}{\|\theta_2\|}(y-x^{\top}\theta_2)+\|\theta_2\|x \right]+\theta_2^{\top}x\|\theta_2\|\\
			&=&\frac{\|\theta_1\|\|\theta_2\|-\theta_1^{\top}\theta_2}{\|\theta_2\|}(y-x^{\top}\theta_2)-(\|\theta_1\|-\|\theta_2\|)(\theta_1-\theta_2)^{\top}x\\
			&\geq&-(\|\theta_1\|-\|\theta_2\|)(\theta_1-\theta_2)^{\top}x\\
			&=&(\|\theta_1\|-\|\theta_2\|)(\theta_1-\theta_2)^{\top}x\sgn(x^{\top}\theta_2-y).
		\end{eqnarray*}
		As a result,
		\begin{eqnarray}
		&&\frac{1}{n}\sum_{i=1}^nl_{\epsilon}(\widehat{\theta},x_i,y_i)-l(\theta^*,x_i,y_i)\nonumber\\
		&\geq&\frac{1}{n}\sum_{i=1}^n \Delta^{\top}\frac{\partial l_{\epsilon}}{\partial \theta}+(\Delta^{\top}x_i)^2+\epsilon^2\|\Delta\|^2+2\epsilon(\|\theta^*+\Delta\|-\|\theta^*\|)\Delta^{\top}x_i\sgn(x_i^{\top}\theta^*-y_i)\nonumber\\
		&=&\frac{1}{n}\sum_{i=1}^n \Delta^{\top}\frac{\partial l_{\epsilon}}{\partial \theta}+(\Delta^{\top}x_i)^2+\epsilon^2\|\Delta\|^2+2\epsilon(\|\theta^*+\Delta\|-\|\theta^*\|)(\Delta^{\top}_Sx_{i,S}+\Delta^{\top}_{S^c}x_{i,S^c})\sgn(x_i^{\top}\theta^*-y_i)\nonumber\\
		&\geq& \frac{1}{n}\sum_{i=1}^n\Delta^{\top}\frac{\partial l_{\epsilon}}{\partial \theta}+(\Delta^{\top}x_i)^2+\epsilon^2\|\Delta\|^2\nonumber\\&&-2\epsilon\left|\|\theta^*+\Delta\|-\|\theta^*\|\right|\left(\|\Delta_{S^c}\|_1\left\|\frac{1}{n}\sum_{i=1}^nx_{i,S^c}\sgn(x_i^{\top}\theta^*-y_i)\right\|_{\infty}+O_p\left(\|\Delta_S\|_1 \sqrt{\frac{s\log s}{n}} \right) \right)\nonumber\\
		&&+2\epsilon(\|\theta^*+\Delta\|-\|\theta^*\|)\mathbb{E}\Delta^{\top}_Sx_{i,S}\sgn(x_i^{\top}\theta^*-y_i) \nonumber\\
		&\geq& \frac{1}{n}\sum_{i=1}^n\Delta^{\top}\frac{\partial l_{\epsilon}}{\partial \theta}+(\Delta^{\top}x_i)^2+\epsilon^2\|\Delta\|^2\nonumber\\&&-2\epsilon\left|\|\theta^*+\Delta\|-\|\theta^*\|\right|\left(\|\Delta_{S^c}\|_1\left\|\frac{1}{n}\sum_{i=1}^nx_{i,S^c}\sgn(x_i^{\top}\theta^*-y_i)\right\|_{\infty}+O_p\left(\|\Delta_S\|_1 \sqrt{\frac{s\log s}{n}} \right) \right)\label{eqn:merge1}\\
		&&-2\epsilon\|\Delta\|\sqrt{\frac{2}{\pi} }\left|\Delta^{\top}\frac{\partial  }{\partial \theta}\sqrt{ \|\theta-\theta_0\|^2 +\sigma^2} \right|\nonumber\\
		&\geq&  \frac{1}{n}\sum_{i=1}^n\Delta^{\top}\frac{\partial l_{\epsilon}}{\partial \theta} +(\Delta^{\top}x_i)^2-\|\Delta\|^2+\epsilon^2(1-2/\pi)\|\Delta\|^2-O_p\left(\|\Delta\|_1\sqrt{\frac{s\log d}{n}}\right).\label{eqn:merge2}
		\end{eqnarray}
		
		From (\ref{eqn:merge1}) to (\ref{eqn:merge2}),
		\begin{eqnarray*}
			&&\|\Delta\|^2-2\epsilon\|\Delta\|\sqrt{\frac{2}{\pi}}\left|\Delta^{\top}\frac{\partial }{\partial \theta}\sqrt{\|\theta-\theta_0\|^2+\sigma^2}\right|\\&\geq&\left(\|\Delta\|-\epsilon\sqrt{\frac{2}{\pi}}\left|\Delta^{\top}\frac{\partial }{\partial \theta}\sqrt{\|\theta-\theta_0\|^2+\sigma^2}\right|  \right)^2-\epsilon^2\frac{2}{\pi}\left|\Delta^{\top}\frac{\partial }{\partial \theta}\sqrt{\|\theta-\theta_0\|^2+\sigma^2}\right|^2\\
			&\geq& -\epsilon^2\frac{2}{\pi}\|\Delta\|^2.
		\end{eqnarray*}
		
		When $\|\Delta_{S^c}\|_1\leq c\|\Delta_{S}\|_1$, since $\varepsilon_{d,n}:=\max_{i,j}|\widehat{\Sigma}_{i,j}-I_{i,j}|\rightarrow 0$ in probability, we have with probability tending to 1,
		\begin{eqnarray*}
			\frac{1}{\|\Delta_S\|_1^2}\left|\frac{1}{n}\sum_{i=1}^n (\Delta^{\top}x_i)^2-\|\Delta\|^2\right|=\frac{1}{\|\Delta_S\|_1^2}\left|\Delta^{\top}(I-\widehat{\Sigma})\Delta\right|\leq\frac{1}{\|\Delta_S\|_1^2} \|\Delta\|_1^2\varepsilon_{d,n}\rightarrow 0.
		\end{eqnarray*}
	\end{proof}
	\begin{lemma}\label{lem:lasso_2}
	    \begin{eqnarray*}
			&&l_{\epsilon}(\widehat{\theta},x_i,y_i)- l_{\epsilon}(\theta^*,x_i,y_i)\\&\leq& \Delta^{\top}\frac{\partial l_{\epsilon}(\theta^*,x_i,y_i)}{\partial\theta^*}+(\Delta^{\top}x_i)^2+\epsilon^2\|\Delta\|^2+4\epsilon\|\theta^*+\Delta\||\Delta^\top x_i|\\&&+2\epsilon(\|\theta^*+\Delta\|-\|\theta^*\|)\Delta^{\top}x_i+2\epsilon\frac{\|\theta^*+\Delta\|\|\theta^*\|-(\theta^*+\Delta)^{\top}\theta^*}{\|\theta^*\|}|y_i-x_i^{\top}\theta^*|.
		\end{eqnarray*}
	\end{lemma}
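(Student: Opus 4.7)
\textbf{Proof plan for Lemma \ref{lem:lasso_2}.} The plan is to expand the difference $l_\epsilon(\widehat\theta,x_i,y_i)-l_\epsilon(\theta^*,x_i,y_i)$ summand-by-summand using the closed form
\[
l_\epsilon(\theta,x,y)=(x^\top\theta-y)^2+2\epsilon\|\theta\||x^\top\theta-y|+\epsilon^2\|\theta\|^2,
\]
and to compare it term-by-term with $\Delta^\top\partial l_\epsilon(\theta^*,x_i,y_i)/\partial\theta^*$, which I would compute first using standard calculus (giving $2x_i(x_i^\top\theta^*-y_i)+2\epsilon[\tfrac{\theta^*}{\|\theta^*\|}|x_i^\top\theta^*-y_i|+\|\theta^*\|x_i\sgn(x_i^\top\theta^*-y_i)]+2\epsilon^2\theta^*$). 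The quadratic piece produces $(x_i^\top\Delta)^2+2(x_i^\top\Delta)(x_i^\top\theta^*-y_i)$, whose linear-in-$\Delta$ part cancels exactly with the first component of the gradient term, leaving $(x_i^\top\Delta)^2$. Similarly, the $\epsilon^2\|\theta\|^2$ piece contributes $\epsilon^2\|\Delta\|^2+2\epsilon^2(\theta^*)^\top\Delta$, whose linear part cancels with the corresponding $2\epsilon^2\theta^*$ component, leaving $\epsilon^2\|\Delta\|^2$. These steps are purely algebraic.

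The main obstacle lies in the cross term $2\epsilon[\|\widehat\theta\||x_i^\top\widehat\theta-y_i|-\|\theta^*\||x_i^\top\theta^*-y_i|]$, since neither the product $\|\theta\||x^\top\theta-y|$ nor $|\cdot|$ itself is smooth. I would split it as
\[
\|\widehat\theta\||x_i^\top\widehat\theta-y_i|-\|\theta^*\||x_i^\top\theta^*-y_i|=(\|\widehat\theta\|-\|\theta^*\|)|x_i^\top\theta^*-y_i|+\|\widehat\theta\|\bigl(|x_i^\top\widehat\theta-y_i|-|x_i^\top\theta^*-y_i|\bigr),
\]
and then establish the following key auxiliary inequality by a four-case analysis on the signs of $a:=x_i^\top\theta^*-y_i$ and $a+b$, where $b:=x_i^\top\Delta$:
\[
|x_i^\top\widehat\theta-y_i|-|x_i^\top\theta^*-y_i|\leq (x_i^\top\Delta)\sgn(x_i^\top\theta^*-y_i)+2|x_i^\top\Delta|.
\]
The two no-flip cases (where $a$ and $a+b$ have the same sign) give equality with slack $0$, while the two sign-flip cases produce the $2|x_i^\top\Delta|$ slack. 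This is the only genuinely non-routine step.

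Finally, I would substitute the bound back in, subtract the two remaining components of the gradient $2\epsilon\bigl[\tfrac{(\theta^*)^\top\Delta}{\|\theta^*\|}|x_i^\top\theta^*-y_i|+\|\theta^*\|(x_i^\top\Delta)\sgn(x_i^\top\theta^*-y_i)\bigr]$, and regroup terms. The two $|x_i^\top\theta^*-y_i|$-weighted terms merge through the identity $(\|\widehat\theta\|-\|\theta^*\|)\|\theta^*\|-(\theta^*)^\top\Delta=\|\widehat\theta\|\|\theta^*\|-\widehat\theta^\top\theta^*$, producing the third nontrivial summand of the bound. The $(x_i^\top\Delta)\sgn(x_i^\top\theta^*-y_i)$-weighted terms combine into a coefficient $(\|\widehat\theta\|-\|\theta^*\|)$, matching (after bounding the sign factor by $1$) the $2\epsilon(\|\widehat\theta\|-\|\theta^*\|)\Delta^\top x_i$ term in the claim, while the leftover $2\|\widehat\theta\||x_i^\top\Delta|$ contributes the $4\epsilon\|\widehat\theta\||\Delta^\top x_i|$ summand after the factor of $2\epsilon$. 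Putting these pieces together reassembles the right-hand side of the lemma. The only conceptual difficulty is the case analysis; everything else is bookkeeping.
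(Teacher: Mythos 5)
Your proposal follows essentially the same route as the paper's proof: expand $l_\epsilon(\widehat\theta,x_i,y_i)-l_\epsilon(\theta^*,x_i,y_i)$ termwise, cancel the linear-in-$\Delta$ parts of the quadratic and $\epsilon^2\|\theta\|^2$ pieces against the gradient to leave $(\Delta^\top x_i)^2+\epsilon^2\|\Delta\|^2$, and isolate the non-smooth cross term. The only packaging difference is in that last step: the paper (working under $x_i^\top\theta^*-y_i>0$) replaces $\|\widehat\theta\|\,|x_i^\top\widehat\theta-y_i|$ by $\|\widehat\theta\|(x_i^\top\widehat\theta-y_i)$ plus a correction, evaluates the smooth part exactly via the identity that yields $\frac{\|\widehat\theta\|\|\theta^*\|-\widehat\theta^\top\theta^*}{\|\theta^*\|}(x_i^\top\theta^*-y_i)+(\|\widehat\theta\|-\|\theta^*\|)\Delta^\top x_i$, and bounds the correction by $2\|\widehat\theta\||\Delta^\top x_i|$ using $|a+b|\le a+|b|$ for $a>0$; your split $(\|\widehat\theta\|-\|\theta^*\|)|a|+\|\widehat\theta\|(|a+b|-|a|)$ together with $|a+b|-|a|\le b\sgn(a)+2|b|$ carries exactly the same content, so the two are interchangeable.

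One caveat, which applies equally to the paper: the exact computation produces the term $(\|\widehat\theta\|-\|\theta^*\|)(\Delta^\top x_i)\sgn(x_i^\top\theta^*-y_i)$, and your step of "bounding the sign factor by $1$" to reach the stated $(\|\widehat\theta\|-\|\theta^*\|)\Delta^\top x_i$ is not legitimate when $\sgn(x_i^\top\theta^*-y_i)=-1$ and $(\|\widehat\theta\|-\|\theta^*\|)\Delta^\top x_i<0$: the discrepancy $2(\|\theta^*\|-\|\widehat\theta\|)\Delta^\top x_i$ cannot in general be absorbed into $4\epsilon\|\widehat\theta\||\Delta^\top x_i|$ (take $\|\widehat\theta\|$ much smaller than $\|\theta^*\|$; one can build explicit numerical counterexamples to the inequality as literally stated). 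The paper proves only the case $x_i^\top\theta^*-y_i>0$ and asserts the other case "has the same result," which hides exactly this sign flip; the clean statement should retain the factor $\sgn(x_i^\top\theta^*-y_i)$, as the companion Lemma \ref{lem:lasso} does, and the issue is harmless where the lemma is used since only the magnitude $|\|\widehat\theta\|-\|\theta^*\||\cdot|\Delta^\top x_i|=O(\|\Delta\|\,|\Delta^\top x_i|)$ enters the final bound. So your plan is sound and matches the paper up to this shared imprecision, which you should repair by keeping the $\sgn$ factor rather than dropping it.
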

	\begin{proof}
	Assume $x^{\top}\theta_2-y > 0$,
	    \begin{eqnarray*}
			&&(x^{\top}\theta_1-y)^2+2\epsilon\|\theta_1\||x^{\top}\theta_1-y|+\epsilon^2\| \theta_1 \|^2-(x^{\top}\theta_2-y)^2-2\epsilon\|\theta_2\|(x^{\top}\theta_2-y)-\epsilon^2\| \theta_2 \|^2\\
			&=&(x^{\top}\theta_1-y)^2+2\epsilon\|\theta_1\|(x^{\top}\theta_1-y)+\epsilon^2\| \theta_1 \|^2-(x^{\top}\theta_2-y)^2-2\epsilon\|\theta_2\|(x^{\top}\theta_2-y)-\epsilon^2\| \theta_2 \|^2\\
			&&+2\epsilon\|\theta_1\||x^{\top}\theta_1-y|-2\epsilon\|\theta_1\|(x^{\top}\theta_1-y)\\
			&=&2(\theta_1-\theta_2)^{\top}x(x^{\top}\theta_2-y)+(\theta_1-\theta_2)^{\top}xx^{\top}(\theta_1-\theta_2)+2\epsilon^2(\theta_1-\theta_2)^{\top}\theta_2+\epsilon^2\|\theta_1-\theta_2\|^2\\
			&&+2\epsilon\|\theta_1\|(x^{\top}\theta_1-y)-2\epsilon\|\theta_2\|(x^{\top}\theta_2-y)-2\epsilon(\theta_1-\theta_2)\left[ \frac{\theta_2}{\|\theta_2\|}(x^{\top}\theta_2-y)+\|\theta_2\|x \right]\\
			&&+2\epsilon(\theta_1-\theta_2)\left[ \frac{\theta_2}{\|\theta_2\|}(x^{\top}\theta_2-y)+\|\theta_2\|x \right]+2\epsilon\|\theta_1\||x^{\top}\theta_1-y|-2\epsilon\|\theta_1\|(x^{\top}\theta_1-y)\\
			&=& \Delta^{\top}\frac{\partial l_{\epsilon}}{\partial \theta_2} + (\Delta^{\top}x)^2+\epsilon^2\|\Delta\|^2\\
			&&+2\epsilon\|\theta_1\|(x^{\top}\theta_1-y)-2\epsilon\|\theta_2\|(x^{\top}\theta_2-y)-2\epsilon(\theta_1-\theta_2)\left[ \frac{\theta_2}{\|\theta_2\|}(x^{\top}\theta_2-y)+\|\theta_2\|x \right]\\
			&&+2\epsilon\|\theta_1\||x^{\top}\theta_1-y|-2\epsilon\|\theta_1\|(x^{\top}\theta_1-y).
		\end{eqnarray*}
		From Lemma \ref{lem:lasso}, we know that
		\begin{eqnarray*}
			&&\|\theta_1\|(x^{\top}\theta_1-y)-\|\theta_2\|(x^{\top}\theta_2-y)-(\theta_1-\theta_2)\left[ \frac{\theta_2}{\|\theta_2\|}(x^{\top}\theta_2-y)+\|\theta_2\|x \right]\\
			&=&\frac{\|\theta_1\|\|\theta_2\|-\theta_1^{\top}\theta_2}{\|\theta_2\|}(x^{\top}\theta_2-y)+(\|\theta_1\|-\|\theta_2\|)(\theta_1-\theta_2)^{\top}x.
		\end{eqnarray*}
		For $2\epsilon\|\theta_1\||x^{\top}\theta_1-y|-2\epsilon\|\theta_1\|(x^{\top}\theta_1-y)$, since $x^{\top}\theta_2-y>0$,
		it becomes
		\begin{eqnarray*}
		&&2\epsilon\|\theta_1\||x^{\top}\theta_1-y|-2\epsilon\|\theta_1\|(x^{\top}\theta_1-y)\\
		&=&2\epsilon\|\theta_1\||x^{\top}(\theta_2+\Delta)-y|-2\epsilon\|\theta_1\|(x^{\top}\theta_1-y)\\
		&=& 2\epsilon\|\theta_1\|x^{\top}(\theta_2+\Delta)-y|-2\epsilon\|\theta_1\|(x^{\top}(\theta_2+\Delta)-y)\\
		&\leq& 2\epsilon\|\theta_1\|(x^{\top}\theta_2-y+|\Delta^{\top}x|)-2\epsilon\|\theta_1\|(x^{\top}(\theta_2+\Delta)-y)\\
		&=&2\epsilon\|\theta_1\||\Delta^{\top}x|-2\epsilon\|\theta_1\|x^{\top}\Delta\\
		&=& 4\epsilon\|\theta_1\||\Delta^{\top}x|.
		\end{eqnarray*}
		The case of $x^{\top}\theta_2-y<0$ has the same result.
	\end{proof}
\end{document}